\title[Information-Theoretic Neural Scaling Laws]{An Information-Theoretic Analysis of \\ Compute-Optimal Neural Scaling Laws}
\def\KL{\mathbf{d}_{\mathrm{KL}}}
\def\normal{\mathcal{N}}
\def\sign{{\rm sign}}
\def\proxytheta{\tilde{\theta}}
\def\relu{{\rm ReLU}}
\def\E{\mathbb{E}}
\def\H{\mathbb{H}}
\def\Z{\mathbb{Z}}
\def\I{\mathbb{I}}
\def\Pr{\mathbb{P}}
\def\1{\mathbf{1}}
\newcommand{\Lc}{\mathcal{L}}
\newcommand{\state}{U_t}
\newcommand{\finstate}{U_\infty}
\newcommand{\initstate}{U_0}
\newcommand{\finfunc}{F_{\finstate}}
\newcommand{\sphere}{\mathbb{S}^{d-1}}
\DeclareMathOperator*{\argmin}{arg\,min}
\newcommand{\kibitz}[2]{\ifnum\Comments=1{\textcolor{#1}{\textsf{\footnotesize #2}}}\fi}
\definecolor{darkred}{rgb}{0.7,0,0}
\definecolor{darkgreen}{rgb}{0.0,0.5,0.0}
\definecolor{darkblue}{rgb}{0.0,0.0,0.5}
\definecolor{teal}{rgb}{0.0,0.5,0.5}
\begin{document}

\maketitle

\begin{abstract}%
  We study the compute-optimal trade-off between model and training data set sizes for large neural networks. Our result suggests a linear relation similar to that supported by the empirical analysis of \cite{chinchilla}. While that work studies transformer-based large language models trained on the MassiveText corpus \citep{gopher}, as a starting point for development of a mathematical theory, we focus on a simpler learning model and data generating process, each based on a neural network with a sigmoidal output unit and single hidden layer of ReLU activation units.  We introduce general error upper bounds for a class of algorithms which incrementally update a statistic (for example gradient descent).  For a particular learning model inspired by \cite{barron1993universal}, we establish an upper bound on the minimal information-theoretically achievable expected error as a function of model and data set sizes. We then derive allocations of computation that minimize this bound. We present empirical results which suggest that this approximation correctly identifies an asymptotic linear compute-optimal scaling. This approximation also generates new insights. Among other things, it suggests that, as the input dimension or latent space complexity grows, as might be the case for example if a longer history of tokens is taken as input to a language model, a larger fraction of the compute budget should be allocated to growing the learning model rather than training data.
\end{abstract}

\begin{keywords}%
  Information Theory, Neural Scaling Laws%
\end{keywords}

\section{Introduction}

In recent years, state-of-the-art neural network models have grown immensely. GPT-3 \citep{gpt3}, for example, includes 175 billion learned parameters. While larger models have in general produced better results, they also require much more compute to train. It has become impractical to perform hyperparameter sweeps at the scale of these modern models. This is concerning since deep learning has previously heavily relied on tuning of hyperparameters via extensive trial and error.

When fitting a large language model, two important hyperparameters control $1)$ the size, measured in terms of the parameter count $p$, of the neural network model and $2)$ the number $t$ of training tokens. The compute budget $C$ scales with the product of these two quantities since each training token produces a gradient that is used to adjust each parameter of the neural network. For any compute budget $C$, one should carefully balance between $p$ and $t$. Too few training tokens leads to model estimation error, while too few parameters gives rise to mispecification error. As evaluating performance across multiple choices of $p$ and $t$ becomes computationally prohibitive at scale, alternative kinds of analysis are required to guide how to allocate computational resources.

Recently, \cite{nlm} and \cite{chinchilla} have proposed the following procedure for allocating a large compute budget: 
\begin{enumerate}
\item Evaluate test errors of models produced using various small compute budgets $C$ with many different allocations to parameters $n$ versus training tokens $t$.
\item Based on these results, estimate the optimal relation between $p$ and $t$, which minimizes error for each compute budget $C$.
\item Extrapolate to estimate the relation between $p$ and $t$ for large $C$.
\end{enumerate}
To give a sense of scales involved here, \cite{chinchilla} evaluate test errors across ``small'' models for which $p\times t$ ranges from around $10^{18}$ to $10^{22}$ and extrapolates out to ``large'' models at around $10^{24}$. \cite{nlm} and \cite{chinchilla} each extrapolate based on a hypothesized \emph{scaffolding function}.  \cite{nlm} guess a scaffolding function based on results observed in small scale experiments. \cite{chinchilla} carry out an informal and somewhat speculative mathematical analysis to guide their choice (see their Appendix D).

The analysis of \cite{chinchilla} is somewhat generic rather than specialized to the particular neural network architecture used in that paper. In this paper, we develop a more rigorous and formal mathematical analysis, bringing to bear information-theoretic tools that build on the framework and results of \cite{JeonNeurips2022,jeon_van_roy}. To keep things simple and concrete, we carry out the analysis with a particular data generating process for which neural networks are well-suited. The sorts of arguments developed by \cite{chinchilla} are just as relevant to this context as they are to language models.

\cite{chinchilla} suggest that the compute optimal trade-off between parameter count and number of training tokens is linear, though the authors expressed some doubt and considered other possibilities that are near-linear as well. We establish an upper bound on the minimal information-theoretically achievable expected error as a function of $p$ and $t$ and derive the relation required to minimize this bound for each compute budget. For large compute budgets, this relation is linear, as suggested by \cite{chinchilla}. Given that our data generating process differs from that considered by \cite{chinchilla}, we also carry out a computational study that corroborates this linear relation in our context.

Our main contributions include a first rigorous mathematical characterization of the compute-optimal efficient frontier for a neural network model and development of information-theoretic tools which enable that. A limitation of our analysis is in its simplified treatment of computational complexity as the product of the model and data set sizes; we do not assume any constraints on computation beyond those imposed by choices of $p$ and $t$. In particular, we analyze, algorithms which carry out perfect Bayesian inference, though with a model that is misspecificified due to its restricted size. While this abstracts away the details of practical training algorithms, empirical evidence suggests that our idealized framework leads to useful approximations \citep{YifanZhu2022}.  Additionally, while our general results hold for any gradient algorithm which 1) converges and 2) limiting solution when trained on an infinite stream of data is deterministic given the initialization, we are unable determine the distribution of this limiting solution.  Instead, we posit a potential limiting distribution which exhibits interesting novel theoretical results and facilitate the computation of a compute-optimal trade-off which is corroborated with empirical evidence.  In spite of these limitations, we hope our results set the stage for further mathematical work to guide hyperparameter selection when training large neural networks. 
Furthermore, our approach can give rise to new insights not captured by previous analyses. For example, we consider implications of our bound on how computation ought to be apportioned as the input space dimension or the latent space complexity of the data generating process grows, as may be the case, for example, when taking a longer history of tokens as input to a language model. We next summarize our formulation and insights.

\section{Formulation and Insights}

Our analysis is inspired by the empirical study of \cite{chinchilla}, which uses a simple formula to approximate flops of computation: $C = p \times t$. In other words, the number $C$ of flops is taken to be the product of the number $p$ of model parameters and the number $t$ of training data samples. We will similarly use this formula, which is motivated by the fact that, when training a large language model, each training data sample is processed only once and the computation required by each scales roughly with the number of model parameters. Sample are not processed more than once because there is insufficient computation time to work through vast text data corpi.

\cite{chinchilla} train transformer models on the MassiveText corpus \citep{gopher}. However, in order to simplify our analysis, we restrict attention to a particular synthetic data generating process and learning model, each based on a neural network architecture.
Our data generating process samples inputs $X_t$ iid from a standard multivariate Gaussian and produces binary labels $Y_{t+1}$ via a neural network with a sigmoidal output unit and a single infinitely wide hidden layer of ReLu activation units. We denote by $F$ the associated mapping from input $X_t$ to the sigmoid output. A prior distribution over output weights characterizes initial uncertainty about $F$. This distribution depends on a scalar hyperparameter $K$, which controls the expected complexity of $F$. In particular, output weights are sampled such that the number of hidden units required to attain any particular level of loss scales with $K$.

As a learning model, we consider a similar neural network architecture, again with a linear output node and single hidden layer of ReLu activation units. However, while the data generating process architecture includes an infinite number of hidden units, we limit the number in our learning model to $C/t$. We aim to learn an approximation $\tilde{F}$ to $F$ within the constrained learning model class. Note that $\tilde{F}$ depends on $F$, and therefore, uncertainty about $F$ induces uncertainty about $\tilde{F}$. It is this uncertainty about $\tilde{F}$ that is reduced as the learning model is trained.

\begin{figure}[!ht]
\centering
\includegraphics[scale=0.6]{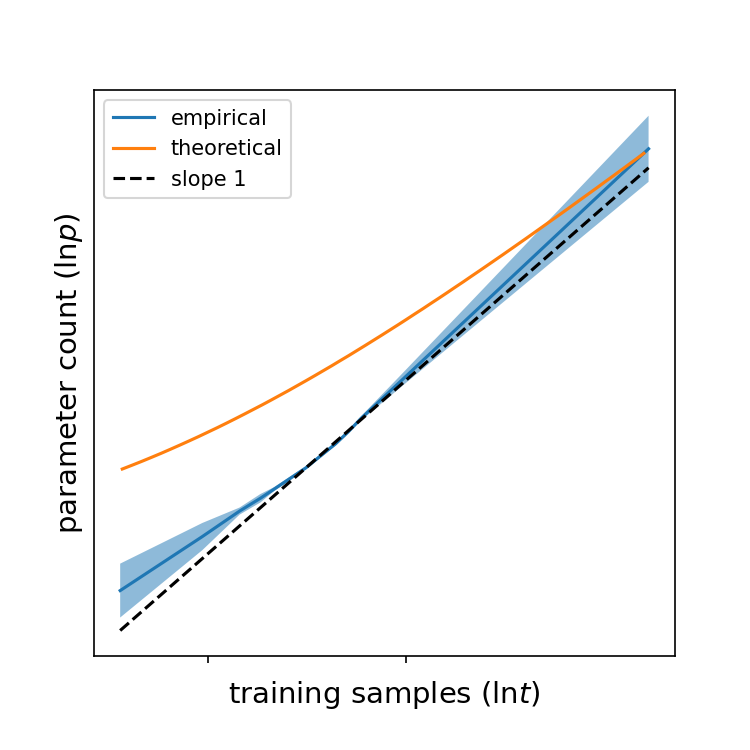}
\caption{Estimates of compute-optimal allocations between parameter count and training samples.}
\label{fig:efficient-frontier}
\end{figure}

In order to model uncertainty, that we treat the data generating process $F$, the learning model $\tilde{F}$, and observed data as random variables. These and all other random variables we introduce are defined with respect to a common probability space $(\Omega, \mathbb{F}, \mathbb{P})$.

We consider learning algorithms which maintain a statistic $U_t$ which obeys an incremental update procedure $\psi$:
$$U_{t+1} \sim\psi(\cdot|U_t, X_t, Y_{t+1}).$$
Note that if $U_t$ are parameters of a neural network and
$$U_{t+1} = U_t - \alpha \nabla \Lc(U_t, X_t, Y_{t+1}),$$
where $\Lc$ is a loss function and $\alpha$ a known step-size, then we recover gradient descent. For \emph{inference}, we consider an idealized algorithm which performs exact Bayesian inference w.r.t $U_t$. We refer to this prediction as $P_t$ and concretely, $P_t = \Pr(Y_{t+1}|U_t, X_t)$. While this abstracts away the details of neural network inference which results from optimization via stochastic gradient descent, empirical evidence suggests that they can approximately attain the performance of our idealized counterpart \citep{YifanZhu2022}. To assess performance, we characterize out-of-sample cross-entropy loss of the prediction $\Pr(Y_{t+1}=1 | U_t, X_t)$, which is conditioned on $U_t$ and $X_t$. Our results inform how a compute budget $C$ should be apportioned to a number $p$ of model parameters versus a number $t$ of training samples in order to minimize out-of-sample cross-entropy loss.

Figure \ref{fig:efficient-frontier} plots estimates of compute-optimal allocations between model size and training samples across different compute budgets. Due to limitations in our computational resources, we ran experiments with only up to around one hundred and fifty thousand training samples. The ticks on horizontal axis indicate the range spanned by our experiments. Each point on the blue curve is produced by a linear model fit to a subset of our experimental results; further detail is provided in Section \ref{se:empirical-results}. If the compute-optimal model size grows linearly in the compute-optimal number of training samples, that would be reflected through unit slope. The dotted black line has unit slope and is plotted for reference. The blue curve represents empirical estimates produced using synthetic data and stochastic gradient descent, as explained in further detail in Section \ref{se:efficient-frontier}. The shaded blue region represents a standard error confidence interval. The orange curve is our theoretical approximation. As the compute budget grows, the slopes of both curves become statistically indistinguishable from one. Furthermore, the curves appear to merge. Overall, these results are consistent with empirical observations of \cite{chinchilla} and suggest that our theory captures the right qualitative behavior as the compute budget grows.

\begin{figure}[!ht]
\centering
\includegraphics[scale=0.40]{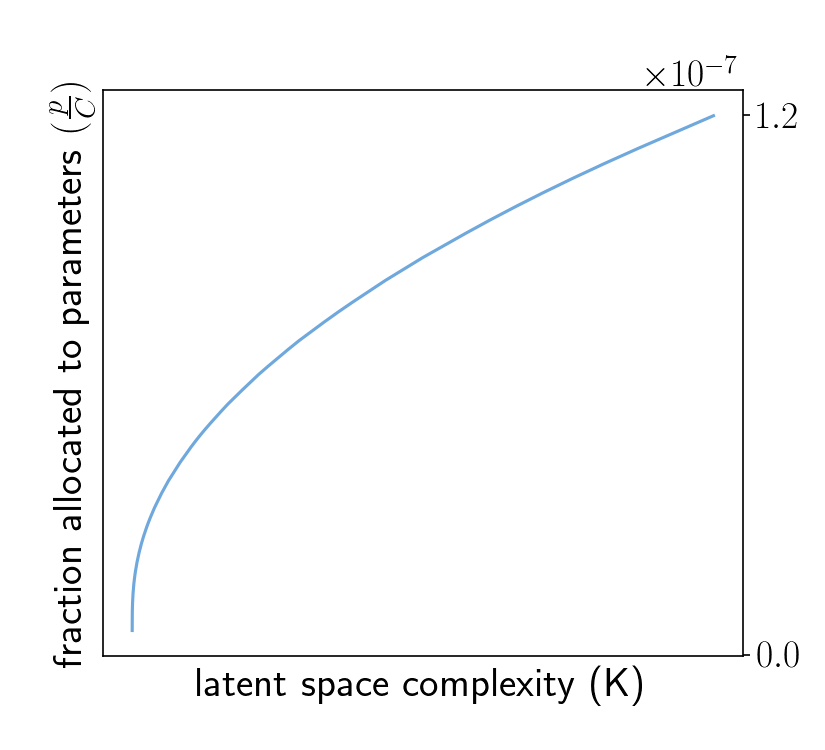}
\includegraphics[scale=0.40]{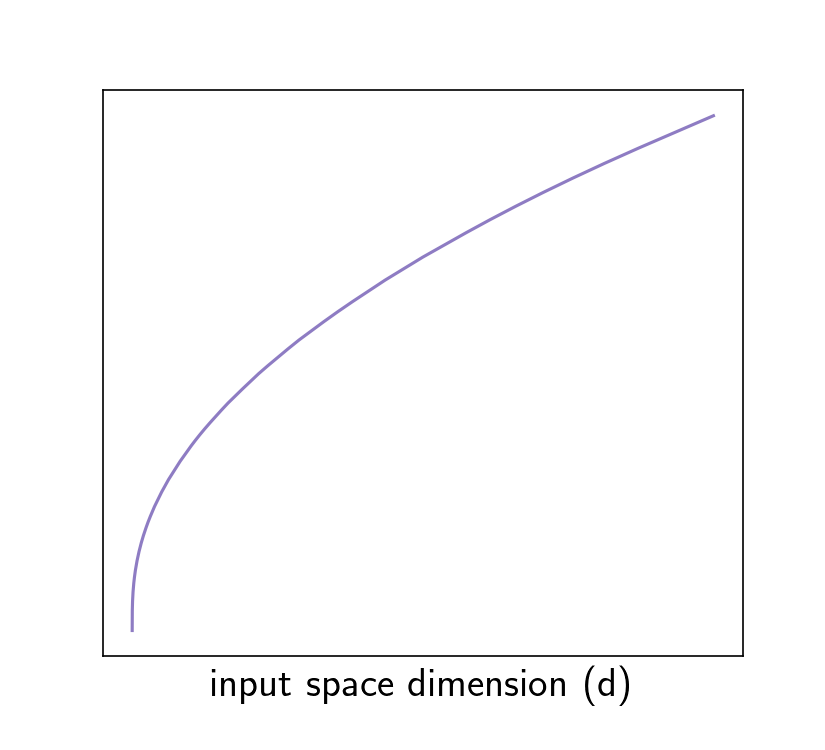}
\caption{The parameter count as a fraction of compute budget increases with complexity of the data generating process. Note that the vertical axis is identical across both plots. Interestingly, parameter allocations are \emph{identical} whether we vary $d$ or $K$.
The shape of the curves seems to be independent of $C$. These plots were generated with $C = 2^{50}$.}
\label{fig:data-complexity}
\end{figure}

Figure \ref{fig:data-complexity} is based on theoretical approximation. Assuming that the approximation is accurate, it plots the optimal fraction of the compute budget allocated to the parameter count $p$ as a function of the complexity of the data generating process which is determined by $K$ and $d$. The trend suggests that, as complexity grows, an increasing fraction of computation should be allocated to scaling models as opposed to increasing the number of training data samples. Furthermore, this relationship is concave. With large language models, it is natural to expect that complexity of the data will grow when a longer history of tokens is taken as input. As such, this figure offers insight into how the optimal allocation of computation ought to change as models are designed to ingest longer histories.

\section{Data Generating Process}
\label{sec:models_error}

We now present the data generating process and learning model we study. We establish an error bound that captures dependence on the learning model size $p$ and the number of training samples $t$.

\subsection{Data generating process}

Our data generating process is represented by a neural network with $d$ inputs, a single asymptotically wide hidden layer of ReLU activation units, and a sigmoidal output unit. We denote by $F$ the associated mapping from input to sigmoidal output. In particular, inputs and binary labels are generated according to $X_t\overset{iid}{\sim}\normal(0, I_d)$ and $\Pr(Y_{t+1} = 1 | \theta, X_t) = F(X_t)$. 

\begin{figure}
    \centering
    \includegraphics[scale=0.25]{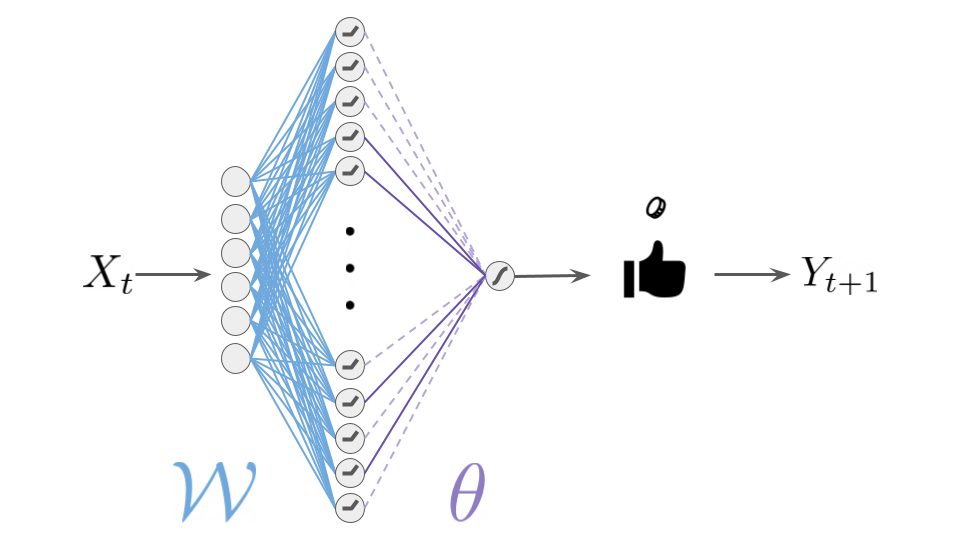}
    \caption{This diagram depicts our data generating process. Each basis function is parameterized by some $w \in \mathcal{W}$. After passing through a ReLU, the output is multiplied by $\theta$ sampled from the Dirichlet process outlined in the text. Despite the fact that $\theta$ has dimension $\infty$, the scale parameter $K$ ensures that only $\approx K$ dimensions have significant magnitude. A sigmoidal function is applied to the output to designate the probability that $Y_{t+1}$ is $1$.}
    \label{fig:data_generating_process}
\end{figure}

Let $S^{d-1}$ denote the $d$-dimensional unit sphere. We index hidden units by a vector $w \in S^{d-1}$. The corresponding output is given by $\relu(w^\top X_t)$. We sample hidden unit weights $\theta$, also indexed by $w \in S^{d-1}$, in a manner such that, with probability one, $\theta_w$ is nonzero for only a countable set of indices. Let $\mathcal{W} = \{w \in S^{d-1} : |\theta_w| > 0\}$ denote this set. Then, 
$$F(X_t) = \frac{1}{1+\exp\left(- \sqrt{K+1}\sum_{w \in \mathcal{W}} \theta_w \relu(w^\top X_t)\right)}.$$

We now described how $\theta$ is generated. First, $\overline{\theta}$ is sampled from a Dirichlet process with base distribution $\mathrm{uniform}(S^{d-1})$ and scale parameter $K$. That this almost surely results in countable support $\mathcal{W}$ is a well-known property of Dirichlet processes. For each $w \in \mathcal{W}$, we randomly flip the sign of the weight according to
$$\theta_w = \left\{\begin{array}{ll}
\overline{\theta}_w \qquad & \text{with probability } 1/2, \\
-\overline{\theta}_w \qquad & \text{otherwise.}
\end{array}\right.$$
Note that the distribution of $\theta$ depends on the scalar hyperparameter $K$, which controls the expected complexity of $F$. In particular, the number of hidden units required to attain any particular level of loss scales with $K$.

A few observations may help interpret certain choices in our data generating process.  Let us denote the input to the sigmoidal output unit by $f(x) = \sqrt{K+1} \sum_{w \in \mathcal{W}} \theta_w \relu(w^\top x)$. Its mean and covariance structure satisfy
$$\E[f(x)] = 0, \qquad \E[f(x)^2] = \frac{1}{2},\qquad \text{and} \qquad \E[f(x) f(x')] = \frac{1}{2\pi},$$
where $x, x'$ are independently sampled from $\normal(0,I_d)$. This model remains nontrivial as $d$ and $K$ grow as all of the above quantities are invariant in $d$ and $K$.

\subsection{Motivating the Data Generating Process}
In our simplified theoretical analysis of neural scaling laws, we arrived at the above data generating process after considering two necessary conditions: $1)$ The data generating process is infinitely complex $2)$ Meaningful progress can be made with finite compute constraints.  Condition $1)$ is necessary because we empirically observe that as we increase model size and dataset size, out-of-sample performance continues to improve.  If the data generating process were finitely complex, there would exist a threshold after which further increasing the model size would theoretically provide no additional benefit.  Our data generating process expresses ''infinite complexity`` from the fact that it is characterized by a neural network of \emph{infinite} width.  However, one could easily devise data generating processes of infinite complexity for which a learning model with \emph{finite} compute would have no hope of achieving meaningful performance.  Again, we empirically observe that even if the complexity necessary for nature to generate human speech is effectively infinite, transformer models trained with finite compute constraints still provide meaningful performance.  As such, we deem condition $2)$ necessary.  While we defer verification that this condition holds for our data generating process to the appendix, the finite scale parameter $K$ is responsible for facilitating this condition.  

\section{General Theoretical Results}

In this section we will provide general theoretical results which extend to all supervised learning problems and algorithms which abide by several natural assumptions we will introduce shortly.  The purpose of this section is to provide an intuitive yet rigorous analysis of the factors at play when considering the optimal allocation of compute.

\subsection{Algorithmic Assumptions}
We restrict our attention to algorithms which perform \emph{incremental} updates w.r.t a statistic. For all $t\in \mathbb{Z}_{+}$, we let $\state$ denote the statistic at time $t$. In deep learning, this statistic is often the neural network parameters. In this sense, $\initstate$ denotes the network weights at initialization and subsequent parameter updates are made via gradient descent:
$$U_{t+1}\ =\ \state - \alpha_t \nabla\Lc(Y_{t+1}, X_t, \state),$$
where $\alpha_t$ denotes the step-size at update $t$ and $\Lc$ a loss function.  For network parameters $\state$, we use $F_{\state}$ to denote the \emph{function} that results from running a forward pass on the network with weights $\state$. In order to demonstrate concrete results for such gradient algorithms, we make the following assumptions:

\begin{enumerate}
    \item \textit{(convergence)} We assume that our algorithm is convergent i.e, with probability $1$ over the randomness in the data $H_t$ and initialization $\initstate$, the following limit exists:
    $$\lim_{t\to\infty} F_{\state}\ \overset{a.s.}{=}\ \finfunc,$$
    where $\finfunc$ denotes the limit.\\
    \item \textit{($\finfunc$ is not influenced by randomness in data)}\\ Let $(\tilde{X}_0,\tilde{Y}_{1}, \tilde{X}_1, \tilde{Y}_2, \ldots)$ be a random process which consists of data pairs which are iid conditioned on $F$ and for all $t\in \Z_{+}$, let $\tilde{H}_t = (\tilde{X}_0, \tilde{Y}_1,\ldots, \tilde{X}_{t-1}, \tilde{Y}_t, \tilde{X}_t)$ and $\tilde{U}_t$ denote the model parameters after running our algorithm from the same initialization $\initstate$ but with data $\initstate$. Then,
    $$\lim_{t\to\infty} F_{\state}\ \overset{a.s.}{=}\ \lim_{t\to\infty} F_{\tilde{U}_t}\ \overset{a.s.}{=}\ \finfunc.$$
\end{enumerate}

Assumption $(1)$ simply states that our step-sizes are set so that the algorithm converges almost surely. It is unclear whether this condition holds for the training procedure of \cite{chinchilla} but in the context of a theoretical analysis, this assumption is mild.

Assumption $(2)$ is more speculative but it is a condition that we would \emph{expect} to hold for an effective learning algorithm. In the limit of infinite data, we should not expect the algorithm's solution $\finfunc$ to depend on the realization of the data. However, since gradient descent on neural networks with non-linearities is a \emph{non-convex} optimization problem, we allow for the solution to depend on the initialization $\initstate$.  Going forward, we will refer to $F_{U_\infty}$ as the \emph{learning model} as it represents the full extent of what is learnable via our algorithm.

Under these two assumptions, we derive general results which decompose the out-of-sample error into $4$ terms: irreducible, misspecification, estimation, and inferential errors.

\subsection{Error}

We evaluate performance via cross-entropy error. Given an input $X_t$ and a prediction $P_t$, which takes the form of a probability in $[0,1]$, of the label $Y_{t+1}$, the cross-entropy error is defined by
$$L_t(P_t) = - Y_{t+1} \ln P_t - (1-Y_{t+1}) \ln (1-P_t).$$
Prediction $P_t$ is allowed to depend on the input $X_t$ and the model parameters $\state$. In deep learning, it is common for $P_t = F_{\state}(X_t)$. We use $P^*_t$ to denote $\Pr(Y_{t+1}|F, X_t)$ or the \emph{omniscient} predictor as $P^*_t$ produces predictions for $Y_{t+1}$ conditioned on $F$. We also introduce the predictor $\hat{P}_t = \Pr(Y_{t+1}|\state, X_t)$ the posterior-predictive distribution of $Y_{t+1}$ conditioned on input $X_t$ and $\state$. It is no surprise that $\hat{P}_t$ \emph{minimizes} $L_t(P_t)$ for all predictors which are allowed to depend on $\state$ and $X_t$. The following result decomposes $\E[L_t(P_t)]$ into $4$ intuitive sources of error for any algorithm which satisfies assumptions $(1)$ and $(2)$ stated in the prior section.

\begin{restatable}{theorem}{lossDecomp}
    For all $t\in \mathbb{Z}_+$, if $\state$ is produced by an algorithm which satisfies assumptions $(1)$ and $(2)$, then
    $$\E[L_t(P_t)] = \underbrace{\E[-\ln P^*_t]}_{\rm bayes\ error} + \underbrace{\I(Y_{t+1};F|\finfunc, \state, X_t)}_{\rm misspecification\ error} + \underbrace{\I(Y_{t+1};\finfunc|\state, X_t)}_{\rm estimation\ error} + \underbrace{\E[\KL(\hat{P}_t\|P_t)]}_{\rm inferential\ error}.$$
\end{restatable}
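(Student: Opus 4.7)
The plan is a two-stage chain-rule decomposition: first peel off the inferential error by comparing $P_t$ to the posterior predictive $\hat{P}_t$, then decompose the remaining Shannon entropy of $Y_{t+1}$ given $(\state, X_t)$ by successively conditioning on $F$ and $\finstate$.

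First I would establish the inferential term. Conditioning on $(\state, X_t)$, both $P_t$ and $\hat{P}_t$ are Bernoulli parameters, and since $\hat{P}_t$ is by definition the conditional law $\Pr(Y_{t+1}\mid \state, X_t)$, the standard identity
\[
\E\bigl[-\ln P_t \mid \state, X_t\bigr] \;=\; \E\bigl[-\ln \hat{P}_t \mid \state, X_t\bigr] + \KL(\hat{P}_t \,\|\, P_t)
\]
holds pointwise. Taking outer expectations gives $\E[L_t(P_t)] = \H(Y_{t+1}\mid \state, X_t) + \E[\KL(\hat{P}_t \| P_t)]$, which isolates the inferential error and reduces the problem to decomposing the conditional entropy $\H(Y_{t+1}\mid \state, X_t)$.

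Next I would apply the chain rule of mutual information twice. First, conditioning on $F$,
\[
\H(Y_{t+1}\mid \state, X_t) \;=\; \H(Y_{t+1}\mid F, \state, X_t) + \I(Y_{t+1}; F \mid \state, X_t).
\]
Because $Y_{t+1}$ is generated from $(F, X_t)$ independently of any past data, $Y_{t+1} \perp \state \mid (F, X_t)$, so the first term collapses to $\H(Y_{t+1}\mid F, X_t) = \E[-\ln P^*_t]$, the Bayes error. Then to split the remaining mutual information, I would invoke two expansions of $\I(Y_{t+1}; F, \finstate \mid \state, X_t)$: one giving $\I(Y_{t+1}; F\mid \state, X_t) + \I(Y_{t+1}; \finstate\mid F, \state, X_t)$, the other giving $\I(Y_{t+1}; \finstate\mid \state, X_t) + \I(Y_{t+1}; F\mid \finstate, \state, X_t)$. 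The term $\I(Y_{t+1}; \finstate\mid F, \state, X_t)$ vanishes by the same conditional independence $Y_{t+1}\perp \finstate \mid (F, X_t)$, so equating the two expansions yields
\[
\I(Y_{t+1}; F\mid \state, X_t) \;=\; \I(Y_{t+1}; \finstate\mid \state, X_t) + \I(Y_{t+1}; F\mid \finstate, \state, X_t),
\]
which gives the estimation and misspecification terms. Substituting back produces the claimed identity.

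The main subtlety is not any one inequality but the measure-theoretic setup: I need $\finstate$ (hence $\finfunc$) to be a well-defined random variable on the same probability space as $F$, $\state$, $X_t$, $Y_{t+1}$ so that the conditional mutual informations are meaningful. Assumption $(1)$ supplies exactly this, since the almost-sure limit defines $\finstate$ pointwise off a null set. Assumption $(2)$ is not actually needed for this decomposition per se; it becomes important only when subsequently bounding the misspecification and estimation terms via the idealized learning model $\finfunc$. Beyond verifying these conditional independences, the argument is purely mechanical application of the entropy chain rule, so I do not anticipate a substantive obstacle.
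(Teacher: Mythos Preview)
Your two-stage decomposition is exactly what the paper does (its Theorem on the three-term split followed by Lemma~\ref{le:miss_est}), and the first stage is fine. The gap is in the second stage, specifically your claim that $\I(Y_{t+1};\finfunc\mid F,\state,X_t)=0$ follows from ``the same conditional independence $Y_{t+1}\perp\finfunc\mid(F,X_t)$'' and that Assumption~(2) is not needed.

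This conditional independence does \emph{not} hold from the data-generating mechanism alone. The random variable $\finfunc$ is the almost-sure limit of the algorithm run on the \emph{entire} data stream $(X_0,Y_1,X_1,Y_2,\ldots)$, and in particular $U_{t+1}$ is computed from $(\state,X_t,Y_{t+1})$, so $\finfunc$ depends on $Y_{t+1}$. Thus, conditioning on $(F,X_t)$ (or even on $(F,\state,X_t)$) does not sever the link between $Y_{t+1}$ and $\finfunc$: the label $Y_{t+1}$ feeds forward into every subsequent iterate and hence into the limit. Your justification ``$Y_{t+1}$ is generated from $(F,X_t)$ independently of any past data'' is correct for $\state$ (which is measurable with respect to $H_t$) but not for $\finfunc$, which is measurable with respect to $H_\infty$.

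This is precisely where the paper invokes Assumption~(2). The argument is that one may replace the data stream from time $t$ onward by an independent copy $\tilde H_{t+1:\infty}$ generated from $F$, and by Assumption~(2) the limiting function is unchanged almost surely. Consequently $\finfunc$ is (a.s.) a deterministic function of $(F,\state)$, so $\I(Y_{t+1};\finfunc\mid F,\state,X_t)=0$ trivially. Without Assumption~(2) the decomposition into misspecification and estimation terms as stated need not hold; you should retract the remark that the assumption is dispensable and insert this argument at the point where you equate the two chain-rule expansions.
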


The \emph{bayes error} represents error experienced by even the omniscient agent. Since cross-entropy loss is not baselined at $0$, the bayes error contributes to a form of \emph{irreducible error}. On the other end, we have \emph{inferential error} which describes the shortfall of producing predictions of $Y_{t+1}$ via $P_t$ as opposed to $\hat{P}_{t}$. In practical problem instances, $\hat{P}_t$ cannot be computed because the structure of the data generating process is not known. As a result, it is common to simply use $P_t = F_{\state}(X_t)$ as mentioned above.  We currently do not have adequate theoretical tools to study the inferential error, so analysis beyond this point will pertain to $L_t(\hat{P}_t)$ for which the inferential error is $0$.

The \emph{misspecification error} represents error which remains even as the number of samples grows to $\infty$. The error can instead be written as the following expected KL divergence:
$$\I(Y_{t+1};F|\finfunc, \state, X_t) = \E\left[\KL(P^*_t\|\Pr(Y_{t+1}\in\cdot|\finfunc, \state, X_t))\right].$$
Since the predictor $\Pr(Y_{t+1}\in\cdot|\finfunc, \state, X_t)$ conditions on $\finfunc$, the learned model which is the result of training on infinite data, this error is a result of the learning model being too simplistic in comparison to the data generating process. Since our data generating process is identified by an \emph{infinite width} neural network, the misspecification error will be nonzero for any learning model of finite width. However, we expect this quantity to \emph{decrease} as we \emph{increase} the parameter count of the learning model. While additional assumptions must be made in order to eliminate the conditioning on $\state$, we can always \emph{upper bound} the misspecification error by $\I(Y_{t+1};F|\finfunc, \initstate, X_t)$ (Lemma \ref{le:miss_error_bound} of Appendix \ref{apdx:general}) to make the quantity explicitly independent of the dataset size.

Finally, the \emph{estimation error} represents statistical error which occurs as a result of estimating the learning model under finite samples. The error can be instead written as the following expected KL divergence:
$$\I(Y_{t+1};\finfunc|\state, X_t) = \E\left[\KL\left(\Pr(Y_{t+1}\in\cdot|\finfunc, \state, X_t) \| \hat{P}_{t}\right)\right].$$
The KL divergence measures the difference between predictions made with knowledge of $\finfunc$ and just $\state$, the model parameters after updated with the information of $t$ data points. As classical statistics would dictate, estimation error will depend on both the parameter count and the data set size.

\subsection{Error bounds}

We now make an additional third assumption on the learning algorithm which facilitates analysis.
\begin{enumerate}
  \setcounter{enumi}{2}
  \item \textit{(reducible error is monotonically decreasing in dataset size)} For all $t \in \mathbb{Z}_{+}$,
  $$\E\left[\KL\left(P^*_{t+1}\|\hat{P}_{t+1}\right)\right] \leq \E\left[\KL\left(P^*_{t}\|\hat{P}_t\right)\right]$$
\end{enumerate}
This assumption states that the expected error of $\hat{P}_{t+1}$ is less than that of $\hat{P}_{t}$. Note that this assumption only needs to hold in \emph{expectation}. Much like Assumption $(2)$, while this is hard to demonstrate for practical algorithms, it is again a condition that we would \emph{expect} to hold for an effective learning algorithm. Given more data, the algorithm should obtain smaller error in expectation.  We now present two theoretical results: $(1)$ a general result which upper bounds the reducible error for an algorithm which satisfies Assumptions $1-3$ and $(2)$ the result applied to our data generating process

\begin{theorem}{\bf(reducible error upper bound)}
    For all $t\in \mathbb{Z}_+$, if $\state$ is produced by an algorithm which satisfies assumptions $(1)$, $(2)$, and $(3)$ then,
    $$\E[\KL(P^*_t\|\hat{P}_t)] \leq \I(Y_{t+1};F|\finfunc, \initstate, X_t) + \frac{\H(\finfunc)}{t}.$$
\end{theorem}

In practice it is difficult to determine what random variable $\finfunc$ is for a finite width relu neural network trained via sgd.  As a result, we posit a particular random variable $\finfunc$ which abides by the architectural constraints and seemingly provides meaningful insight when empirically validated.

We now present an upper bound on the cross-entropy error of an optimal neural network learning model of width $n$ and $t$ training samples.

\section{Our Learning Model}
As aforementioned, while ideally we would like to characterize the learning model $\finfunc$ as a random variable which represents the result of running SGD on a stream of iid data pairs from random initialization, this work cannot provide such a result.  Instead, we provide a hypothetical learning model which produces both interesting novel error bounds and recovers an optimal compute allocation which mirrors that of \citep{chinchilla}.  In the following section, we further corroborate these results by empirically estimating the compute optimal allocation with neural networks trained via SGD.

While we choose a particular learning model, it still abides by the relevant architectural constraints pertinent to optimal compute allocation.  As such, our learning model $\finfunc$ is a neural network with input dimension $d$ and width $n$.  Each hidden unit is parameterized by weights $w_i \in S^{d-1}$.  Recall that realizations of $\overline{\theta}$ are discrete distributions over $S^{d-1}$ sampled from a Dirichlet process to produce output layer weights $\theta$ for our data generating process.  We consider $\finfunc$ for which 
$$w_i = \begin{cases}
    w & \text{w.p. } \overline{\theta}_w\\
\end{cases}.$$
The weight assigned to the output of the $i$th hidden unit is taken to be $\sign(\theta_{w_i}) \sqrt{K+1}/n$, so that
$$\finfunc(X_t) = \frac{1}{1+\exp\left(- \frac{\sqrt{K+1}}{n} \sum_{i=1}^n \sign(\theta_{w_i}) \relu(\left(\bar{w}_{i,\epsilon}\right)^\top X_t)\right)},$$
where $\bar{w}_{i,\delta}$ is a quantization of $w_i$ for which $\|w_i - \bar{w}_{i,\delta}\|_2 \leq \delta$ for all $i$.  

This function is constructed by sampling $n$ hidden units from the data generating process with replacement, quantizing the input weights of each of these hidden units up to fidelity $\delta$, and averaging their outputs.  As such, misspecification error stems from two sources of imprecision: $1)$ the input weight quantization and $2)$ randomly sampling and averaging $n$ hidden units to approximate the linear output layer.  Approximations of this sort date back to the work of \cite{barron1993universal}, which established that finite width neural networks can provide accurate approximations of functions from a nonparametric class.  Note that $\delta$, the quantization tolerance, is a free parameter that we can optimize for any pair $(n, t)$ to produce a tight bound. This perturbation is introduced to keep the entropy of the learning model finite.
 
In the remainder of this section, we present upper bounds on the misspecification error and estimation error of the algorithm which produces the prediction $\hat{P}_t = \Pr(Y_{t+1}|\finfunc, \initstate, X_t)$ for all $t$.

\subsection{Error Bounds for Our Learning Model}
We begin by providing an upper bound on the \emph{misspecification} error of our learning model.

\begin{restatable}{theorem}{misspecificationUb}{\bf (misspecification error upper bound)}
    \label{th:mis_error_ub}
    For all $n, t, d, K \in \Z_{++}$ and $\delta > 0$,
    $$\I(Y_{t+1};F|\finfunc, \state, X_t) \leq \frac{3(K+1)}{n} + 2d\delta^2.$$
\end{restatable}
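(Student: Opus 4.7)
The plan is to upper bound the conditional mutual information by an $L^2$-distance between the pre-sigmoid logits of $F$ and $\finfunc$, and then control that distance via a Monte-Carlo sampling variance (scaling as $(K+1)/n$) plus a deterministic quantization error (scaling as $\delta^2$).

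I start by reducing the mutual information to an expected KL between two Bernoullis. Since $Y_{t+1}$ depends on $(\finfunc, \initstate)$ only through $F$ and $X_t$, we have
\[
\I(Y_{t+1}; F \mid \finfunc, \initstate, X_t) \;=\; \E\!\left[\KL\!\left(\mathrm{Bern}(F(X_t)) \,\big\|\, \Pr(Y_{t+1} = 1 \mid \finfunc, \initstate, X_t)\right)\right].
\]
The Bayes posterior minimizes expected cross-entropy over all predictors measurable in $(\finfunc, \initstate, X_t)$, so replacing it with the learning model's own prediction $\mathrm{Bern}(\finfunc(X_t))$ only enlarges the right-hand side. Writing $F(x) = \sigma(f(x))$ and $\finfunc(x) = \sigma(\tilde f(x))$ for the corresponding logits, the log-normalizer $a \mapsto \log(1+e^a)$ is $\tfrac14$-smooth, giving the Bregman estimate $\KL(\sigma(a) \| \sigma(b)) \leq (a-b)^2 / 8$. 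The task thus reduces to bounding $\E[(f(X_t) - \tilde f(X_t))^2]$.

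Next, I decompose $\tilde f - f = (\tilde f_0 - f) + (\tilde f - \tilde f_0)$, where $\tilde f_0$ is the unquantized variant of $\tilde f$ obtained by replacing each $\bar w_{i,\delta}$ with the original $w_i$. Conditional on $\theta$ and $X_t$, the hidden units $w_i$ are iid draws from the random probability measure $\bar\theta$ on $S^{d-1}$, and the sign-flip structure yields the unbiasedness identity
\[
\E\!\left[\sign(\theta_{w_i})\,\relu(w_i^\top x) \,\big|\, \theta,\, X_t = x\right] \;=\; \sum_w \theta_w\,\relu(w^\top x) \;=\; f(x)/\sqrt{K+1}.
\]
Hence $\tilde f_0(X_t)$ is an unbiased Monte-Carlo estimator of $f(X_t)$ whose variance is $(K+1)/n$ times an $O(1)$ second moment of $\relu(w^\top X_t)$, bounded by the covariance calculations of Section 3. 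For the quantization term, the $1$-Lipschitzness of ReLU yields the pointwise bound $|\tilde f(x) - \tilde f_0(x)| \leq \sqrt{K+1}\,\delta\,\|x\|$, and taking expectations with $\E[\|X_t\|^2] = d$ produces a contribution of order $d\delta^2$. Assembling the two terms and dividing by $8$ gives a bound of the stated form.

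The main obstacle is bookkeeping the nested randomness — the Dirichlet-process draw of $\theta$, the iid sampling of the $w_i$ given $\theta$, the random signs, and the Gaussian input $X_t$ — in the correct order. The Dirichlet-specific structure enters the argument only through the unbiasedness identity above; beyond that, $\bar\theta$ may be treated as an arbitrary random probability measure on $S^{d-1}$. A secondary delicacy is matching the exact constants $3$ and $2$, which requires a careful term-by-term accounting of how the $(K+1)$ prefactor distributes between the sampling variance and the quantization bias, rather than a crude $(a+b)^2 \leq 2a^2 + 2b^2$ split.
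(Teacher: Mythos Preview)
Your reduction to an $L^2$ distance between logits via the plug-in (change-of-measure) lemma and the KL-to-squared-error bound is exactly how the paper proceeds, and your Monte-Carlo variance analysis of $\tilde f_0 - f$ is correct and gives the right $(K+1)/n$ contribution. The gap is in the quantization step.

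Your pointwise bound $|\tilde f(x) - \tilde f_0(x)| \leq \sqrt{K+1}\,\delta\,\|x\|$ is correct but, after squaring and taking expectations, yields $(K+1)\,d\,\delta^2$ rather than the $2d\delta^2$ in the statement. The extra factor is $K+1$, which is a parameter of the data generating process, not a universal constant; no amount of ``careful term-by-term accounting'' of the $(a+b)^2$ split will remove it from your decomposition, because it comes directly from the $\sqrt{K+1}$ scaling in the logit combined with the fact that the learning model's empirical weights $|\tilde\theta_w| = n_w/n$ have $\ell_1$ norm one but no special $\ell_2$ smallness. So as written your argument proves $\I(Y_{t+1};F|\finfunc,\state,X_t) \leq c_1(K+1)/n + c_2(K+1)d\delta^2$ for some absolute $c_1,c_2$, which does not imply the theorem.

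The paper avoids this via a chain-rule decomposition rather than a direct comparison of $F$ and $\finfunc$: writing $F = (\theta,\mathcal{W})$ and $\finfunc = (\tilde\theta,\bar{\mathcal{W}})$, it splits
\[
\I(Y;\theta,\mathcal{W}\mid \tilde\theta,\bar{\mathcal{W}},X)
= \I(Y;\theta\mid \tilde\theta,\bar{\mathcal{W}},X) + \I(Y;\mathcal{W}\mid \theta,\tilde\theta,\bar{\mathcal{W}},X).
\]
The first piece is bounded (after a data-processing step replacing $\bar{\mathcal{W}}$ by $\mathcal{W}$) by your Monte-Carlo argument. In the second piece one now \emph{conditions on the true Dirichlet weights $\theta$}, so the plug-in predictor is $\sigma(\sqrt{K+1}\,\theta^\top \relu(\bar w \cdot x))$ and the quantization error is measured through $\theta$ rather than through the empirical $\tilde\theta$. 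Because the signs of $\theta_w$ are independent Rademacher, cross-terms vanish and the squared error involves $\sum_w \E[\theta_w^2](\relu(w^\top x)-\relu(\bar w^\top x))^2$; the Dirichlet-process second moment $\E[\sum_w \theta_w^2] = 1/(K+1)$ then cancels the $(K+1)$ prefactor, leaving $2d\delta^2$. That cancellation is the missing idea in your proposal.
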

This result matches the error bound derived in \citep{barron1993universal} for $\delta^2 = 1/dn$. The first term captures error incurred from our learning model's finite width $n$. The second term captures error from fitting with hidden units that are perturbed by variance $\delta^2$ noise. The error increases as the complexity of the environment $d, K$ increases, and decreases as the number of parameters in our learning model $n$ increases and the magnitude of perturbations $\delta^2$ to our basis function decrease.

We next derive an upper bound for the \emph{estimation} error incurred by our learning model.

\begin{restatable}{theorem}{estErrorUb}{\bf{(estimation error upper bound)}}
    \label{th:est_error_ub}
    For all $n, t, d, K \in \Z_{++}$ and $\delta > 0$, if $K \geq 2$, then
    $$\I(Y_{t+1};\finfunc|\state, X_t) \leq \frac{K\ln\left(1 + \frac{n}{K}\right)\cdot\left(\ln\left(2n\right) + d\ln\left(\frac{3}{\delta}\right)\right)}{t}.$$
\end{restatable}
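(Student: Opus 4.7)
I would prove this in two stages: first reduce the estimation error to $\H(\finfunc)/t$ via an information-theoretic chain rule, then bound $\H(\finfunc)$ by exploiting the Dirichlet-process structure of the hypothesized learning model.

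For the first stage, the plan is to telescope. Since $X_s \overset{iid}{\sim} \normal(0, I_d)$ is independent of $(\finfunc, U_s)$, we have $\H(\finfunc \mid U_s, X_s) = \H(\finfunc \mid U_s)$; and since $U_{s+1}$ is determined by $(U_s, X_s, Y_{s+1})$ together with algorithmic randomness independent of $\finfunc$, we have $\H(\finfunc \mid U_s, X_s, Y_{s+1}) \geq \H(\finfunc \mid U_{s+1})$. Therefore
$$\I(\finfunc; Y_{s+1} \mid U_s, X_s)\ \leq\ \H(\finfunc \mid U_s) - \H(\finfunc \mid U_{s+1}),$$
and summing over $s = 0,\ldots,t-1$ telescopes to $\H(\finfunc \mid U_0) - \H(\finfunc \mid U_t) \leq \H(\finfunc)$. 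Invoking a monotonicity property of the estimation term, analogous to Assumption $3$ but applied to the estimation component rather than the total reducible error, the time-$t$ estimation error is at most the average of the per-step estimation errors, giving
$$\I(Y_{t+1}; \finfunc \mid U_t, X_t)\ \leq\ \frac{\H(\finfunc)}{t}.$$

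For the second stage, I would exploit the symmetry and clustering structure of $\finfunc$. By construction, $\finfunc$ is a permutation-invariant function of the pairs $\{(\bar{w}_{i,\delta}, \sigma_i)\}_{i=1}^n$ where $\sigma_i = \sign(\theta_{w_i})$, so $\H(\finfunc)$ is bounded by the entropy of the corresponding multiset. Letting $M$ denote the number of distinct atoms of $\overline{\theta}$ appearing among $w_1,\ldots,w_n$, this multiset is determined by the $M$ distinct (quantized position, sign) pairs, each in a set of size at most $2(3/\delta)^d$ since $S^{d-1}$ admits a $\delta$-net of cardinality $\leq (3/\delta)^d$, together with a composition of $n$ into $M$ positive multiplicities, of which there are $\binom{n-1}{M-1} \leq n^M$. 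The resulting entropy bound is
$$\H(\finfunc)\ \leq\ \E[M] \cdot \bigl(d\ln(3/\delta) + \ln(2n)\bigr),$$
where a small residual $\H(M) \leq \ln(n+1)$ is absorbed using the hypothesis $K \geq 2$, which implies $\ln(n+1) \leq K \ln(1 + n/K)$. Finally, by the Chinese restaurant process representation of the Dirichlet process, the events that the $(i+1)$-st sample opens a new atom are independent Bernoulli$(K/(K+i))$, so
$$\E[M]\ =\ \sum_{i=0}^{n-1} \frac{K}{K+i}\ \leq\ K\ln\!\Bigl(1 + \tfrac{n}{K}\Bigr).$$
Combining with the previous display and dividing by $t$ yields the claimed bound.

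The main obstacle is the monotonicity step in the first stage: the telescoping gives only the cumulative bound $\sum_s \I(Y_{s+1};\finfunc \mid U_s, X_s) \leq \H(\finfunc)$, but passing to the time-$t$ value requires monotonicity of the estimation component, whereas Assumption $3$ as stated governs the total reducible error. A secondary technical point is ensuring that the multiset encoding in the entropy calculation correctly handles the case where distinct Dirichlet atoms quantize to the same $\delta$-net point, which the permutation-invariance of $\finfunc$ makes harmless but is worth checking carefully.
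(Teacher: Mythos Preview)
Your two-stage plan matches the paper's structure exactly: the paper also reduces to $\H(\finfunc)/t$ via a telescoping chain-rule argument (its Lemma on the Ces\`aro-averaged estimation error) and then bounds $\H(\finfunc)$ by counting distinct hidden units times per-unit description length (its entropy upper bound theorem). Your telescoping and the conditional-independence justifications are essentially identical to the paper's.

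Where you diverge is in bounding $\E[M]$. The paper takes an indirect route: it works with an $N$-dimensional Dirichlet$([K/N,\ldots,K/N])$ surrogate, derives $\E[\#\text{unique classes}] \leq K\ln(1+n/K)$ through several combinatorial lemmas on products of the form $\prod_{i=0}^{n-1}(1-\tfrac{K/N}{K+i})$, and then passes to the Dirichlet-process limit via dominated convergence. Your Chinese-restaurant-process computation $\E[M]=\sum_{i=0}^{n-1}\tfrac{K}{K+i}$ is the standard and much cleaner derivation of the same quantity; it bypasses the entire finite-$N$ machinery. One caveat: the integral comparison actually gives $\sum_{i=0}^{n-1}\tfrac{K}{K+i}\leq 1+K\ln(1+n/K)$ rather than $K\ln(1+n/K)$ on the nose (check $n=1$, $K=2$), so the sharp constant claimed in the paper is already slightly optimistic there as well; this does not affect the scaling.

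Your flagged obstacle is real and is handled the same way in the paper. The telescoping lemma in the paper only yields the time-average bound $\tfrac{1}{T}\sum_{t<T}\I(Y_{t+1};\finfunc\mid U_t,X_t)\leq \H(\finfunc)/T$; the passage to the per-step bound at time $t$ is obtained by invoking Assumption~3 (monotonicity of the reducible error), exactly as in the paper's proof of the reducible-error theorem. The paper's one-line proof of the estimation-error theorem cites only the averaged lemma and the entropy bound, leaving this monotonicity step implicit, so your concern that Assumption~3 governs the total reducible error rather than the estimation component alone is a fair reading---the paper does not supply a separate monotonicity statement for the estimation term. Your entropy bookkeeping is also slightly more careful than the paper's: you account for $\H(M)$ explicitly and absorb it via $K\geq 2$, whereas the paper's entropy bound suppresses this term.
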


Just as in \citep{barron1994approximation}, the estimation error decays at a rate of $\mathcal{O}\left(1/t\right)$. However, the interesting distinction is that Theorem \ref{th:est_error_ub} suggests that while the estimation error increases in $n$, the dependence is only \emph{logarithmic} as opposed to \emph{linear}. By combining Theorems \ref{th:mis_error_ub} and \ref{th:est_error_ub} and minimizing with respect to $\delta^2$, we have the following result.

\begin{restatable}{corollary}{ceUB}{\bf(cross-entropy error upper bound)}
    \label{cor:ce_ub}
    For all $n, t, d, K\in \Z_{++}$, and $\delta > 0$, if $K \geq 2$, then
    $$\E\left[L_t\left(\hat{P}_t\right)\right] \leq \underbrace{L^*}_{\rm bayes\ error} + \underbrace{\frac{3(K+1)}{n} + 2d\delta^2}_{\rm misspecification\ error} + \underbrace{\frac{K\ln\left(1 + \frac{n}{K}\right)\cdot\left(\ln(2n) + d\ln\left(\frac{3}{\delta}\right)\right)}{t}}_{\rm estimation\ error}.$$
\end{restatable}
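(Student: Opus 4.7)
The plan is to recognize this corollary as a direct synthesis of three results already in hand: the four-term loss decomposition (the first restated theorem of the section), the misspecification bound (Theorem~\ref{th:mis_error_ub}), and the estimation bound (Theorem~\ref{th:est_error_ub}). The crucial observation is that evaluating the decomposition at the idealized Bayesian posterior-predictive $P_t = \hat{P}_t$ collapses the inferential term to zero, since $\E[\KL(\hat{P}_t \| \hat{P}_t)] = 0$, and the first term is $L^* = \E[-\ln P^*_t]$ by definition.

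The concrete steps are as follows. First, I would specialize the loss decomposition to $P_t = \hat{P}_t$, obtaining
$$\E[L_t(\hat{P}_t)] = L^* + \I(Y_{t+1}; F \mid \finfunc, \state, X_t) + \I(Y_{t+1}; \finfunc \mid \state, X_t).$$
Second, I would invoke Theorem~\ref{th:mis_error_ub} to upper bound the misspecification mutual information by $3(K+1)/n + 2d\delta^2$. Third, I would invoke Theorem~\ref{th:est_error_ub}, importing the hypothesis $K \geq 2$, to upper bound the estimation mutual information by the $\mathcal{O}(1/t)$ expression $K\ln(1 + n/K)\bigl(\ln(2n) + d\ln(3/\delta)\bigr)/t$. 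Summing the three contributions yields the stated inequality.

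The main ``obstacle'' is essentially bookkeeping rather than mathematics: one must verify that the conditioning tuples $(\finfunc, \state, X_t)$ appearing in Theorems~\ref{th:mis_error_ub} and~\ref{th:est_error_ub} coincide exactly with those appearing in the information-theoretic terms of the decomposition, so that the bounds substitute verbatim without recourse to any monotone loosening of the conditioning set. A secondary point to confirm is that the specific $\finfunc$ posited for our learning model in the preceding subsection is a valid instance of the object satisfying Assumptions~(1)--(2), which is true by construction. Because $\delta$ is retained as a free parameter in the statement, no optimization of the quantization tolerance is performed here; choosing $\delta$ as a function of $(n, t, d, K)$ to minimize the right-hand side is deferred to the subsequent derivation of the compute-optimal allocation.
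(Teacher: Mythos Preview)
Your proposal is correct and matches the paper's own approach: the paper states that Corollary~\ref{cor:ce_ub} is ``an immediate consequence of Theorems~\ref{th:mis_error_ub} and~\ref{th:est_error_ub},'' which is precisely the synthesis you describe via the loss decomposition with $P_t=\hat{P}_t$. Your bookkeeping remarks about the conditioning tuples and the retention of $\delta$ as a free parameter are accurate and need no modification.
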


The second term represent the misspecification error incurred from using finite width $n$ to approximate a data generating process of countably infinite width. Intuitively, this error increases with the environment complexity $K$ and input dimension $d$ and decreases with the learning model complexity $n$ and increased quantization precision $\delta$. The third term represents the approximation error incurred from estimating output layer weights from $t$ training samples. This quantity intuitively decreases as the sample size $t$ increases. Meanwhile, it increases with both increased complexity of the data generating process $K$ and the learning model $(n, \delta)$. Notably, the estimation error bound is only \emph{logarithmic} in $n$.  We further optimize this result by selecting the $\delta$ which minimizes the upper bound:

\begin{restatable}{theorem}{optCeUb}{\bf{cross-entropy error upper bound}}
    \label{th:ce_ub_final}
    For all $n, t, d, K\in \Z_{++}$, if $K \geq 2$, then
    $$\E\left[L_t\left(\hat{P}_t\right)\right] \leq L^*+\frac{3(K+1)}{n} + \frac{dK\ln\left(1+\frac{n}{K}\right)\left(1+\ln(36t)+\frac{2}{d}\ln(2n)\right)}{2t}.$$
\end{restatable}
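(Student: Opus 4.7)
The plan is to derive the bound by optimizing Corollary~\ref{cor:ce_ub} over the free parameter $\delta>0$. Starting from
$$\E[L_t(\hat P_t)] \leq L^* + \frac{3(K+1)}{n} + 2d\delta^2 + \frac{K\ln(1+n/K)\bigl(\ln(2n) + d\ln(3/\delta)\bigr)}{t},$$
I would isolate the $\delta$-dependence as $g(\delta) := 2d\delta^2 + \frac{dK\ln(1+n/K)\ln(3/\delta)}{t}$, compute $g'(\delta) = 4d\delta - \frac{dK\ln(1+n/K)}{t\delta}$, and set it to zero to find the unique minimizer $\delta_\star^2 = \frac{K\ln(1+n/K)}{4t}$ (uniqueness from strict convexity of $g$ on $(0,\infty)$).

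Substituting $\delta = \delta_\star$ into the quadratic term yields $2d\delta_\star^2 = \frac{dK\ln(1+n/K)}{2t}$, which supplies the constant ``$1$'' inside the target factor $(1+\ln(36t)+(2/d)\ln(2n))$. For the logarithmic term, direct computation gives
$$\ln(3/\delta_\star) = \tfrac{1}{2}\ln\!\frac{36t}{K\ln(1+n/K)} \leq \tfrac{1}{2}\ln(36t),$$
where the inequality uses $K\ln(1+n/K)\geq 1$, which follows from the hypothesis $K\geq 2$ together with monotonicity in $n$. This yields $\frac{dK\ln(1+n/K)\ln(3/\delta_\star)}{t} \leq \frac{dK\ln(1+n/K)\ln(36t)}{2t}$, providing the ``$\ln(36t)$'' contribution.

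Finally, the $\delta$-independent estimation piece $\frac{K\ln(1+n/K)\ln(2n)}{t}$ rewrites as $\frac{dK\ln(1+n/K)}{2t}\cdot\frac{2\ln(2n)}{d}$, yielding the ``$(2/d)\ln(2n)$'' summand. Collecting the three pieces, factoring $\frac{dK\ln(1+n/K)}{2t}$ out, and adding back $L^*$ and $\frac{3(K+1)}{n}$ produces the stated bound. The main subtlety, though mild, lies in controlling $\ln(3/\delta_\star)$: the exact evaluation produces an extra $-\tfrac{1}{2}\ln(K\ln(1+n/K))$ term that must be shown non-positive, which is where the hypothesis $K\geq 2$ enters the proof. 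Everything else is routine algebra once the optimal $\delta_\star$ is identified.
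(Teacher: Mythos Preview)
Your proposal is correct and follows essentially the same route as the paper: the paper's Theorem~\ref{th:opt_noise} carries out exactly this optimization, obtaining the same minimizer $\delta_\star^2 = \tfrac{K}{4t}\ln(1+n/K)$ via first-order conditions and then bounding $\ln\bigl(36t/(K\ln(1+n/K))\bigr)\le \ln(36t)$ before combining with Corollary~\ref{cor:ce_ub}. One small caveat: the inequality $K\ln(1+n/K)\ge 1$ actually fails at $n=1$ (since $K\ln(1+1/K)<1$ for all finite $K$), so ``monotonicity in $n$'' alone is not enough---the paper's Theorem~\ref{th:opt_noise} explicitly assumes $n\ge 2$ as well, a hypothesis that is arguably missing from the statement of Theorem~\ref{th:ce_ub_final} itself.
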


In the section to come, we will derive a compute optimal allocation by choosing $n, t$ which minimizes the above upper bound under the constraint that $n \times t = C$.  However, we first compare this theoretical result to an existing result of \citep{barron1994approximation}.

\subsection{Relation to Barron's Bounds}

It is well known that a learning model of this form can achieve misspecification error that decays at rate $\mathcal{O}(K/n)$ \citep{barron1993universal} when measured in \emph{squared error}. \cite{barron1993universal} derived a result which bounded the misspecification error incurred by approximating any function with a particular Fourier representation via a single hidden layer neural network with $n$ hidden units. Their followup analysis in \citep{barron1994approximation} studies both the misspecification error and the \emph{estimation error}. They upper bound the estimation error by analyzing the number of samples required by a width $n$ neural network that minimizes a regularized least-squares objective. Notably, the techniques used to bound the misspecification error are almost entirely divorced from those used to bound the estimation error. If these bounds were tight, one could derive a scaling law for which the optimal number of training samples is $\tilde{\mathcal{O}}(C^{2/3})$ while the optimal parameter count is $\tilde{\mathcal{O}}(C^{1/3})$, which unfortunately does not match empirical results highlighted in Figure \ref{fig:efficient-frontier}. 

\cite{E_2019} also later derived a bound in the same setting. Their estimation error bound is actually independent of $n$, the width of the network, but pays the price in its dependence on $t$, which is $\tilde{\mathcal{O}}(1/\sqrt{t})$. This results in the same incorrect scaling law as \citep{barron1994approximation}, one in which the optimal number of training samples is $\tilde{O}(C^{2/3})$ and optimal parameter count is $\mathcal{O}(C^{1/3})$.

We hypothesize that our analysis is able to recover the appropriate scaling law because our framework allows us to coherently reason about the misspecification and estimation error of a learning model in tandem as opposed to independently. The scaling law of \cite{barron1994approximation} indicates that the number of training samples grows \emph{faster} than what is optimal for a neural network learning via stochastic gradient descent. This suggests that either the estimation error bounds of \cite{barron1994approximation} are loose, or neural networks trained via stochastic gradient descent do not require as many samples to avoid overfitting as their regularized least-squares solution does. Inspired by the observation that stochastic gradient descent seems to match the performance of an optimal Bayesian learner \citep{YifanZhu2022}, we attempted to understand the performance of stochastic gradient descent applied to a single-hidden layer neural network of width $n$ by instead analyzing the performance of an \emph{optimal Bayesian learner} that tries to learn $\tilde{F}$ from observed training samples.

\subsection{Compute-Optimal Allocations}
\label{se:efficient-frontier}
In this section, we provide a bound on the width of the learning model that minimizes the upper bound presented in Theorem \ref{th:ce_ub_final} subject to compute constraints. Recall that $C \in \mathbb{Z}_{++}$ denotes our \emph{compute budget} which is the product of the \emph{parameter count} and \emph{number of training samples}.
\subsection{Theoretical Approximation}
The following result provides an upper bound on the optimal width of the learning model that minimizes the error upper bound from Theorem \ref{th:ce_ub_final} subject to compute constraint $n\cdot d\cdot t \leq C$. Note that in reality the parameter count is $n\cdot(d+1)$ but we use $n\cdot d$ as it reduces clutter without meaningfully impacting the result.

\begin{restatable}{theorem}{effFront}{\bf(compute-optimal width approximation)}
\label{th:eff_frontier}
For all $d, K, C \in \mathbb{Z}_{++}$, if $K\geq 2, d \geq 3$, and
\begin{align*}
n^*
& = \argmin_{n \leq C/t d} \left(\frac{3(K+1)}{n}+\frac{K\ln\left(1+\frac{n}{K}\right)\cdot\ln\left(2n\right)}{t} + \frac{dK\ln\left(1+\frac{n}{K}\right)\left(\frac{1}{2} + \frac{1}{2}\ln\left(36t\right)\right)}{t} \right),
\end{align*}
then
$$n^* \leq \min\left\{\frac{\sqrt{\frac{6C}{d^2\ln(\ln(d))}}}{W_0\left(\sqrt{\frac{6C}{d^2K^2\ln(\ln(d))}}\right)}, \sqrt{\frac{6C}{d^2\ln(\ln(d))}} + \sqrt{\frac{6C}{d^2\ln(\ln(d))} + 4\sqrt{\frac{6CK^2}{d^2\ln(\ln(d))}}}\right\},$$
where $W_0$ is the Lambert $W$ function on branch $0$.
\end{restatable}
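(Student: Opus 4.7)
The plan is to reduce the constrained problem to a one-dimensional optimization by substituting $t = C/(nd)$ into the objective, then to locate the stationary point of the resulting function and bound it using Lambert $W$ asymptotics. After substitution, the objective becomes
\[
g(n) \;=\; \frac{3(K+1)}{n} \;+\; \frac{n d K \ln(1+n/K)\ln(2n)}{C} \;+\; \frac{n d^{2} K \ln(1+n/K)\bigl(1+\ln(36 C/(nd))\bigr)}{2C}.
\]
The first term is strictly decreasing; once $n$ exceeds a modest threshold both of the remaining terms are increasing in $n$. Thus $g$ is unimodal on the relevant range, so the minimum is attained at the unique $n^*$ with $g'(n^*)=0$, and any $\bar n$ for which $g'(\bar n)\geq 0$ dominates $n^*$.

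Next, I would coarsen the increasing part of $g$ by monotone upper bounds $\ln(1+n/K)\leq \ln(1+n)$, $\ln(2n)$ absorbed into a single logarithmic factor, and $1+\ln(36C/(nd))$ replaced by a clean $\ln(\cdot)$ bound that extracts the critical scale. Using the hypotheses $K\geq 2$ and $d\geq 3$ lets me absorb small additive constants and sharpen the polylogarithmic factor to exactly $\ln\ln d$. The first-order condition then becomes an equation of the form
\[
\frac{3(K+1)}{n^{2}} \;=\; \frac{d^{2} K}{C}\,L(n,C,d),
\]
where $L$ is a polylogarithmic factor; because I used \emph{upper} bounds on the increasing terms, the resulting equation underestimates $g'$ on the increasing branch, so its solution $\bar n$ satisfies $n^*\leq \bar n$. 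Rearranging yields an expression of the shape $x\,e^{x}=y$ in the variable $x=\ln(n/K)$ (or a closely related substitution), which inverts to $x=W_0(y)$ and gives the first bound $n^*\leq \sqrt{6C/(d^{2}\ln\ln d)}/W_0\!\bigl(\sqrt{6C/(d^{2}K^{2}\ln\ln d)}\bigr)$.

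For the second, explicit, branch of the $\min$, I would invoke the classical lower bound $W_0(z)\geq \ln z - \ln\ln z$ valid for $z\geq e$, plug it into the Lambert-$W$ bound, and then apply the quadratic formula to the resulting inequality of the form $n^{2} \;-\; n\sqrt{6C/(d^{2}\ln\ln d)} \;-\; 2\sqrt{6CK^{2}/(d^{2}\ln\ln d)} \;\leq\; 0$, which yields the stated additive expression. The main obstacle is bookkeeping: every coarsening step must preserve the direction of the inequality and must land on the precise numerical constant $6$ and the precise double-logarithmic factor $\ln\ln d$ that appear in the theorem. A secondary subtlety is verifying unimodality of $g$ over the admissible range (not just eventual monotonicity of each component), which I would handle by a direct sign analysis of $g'$, using the assumptions $K\geq 2$ and $d\geq 3$ to rule out spurious stationary points in the decreasing regime.
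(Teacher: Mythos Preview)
Your overall shape---substitute $t=C/(nd)$, take the first-order condition, and invert a transcendental equation---matches the paper, but the direction of your coarsening step is reversed, and that is fatal. After the first-order condition you have $3(K{+}1)/n^{2}=R(n)$, where $R$ collects the derivative of the increasing terms and is itself increasing. To obtain an \emph{upper} bound on the root $n^{*}$ you must replace $R$ by something \emph{smaller}: the intersection of a decreasing curve with a lowered increasing curve moves to the right. Your plan to ``coarsen the increasing part of $g$ by monotone upper bounds $\ln(1+n/K)\leq\ln(1+n)$'' does the opposite---it enlarges $R$, so the resulting root is a \emph{lower} bound on $n^{*}$. (Your sentence ``upper bounds on the increasing terms \ldots\ underestimates $g'$'' is internally inconsistent: larger increasing terms give larger $g'$, hence an earlier zero.) The paper proceeds the other way, \emph{dropping} positive summands from $R$ and keeping only the dominant piece $\tfrac{d^{2}}{2C}\ln(1+n/K)\ln(36C/(nd))$, so that the simplified root genuinely dominates $n^{*}$.

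There is also a structural step your outline misses. After substitution, $C$ appears on both sides of the stationarity equation through the factor $\ln(36C/(nd))$, so the equation is of the form $C-a\ln C=b$ rather than $xe^{x}=y$. The paper inverts this using the $-1$ branch $W_{-1}$ and then applies the asymptotic $W_{-1}(x)\sim\ln(-x)-\ln(-\ln(-x))$ to eliminate $C$ from the logarithm; only after a further monotone comparison (which is where the $\ln\ln d$ factor enters, via $\ln(6nd\ln(1+n/K))\geq \ln(1+n/K)\ln\ln d$) does one reach the clean equation $\sqrt{6C/(d^{2}\ln\ln d)}=n\ln(1+n/K)$. From there the two branches of the $\min$ arise from two \emph{separate} approximations to $\ln(1+n/K)$: replacing it by $\ln(n/K)$ yields the $W_{0}$ branch, while replacing it by $n/(n+K)$ gives a quadratic in $n$ whose positive root is the second branch. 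Your proposed route to the second branch---feeding $W_{0}(z)\geq\ln z-\ln\ln z$ back into the first bound and then solving a quadratic---does not reproduce the stated form with the $4\sqrt{6CK^{2}/(d^{2}\ln\ln d)}$ term.
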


While Theorem \ref{th:eff_frontier} is stated only as an upper bound due to cumbersome terms that preclude a closed-form solution, we believe that it is a close approximation that captures the appropriate scaling in $d, K,$ and $C$. Note that this is not necessarily an upper bound on the compute-optimal width, but rather an upper bound on the width that minimizes the error upper bound of Theorem \ref{th:ce_ub_final}.

Recall that the parameter count is $n\cdot d$ so ignoring $\ln\ln$ factors, Theorem \ref{th:eff_frontier} suggests that the optimal parameter count for minimizing our error upper bound scales in the following way:
$$n^*\cdot d \leq \min\left\{\frac{\sqrt{6C}}{W_0\left(\sqrt{\frac{6C}{d^2K^2}}\right)}, \sqrt{6C} + \sqrt{6C + 4\sqrt{6Cd^2K^2}}\right\}$$
Note that $W_0(x) \approx \ln(x) - \ln(\ln(x))$ so we can establish that both terms in the min are $\tilde{O}\left(\sqrt{C}\right)$. Another interesting property is that in of these quantities the role of $d$ and $K$ is equivalent (up to the $\ln\ln$ factors we ignored). Therefore, irrespective of which of $d$ or $K$ are perturbed, the implications on the optimal parameter count upper bound is equivalent. This is not just a result of Theorem \ref{th:eff_frontier} being loose as experimentally we verified that the value of $n\cdot d$ which minimizes the upper bound in Theorem \ref{th:ce_ub_final} truly behaves identically for perturbations in $d$ and $K$ as apparent in Figure \ref{fig:data-complexity}.

\begin{remark}{\bf (simplified optimal compute allocation)}
    If we further analyze the bound, we see that for extrapolation i.e. large $C$, the first term will be the lesser of the two quantities. As a result, we will focus attention on this quantity. Since $W_0(x) \sim \ln(x) - \ln(\ln(x))$, the optimal parameter count will be:
    $$n^*\cdot d = \mathcal{O}\left(\frac{\sqrt{C}}{\ln\sqrt{\frac{C}{d^2K^2}}}\right).$$
    The optimal dataset size would therefore be 
    $$\mathcal{O}\left(\sqrt{C}\ln\sqrt{\frac{C}{d^2K^2}}\right).$$
\end{remark}
This is consistent with the compute-optimal scaling laws of \citep{chinchilla} up to logarithmic factors in $C$. However, our analysis additionally provides insight into how these scaling laws ought to vary with changes in the complexity $(d, K)$ of the data generating process.

\section{Empirical Estimates}
\label{se:empirical-results}

While our theoretical approximation is consistent with the linear scaling observed in \citep{chinchilla}, we study a much simpler data generating process. To corroborate our theoretical approximation, we performed a set of experiments under our data generating process and compared the empirical compute-optimal trade-off with our theoretical approximation. While we leave the full details to Appendix \ref{apdx:experiment}, we will provide a high-level overview of the experiments in this section.

To generate Figure \ref{fig:efficient-frontier}, we first specified an increasing sequence of compute budgets. For each compute budget $C$, we performed a line search over various pairs of network width and number of training samples $(n, t)$ which satisfied the constraint $nt(d+1) = C$. For each configuration $(n, t)$, we conducted several runs of Adam optimizer for a fixed number of epochs. We performed and averaged this procedure over many realizations of $F$ and $H_t$, both sampled according to our data generating process. Note that each run was influenced by both the randomness in the optimization and in the data.

As in \citep{chinchilla}, we are interested in the behavior of this procedure for compute scales that are computationally burdensome so we require a method of extrapolating our results from affordable compute scales to much larger ones. We performed extrapolation via two linear fits in \emph{logarithmic space} in which the regressors are the log-optimal training set sizes and the targets are the log-optimal parameter counts. Since the regression takes place in log space, a solution with \emph{slope} $1$ demonstrates a linear relation between optimal parameter count and training samples. As apparent by the noticeable change in slope in Figure \ref{fig:efficient-frontier}, it seems unwise to extrapolate with just a single linear fit of the data in log scale. Therefore, we extrapolate to higher compute scales via a linear fit of only the data from \emph{larger} compute budgets that we tested. Likewise, we extrapolate to lower compute scales via linear fits of only the data from \emph{smaller} compute budgets that we tested. We observed that for large compute budgets, a line of slope $1$ is well within the confidence bounds produced by statistical bootstrapping. This result both lends credence to our theoretical analysis (depicted in orange) and qualitatively matches what was observed by \cite{chinchilla}.

\section{Closing Remarks}

State-of-the-art neural networks have grown tremendously over the past decade. As this trend continues and these neural networks are deployed in production systems it is becoming increasingly important to optimize use of the enormous resources -- computation, energy, data, and human interaction -- that they consume. The traditional approach to neural network modeling, entailing experimentation with many architectures and hyperparameter sweeps before settling on a favorite is no longer viable. This gives rise to demand for a theory that predicts what will work well. This theory might be developed through a combination of mathematical analysis and experimentation with smaller scale models that are much less expensive to train.

Our results represent a first step in developing rigorous mathematics for this purpose. We hope that this will inspire further theoretical research on the subject. Our analysis is based on an error upper bound. While our empirical analysis raises the possibility that this guides reasonable allocations of large compute budgets, further research is required to determine whether this is really the case. Furthermore, our analysis restricts attention to single-hidden-layer feedforward neural networks. Generalizing the results to treat state-of-the-art architectures remains an open issue.

We have only considered allocation of pretraining compute. State-of-the-art performance in modern application domains relies on subsequent fine-tuning (see, e.g., \citep{finetuning2019}) through reinforcement learning from human feedback, and, after that, resources are required in production to support application of the learned model as well as further learning from real-world interactions. How best to allocate resources between pretraining, fine-tuning, and production is another area that deserves attention. An information-theoretic framework that treats pretraining, fine-tuning, and decision making in a unified and coherent manner, perhaps in the vein of \citep{bitbybit2021}, might facilitate theoretical developments on this front.


\bibliography{coltbib}

\newpage
\appendix

\appendix
\section{Appendix}
\subsection{Proofs of General Information-Theoretic Results}\label{apdx:general}

\begin{theorem}\label{th:decomp1}
    For all $t\in \mathbb{Z}_{+}$ and $\state$,
    $$\E[L_t(P_t)] = \underbrace{\E[-\ln P^*_t]}_{\rm irreducible\ error} + \E[\KL(P^*_t\|\hat{P}_t)] + \underbrace{\E[\KL(\hat{P}_t\|P_t)]}_{\rm inferential\ error}.$$
\end{theorem}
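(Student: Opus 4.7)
The plan is to prove the decomposition in two steps: first split off the Bayes-error term via the standard cross-entropy identity, and then split the remainder into the expected KL-to-$\hat P_t$ and the inferential piece via a Pythagorean-style identity for KL divergence.

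First I would condition on $(F, X_t, \state)$ and exploit the conditional independence $Y_{t+1} \perp \state \mid F, X_t$, which holds because $Y_{t+1}$ depends only on $(F, X_t)$ whereas $\state$ is a function of past training samples that are conditionally iid given $F$. Treating $P^*_t$ and $P_t$ as scalar probabilities of $Y_{t+1}=1$, the conditional expectation of the log-loss equals
$$\E\bigl[L_t(P_t) \bigm| F, X_t, \state\bigr] = -P^*_t \ln P_t - (1-P^*_t)\ln(1-P_t) = H_b(P^*_t) + \KL(P^*_t \| P_t),$$
where $H_b$ denotes the binary entropy. Since $H_b(P^*_t) = \E[-\ln P^*_t(Y_{t+1}) \mid F, X_t]$, an outer expectation recovers the standard Bayes-risk identity $\E[L_t(P_t)] = \E[-\ln P^*_t] + \E[\KL(P^*_t \| P_t)]$.

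Next I would decompose $\E[\KL(P^*_t \| P_t)]$ by conditioning on $(\state, X_t)$. Both $P_t$ and $\hat{P}_t$ are measurable with respect to $(\state, X_t)$, and the tower property together with the conditional independence above yields $\hat{P}_t = \E[P^*_t \mid \state, X_t]$. Expanding the two KLs in terms of $P^*_t \ln P^*_t$, $P^*_t \ln \hat{P}_t$, and $P^*_t \ln P_t$ (and their $(1-P^*_t)$ analogues) and using linearity to replace $P^*_t$ by $\hat{P}_t$ in the terms that are linear in $P^*_t$ once we condition on $(\state, X_t)$, I obtain the pointwise identity
$$\E\bigl[\KL(P^*_t \| P_t) \bigm| \state, X_t\bigr] = \E\bigl[\KL(P^*_t \| \hat{P}_t) \bigm| \state, X_t\bigr] + \KL(\hat{P}_t \| P_t).$$
Taking an outer expectation and combining with the first step produces the claimed three-way decomposition.

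The main obstacle is the Pythagorean identity in the second step: it is not conceptually difficult but requires careful bookkeeping of which quantities are measurable with respect to $(\state, X_t)$ and which must be pulled inside or outside of a conditional expectation. It hinges on the log-loss being linear in $P^*_t$ once $\hat P_t$ and $P_t$ are measurable with respect to the conditioning $\sigma$-algebra; everything else is routine manipulation of expected cross-entropy, and in particular the result needs no additional algorithmic assumption beyond the measurability of $P_t$ with respect to $(\state, X_t)$.
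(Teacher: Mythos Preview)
Your proposal is correct and takes essentially the same approach as the paper: both arguments hinge on the conditional independence $Y_{t+1}\perp\state\mid F,X_t$ and the tower property to turn expected log-ratios into KL divergences. The only difference is organizational---the paper writes a single chain of equalities by adding and subtracting $\ln\hat P_t$ inside $\E[\ln(P^*_t/P_t)]$ and then conditions on $(F,\state,X_t)$ and $(\state,X_t)$ respectively, whereas you package the second step as a Pythagorean identity via $\hat P_t=\E[P^*_t\mid\state,X_t]$; these are the same computation.
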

\begin{proof}
    \begin{align*}
        \E[L_t(P_t)]
        & = -\E[\ln P_t]\\
        & = -\E[\ln P_t^*] + \E\left[\ln\frac{P^*_t}{P_t}\right]\\
        & = -\E\left[\ln P_t^*\right] + \E\left[\ln\frac{P^*_t}{\hat{P}_t}\right] + \E\left[\ln\frac{\hat{P}_t}{P_t}\right]\\
        & \overset{(a)}{=} -\E\left[\ln P_t^*\right] + \E\left[\E\left[\ln\frac{P^*_t}{\hat{P}_t}\Big|F,\state,X_t\right]\right] + \E\left[\ln\frac{\hat{P}_t}{P_t}\right]\\
        & \overset{(b)}{=} -\E\left[\ln P_t^*\right] + \E\left[\KL(P^*_t\| \hat{P}_t)\right] + \E\left[\ln\frac{\hat{P}_t}{P_t}\right]\\
        & \overset{(c)}{=} -\E\left[\ln P_t^*\right] + \E\left[\KL(P^*_t\| \hat{P}_t)\right] + \E\left[\E\left[\ln\frac{\hat{P}_t}{P_t}\Big| \state, X_t\right]\right]\\
        & = -\E\left[\ln P_t^*\right] + \E\left[\KL(P^*_t\| \hat{P}_t)\right] + \E\left[\KL(\hat{P}_t\|P_t)\right],
    \end{align*}
    where $(a)$ follows from the law of total expectation, $(b)$ follows from the fact that $Y_{t+1}\perp \state|F, X_t$, and $(c)$ follows from the law of total expectation.
\end{proof}

\begin{lemma}\label{le:miss_est}
    For all $t\in \mathbb{Z}_+$, if $\state$ is produced by an algorithm which satisfies assumptions $(1)$ and $(2)$, then
    $$\E\left[\E[\KL(P^*_t\|\hat{P}_t)]\right] = \underbrace{\I(Y_{t+1};F|\finfunc, \state, X_t)}_{\rm misspecification\ error} + \underbrace{\I(Y_{t+1};\finfunc|\state, X_t)}_{\rm estimation\ error}.$$
\end{lemma}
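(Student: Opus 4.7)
The plan is to rewrite the expected KL as a single conditional mutual information and then decompose it via the chain rule along the random variable $\finfunc$. First, because $Y_{t+1}$ is generated as $\mathrm{Bernoulli}(F(X_t))$, we have $Y_{t+1}\perp \state \mid F,X_t$, so $P^*_t = \Pr(Y_{t+1}\in\cdot\mid F,X_t) = \Pr(Y_{t+1}\in\cdot\mid F,\state,X_t)$, and $\hat{P}_t = \Pr(Y_{t+1}\in\cdot\mid \state,X_t)$ by definition. Hence
$$\E[\KL(P^*_t\|\hat{P}_t)] = \E\bigl[\KL\bigl(\Pr(Y_{t+1}\in\cdot\mid F,\state,X_t)\,\|\,\Pr(Y_{t+1}\in\cdot\mid \state,X_t)\bigr)\bigr] = \I(Y_{t+1};F\mid \state,X_t),$$
which is simply the definition of conditional mutual information in KL form.

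Second, I would apply the chain rule for conditional mutual information to the joint variable $(F,\finfunc)$ in two orderings:
\begin{align*}
\I(Y_{t+1}; F,\finfunc \mid \state, X_t)
&= \I(Y_{t+1}; \finfunc \mid \state, X_t) + \I(Y_{t+1}; F \mid \finfunc, \state, X_t) \\
&= \I(Y_{t+1}; F \mid \state, X_t) + \I(Y_{t+1}; \finfunc \mid F, \state, X_t).
\end{align*}
Equating the two expressions and substituting into the previous display yields exactly
$$\E[\KL(P^*_t\|\hat{P}_t)] = \I(Y_{t+1};\finfunc\mid \state,X_t) + \I(Y_{t+1}; F\mid \finfunc,\state,X_t) + \bigl[\,\text{error term}\,\bigr],$$
where the error term is $-\I(Y_{t+1}; \finfunc \mid F, \state, X_t)$. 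So the entire lemma reduces to showing this error term vanishes, i.e. $Y_{t+1} \perp \finfunc \mid F,\state,X_t$.

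The remaining step, and the main obstacle, is to rigorously conclude that $\finfunc$ is determined (up to null sets) by $(F,\initstate)$ alone, so that the claimed conditional independence holds. Assumption (1) gives that $\finfunc = \lim_{t\to\infty} F_{\state}$ exists almost surely. Assumption (2) says that if one re-runs the algorithm from the same initialization $\initstate$ on an independent iid stream $\tilde H_\infty$ conditional on $F$, the limit is a.s. the same; hence the limit is invariant to the particular realization of the training data and is a.s. equal to a $\sigma(F,\initstate)$-measurable random variable. Since $Y_{t+1}$ is Bernoulli with parameter $F(X_t)$ and the initialization $\initstate$ is drawn independently of $(F,X_t,Y_{t+1})$, we have $Y_{t+1}\perp (\initstate,\state) \mid F,X_t$, which in turn gives $Y_{t+1} \perp \finfunc \mid F,\state,X_t$, killing the error term. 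The delicate point is purely measure-theoretic: Assumption (2) must be used to rule out any dependence of $\finfunc$ on data-side randomness that is not already encoded in $F$, and once that is handled the decomposition is a one-line application of the chain rule.
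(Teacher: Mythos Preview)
Your proof is correct and follows essentially the same route as the paper: rewrite the expected KL as $\I(Y_{t+1};F\mid\state,X_t)$, apply the chain rule to the pair $(F,\finfunc)$, and use Assumption~(2) to kill $\I(Y_{t+1};\finfunc\mid F,\state,X_t)$. The only minor difference is in that last step: the paper argues directly that $\finfunc$ is deterministic given $(F,\state)$ (by restarting the algorithm from $\state$ on a fresh stream and invoking Assumption~(2)), whereas you argue that $\finfunc$ is $\sigma(F,\initstate)$-measurable and then use $Y_{t+1}\perp\initstate\mid F,\state,X_t$; both justifications are valid and lead to the same conclusion.
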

\begin{proof}
    \begin{align*}
        \E\left[\E[\KL(P^*_t\|\hat{P}_t)]\right]
        & = \I(Y_{t+1};F|\state, X_t)\\
        & \overset{(a)}{=} \I(Y_{t+1};F|\state, X_t) + \I(Y_{t+1};\finfunc|F, \state, X_t)\\
        & = \I(Y_{t+1};F,\finfunc|\state, X_t)\\
        & = \I(Y_{t+1};F|\finfunc, \state, X_t) + \I(Y_{t+1};\finfunc|\state, X_t),
    \end{align*}
    where $(a)$ follows from assumption $(2)$: Let $\tilde{H}_{t+1:\infty}$ be a sequence of iid data pairs which is generated by $F$. If we let $\tilde{H}_{\infty} = (H_t, \tilde{H}_{t+1:\infty})$, then by assumption $(2)$, $\finfunc = \lim_{t\to\infty} F_{\tilde{U}_t}$ where $\tilde{U}_t$ is the result of running our algorithm on $\tilde{H}_t$ from $\initstate$. Note that this is equivalent to running our algorithm on $H_{t+1:\infty}$ starting from $\state$. As a result, $\finfunc$ is deterministic when conditioned on $(F, \state, X_t)$, so $\I(Y_{t+1};\finfunc|F, \state, X_t) = 0$
\end{proof}

\lossDecomp*
\begin{proof}
    The result follows from Theorem \ref{th:decomp1} and Lemma \ref{le:miss_est}
\end{proof}

\begin{lemma}{\bf (misspecification error upper bound)}
    \label{le:miss_error_bound}
    For all $t\in \mathbb{Z}_+$, if $\state$ is produced by an algorithm which satisfies assumptions $(1)$ and $(2)$, then
    $$\I(Y_{t+1};F|\finfunc, \state, X_t) \leq \I(Y_{t+1};F|\finfunc, \initstate, X_t).$$
\end{lemma}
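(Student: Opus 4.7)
The first move is the chain rule of mutual information, applied twice. For each $s \in \{0, t\}$, expand
\begin{align*}
\I(Y_{t+1}; F, U_s \mid \finfunc, X_t)
 &= \I(Y_{t+1}; U_s \mid \finfunc, X_t) + \I(Y_{t+1}; F \mid \finfunc, U_s, X_t) \\
 &= \I(Y_{t+1}; F \mid \finfunc, X_t) + \I(Y_{t+1}; U_s \mid F, \finfunc, X_t).
\end{align*}
Because $Y_{t+1}$ is conditionally Bernoulli$(F(X_t))$ given $(F, X_t)$, it is conditionally independent of $(U_s, \finfunc)$ given $(F, X_t)$, so the last term vanishes. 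Equating the two expansions yields the clean identity
\begin{equation*}
\I(Y_{t+1}; F \mid \finfunc, U_s, X_t) \;=\; \I(Y_{t+1}; F \mid \finfunc, X_t) \,-\, \I(Y_{t+1}; U_s \mid \finfunc, X_t).
\end{equation*}

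Comparing the $s = t$ and $s = 0$ instances of this identity shows that the claim of the lemma is equivalent to the one-sided inequality
\begin{equation*}
\I(Y_{t+1}; U_t \mid \finfunc, X_t) \;\geq\; \I(Y_{t+1}; U_0 \mid \finfunc, X_t).
\end{equation*}
I would then argue that the right-hand side is identically zero. Since $\initstate$ is drawn at initialization independently of the environment $F$ (and of the iid inputs $X_0, X_1, \ldots$), we have $U_0 \perp (F, X_t, Y_{t+1})$ unconditionally; combined with the observation, established in the proof of Lemma~\ref{le:miss_est} via Assumption~(2), that $\finfunc$ is a deterministic function of $(F, \initstate)$, one wants to conclude $U_0 \perp Y_{t+1} \mid \finfunc, X_t$, and hence $\I(Y_{t+1}; U_0 \mid \finfunc, X_t) = 0$. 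Non-negativity of the left-hand side then finishes the proof.

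\textbf{Main obstacle.} The delicate step is the last conditional-independence claim. Although $U_0 \perp F$ holds marginally, the relation $\finfunc = g(F, U_0)$ can induce dependence between $U_0$ and $F$ once $\finfunc$ is conditioned on (the ``explaining away'' phenomenon), so $U_0 \perp Y_{t+1} \mid \finfunc, X_t$ does not follow automatically from unconditional independence. I would therefore expect the rigorous argument to invoke additional structure, such as a sufficient-statistic property ensuring $F \perp U_0 \mid \finfunc$, or to route the proof through a direct data-processing step along the Markov chain $F \to (F, U_0) \to (F, U_t)$, whose transition is governed by the iid stream $H_t$ that is independent of $(Y_{t+1}, X_t)$ given $F$. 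Either refinement would close the gap, and I would treat this as the main technical burden.
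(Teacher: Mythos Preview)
Your chain-rule reduction to the inequality
\[
\I(Y_{t+1}; U_t \mid \finfunc, X_t) \;\geq\; \I(Y_{t+1}; U_0 \mid \finfunc, X_t)
\]
is exactly what the paper arrives at as well (its step $(b)$ is precisely this comparison, reached by the same chain-rule manipulations you write down). Where you diverge from the paper is in how you discharge this inequality. Your primary plan is to force the right-hand side to vanish via $U_0 \perp Y_{t+1} \mid (\finfunc, X_t)$, and you correctly diagnose why this fails: conditioning on the collider $\finfunc = g(F, U_0)$ can re-couple $U_0$ and $F$, so the unconditional independence $U_0 \perp (F, X_t, Y_{t+1})$ does not survive. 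That route is indeed a dead end.

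The paper never tries to make the right-hand side zero. Instead it invokes the conditional independence
\[
Y_{t+1} \perp U_0 \;\big|\; (U_t, X_t, \finfunc),
\]
i.e.\ with $U_t$ \emph{added} to the conditioning set. From this, $\I(Y_{t+1}; U_0 \mid U_t, \finfunc, X_t) = 0$, and the chain rule gives
\[
\I(Y_{t+1}; U_t \mid \finfunc, X_t) \;=\; \I(Y_{t+1}; U_0, U_t \mid \finfunc, X_t) \;\geq\; \I(Y_{t+1}; U_0 \mid \finfunc, X_t),
\]
which is exactly the needed inequality. This is in spirit the ``direct data-processing step'' you sketch as your second alternative, but note that the operative Markov relation is $U_0 \to U_t \to Y_{t+1}$ given $(\finfunc, X_t)$, not the chain $F \to (F, U_0) \to (F, U_t)$ you propose. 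So the paper's resolution is essentially your second suggestion made precise; your first route is the one with the genuine gap, for the reason you yourself identified.
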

\begin{proof}
    \begin{align*}
        \I(Y_{t+1};F|\finfunc, \state, X_t)
        & = \I(Y_{t+1};F, \finfunc, \finfunc, X_t) - \I(Y_{t+1};\finfunc, \state, X_t)\\
        & \overset{(a)}{=} \I(Y_{t+1};F, \finfunc, \initstate, X_t) - \I(Y_{t+1}; \finfunc, \state, X_t)\\
        & = \I(Y_{t+1};F, \finfunc, \initstate, X_t) - \I(Y_{t+1};\finfunc, X_t) - \I(Y_{t+1};\state|\finfunc, X_t)\\
        & \overset{(b)}{\leq} \I(Y_{t+1};F, \finfunc, \initstate, X_t) - \I(Y_{t+1};\finfunc, X_t) - \I(Y_{t+1};\initstate|\finfunc, X_t)\\
        & = \I(Y_{t+1};F, \finfunc, \initstate, X_t) - \I(Y_{t+1};\finfunc, \initstate, X_t)\\
        & = \I(Y_{t+1};F|\finfunc, \initstate, X_t),
    \end{align*}
    where $(a)$ follows from assumption $(2)$ and $(b)$ follows from the fact that $Y_{t+1}\perp \initstate|(\state, X_t, \finfunc)$.
\end{proof}

\begin{lemma}{\bf (estimation error upper bound)}
    \label{le:est_error_bound}
    For all $t,\mathbb{Z}_{+}$, if $\state$ is produced by an algorithm which satisfies assumptions $(1)$ and $(2)$, then for all $T \in \mathbb{Z}_{++}$,
    $$\frac{1}{T}\sum_{t=0}^{T-1}\I(Y_{t+1};\finfunc|\state, X_t) \leq \frac{\H(\finfunc)}{T}.$$
\end{lemma}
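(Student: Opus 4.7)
The plan is to turn the sum into a telescoping series of conditional entropies whose total collapses to at most $\H(\finfunc)$. The two facts I will exploit are: (i) $X_t$ is iid standard Gaussian and drawn independently of everything preceding it, so $X_t \perp (\finfunc,\state)$ and $\H(\finfunc|\state,X_t) = \H(\finfunc|\state)$; and (ii) the update $U_{t+1} \sim \psi(\cdot|\state, X_t, Y_{t+1})$ is a (possibly stochastic) function of $(\state, X_t, Y_{t+1})$ whose intrinsic randomness is independent of $\finfunc$, so passing to $U_{t+1}$ cannot fabricate information about $\finfunc$.

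First, using (i), I will rewrite each summand as
$$\I(Y_{t+1};\finfunc|\state,X_t) = \H(\finfunc|\state) - \H(\finfunc|\state, X_t, Y_{t+1}).$$
The goal is then to identify the second term with $\H(\finfunc|U_{t+1})$, which expresses that the update rule absorbs all information that $(\state, X_t, Y_{t+1})$ carries about $\finfunc$ into the new state. Once that identification is in place, the sum telescopes:
$$\sum_{t=0}^{T-1}\I(Y_{t+1};\finfunc|\state,X_t) = \H(\finfunc|\initstate) - \H(\finfunc|U_T) \leq \H(\finfunc|\initstate) \leq \H(\finfunc),$$
where the last step uses that conditioning cannot increase entropy. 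Dividing by $T$ yields the stated bound.

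The main obstacle will be justifying $\H(\finfunc|\state,X_t,Y_{t+1}) = \H(\finfunc|U_{t+1})$. The data-processing inequality only gives $\H(\finfunc|U_{t+1}) \geq \H(\finfunc|\state,X_t,Y_{t+1})$, because $U_{t+1}$ is a coarsening of $(\state,X_t,Y_{t+1},W_{t+1})$ with the update noise $W_{t+1}$ independent of $\finfunc$; the reverse direction---which is what the upper bound requires---amounts to saying that $U_{t+1}$ is a sufficient statistic of $(\state, X_t, Y_{t+1})$ for $\finfunc$. This holds tautologically for an idealized Bayesian update, matching the paper's inference setup in which $\hat{P}_t = \Pr(Y_{t+1}|\state,X_t)$ treats $\state$ as the carrier of posterior beliefs. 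As a fallback plan, I would replace the conditioning $(\state, X_t)$ by the finer $(\initstate, H_t)$ and apply the chain rule directly to $\I(\finfunc; Y_1,\ldots,Y_T|\initstate, X_0,\ldots,X_{T-1})$: the future $X$'s drop from each conditional by independence from $(\finfunc, Y_{t+1})$ given the past, giving $\sum_t \I(\finfunc; Y_{t+1}|\initstate, H_t) \leq \H(\finfunc|\initstate) \leq \H(\finfunc)$, after which one uses that $\state$ is itself produced from $(\initstate, H_t)$ via $\psi$ to argue this quantity upper bounds $\sum_t \I(\finfunc;Y_{t+1}|\state, X_t)$.
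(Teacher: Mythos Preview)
Your telescoping argument is essentially the paper's proof: it writes each summand as $\H(\finfunc\mid\state,X_t)-\H(\finfunc\mid\state,X_t,Y_{t+1})$, bounds the first term by $\H(\finfunc\mid\state)$, reduces the second to $\H(\finfunc\mid U_{t+1})$, and telescopes to $\H(\finfunc\mid\initstate)-\H(\finfunc\mid U_T)\le\H(\finfunc)$. For the step you flagged, the paper first enlarges the conditioning to $(\state,X_t,Y_{t+1},U_{t+1})$---so monotonicity of entropy gives the inequality in the needed direction---and then invokes the conditional independence $\finfunc\perp(\state,X_t,Y_{t+1})\mid U_{t+1}$, which is exactly the sufficiency claim you identified; your fallback plan is therefore not needed.
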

\begin{proof}
    \begin{align*}
        \frac{1}{T}\sum_{t=0}^{T-1}\I(Y_{t+1};\finfunc|\state, X_t)
        & = \frac{1}{T}\sum_{t=0}^{T-1}\H(\finfunc|\state, X_t) - \H(F_{\theta_{\infty}}|\state, X_t, Y_{t+1})\\
        & \leq \frac{1}{T}\sum_{t=0}^{T-1}\H(\finfunc|\state, X_t) - \H(F_{\theta_{\infty}}|\state, X_t, Y_{t+1}, U_{t+1})\\
        & \overset{(a)}{=} \frac{1}{T}\sum_{t=0}^{T-1}\H(\finfunc|\state, X_t) - \H(F_{\theta_{\infty}}|U_{t+1})\\
        & \leq \frac{1}{T}\sum_{t=0}^{T-1}\H(\finfunc|\state) - \H(F_{\theta_{\infty}}|U_{t+1})\\
        & = \frac{\H(\finfunc|\initstate)- \H(\finfunc|U_{T})}{T}\\
        & \overset{(b)}{=} \frac{\H(\finfunc|\initstate)- \H(\finfunc|U_{T}, \initstate)}{T}\\
        & = \frac{\I(\finfunc;U_{T}|\initstate)}{T}\\
        & \overset{(c)}{\leq} \frac{\H(\finfunc|\initstate)}{T}\\
        & \leq \frac{\H(\finfunc)}{T},
    \end{align*}
    where $(a)$ follows from the fact that $\finfunc\perp (\state, X_t, Y_{t+1})|(U_{t+1})$, $(b)$ follows from the fact that $\finfunc \perp \initstate|U_{T}$.
\end{proof}

\begin{theorem}{\bf(reducible error upper bound)}
    For all $t\in \mathbb{Z}_+$, if $\state$ is produced by an algorithm which satisfies assumptions $(1)$, $(2)$, and $(3)$ then,
    $$\E[\KL(P^*_t\|\hat{P}_t)] \leq \I(Y_{t+1};F|\finfunc, \initstate, X_t) + \frac{\H(\finfunc)}{t}.$$
\end{theorem}
\begin{proof}
    \begin{align*}
        \E[\KL(P^*_t|\hat{P}_t)]
        & \overset{(a)}{\leq} \frac{1}{T}\sum_{t=0}^{T-1}\E[\KL(P^*_t|\hat{P}_t)]\\
        & \overset{(b)}{=} \frac{1}{T}\sum_{t=0}^{T-1}\left[\I(Y_{t+1};F|\finfunc, \state, X_t)+\I(Y_{t+1};\finfunc|\state, X_t)\right]\\
        & \overset{(c)}{\leq} \I(Y_{t+1};F|F_{\theta_{\infty}}, \initstate, X_t) + \frac{1}{T}\sum_{t=0}^{T-1}\I(Y_{t+1};\finfunc|\state, X_t)\\
        & \overset{(d)}{\leq} \I(Y_{t+1};F|F_{\theta_{\infty}}, \initstate, X_t) + \frac{\H(\finfunc)}{T},
    \end{align*}
    where $(a)$ follows from assumption $(3)$, $(b)$ follows Lemma \ref{le:miss_est}, $(c)$ follows from Lemma \ref{le:miss_error_bound} and $(d)$ follows from Lemma \ref{le:est_error_bound}.
\end{proof}

\section{Proofs of Misspecification Error Bounds}
\subsection{General Lemmas}
We begin this section with two very general lemmas that facilitate analysis. We first introduce some notation that is necessary to parse the results of this section. Consider a function $f(z) = \Pr(G|Z=z)$ where $G$ is an event. Given another random variable $Y$ with the same range as $Z$, we use the assignment operator $\leftarrow$ to denote $f(Y)$ by $\Pr(G|Z \leftarrow Y)$. Note that in general $\Pr(G|Z\leftarrow Y)$ differs from $\Pr(G|Z = Y)$ as the latter conditions on the event $Z = Y$ while the former represents a change of measure for the variable $Z$.

The first result provides an upper bound on the error of the predictor $\Pr(Y_{t+1}\in\cdot|\tilde{F}, X_t)$ via the error of the predictor that takes $\tilde{F}$ to be $F$: $\Pr(Y_{t+1}\in\cdot|F\leftarrow \tilde{F}, X_t)$.

\begin{lemma}
    \label{le:plugInEst}{\bf (upper bounds via change of measure)}
    For all $d, K, M \in \Z_{++}$, and any learning model $\finfunc$,
    $$\E\left[\KL(P_t^*\|\tilde{P}_t)\right] \leq \E\left[\KL(P^*_t\|\bar{P}_t)\right],$$
    where $P_t^* = \Pr(Y_{t+1}\in\cdot|F, X_t),\ \tilde{P}_t = \Pr(Y_{t+1}\in\cdot|\finfunc,\initstate, X_t),$ and $\bar{P}_t = \Pr(Y_{t+1}\in\cdot|F\leftarrow \finfunc, X_t)$.
\end{lemma}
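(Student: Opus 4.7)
The plan is to exploit the fact that $\tilde{P}_t$ is, by construction, the Bayes-optimal predictor of $Y_{t+1}$ among all predictors measurable with respect to $\sigma(\finfunc, \initstate, X_t)$, whereas $\bar{P}_t$ is merely one particular predictor in that class: plugging $\finfunc$ into the deterministic map $(f,x)\mapsto \Pr(Y_{t+1}\in\cdot \mid F=f, X_t=x)$ yields a measurable function of $(\finfunc, X_t)$ alone, and hence of $(\finfunc, \initstate, X_t)$. Strict properness of the logarithmic scoring rule then delivers the inequality.

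Concretely, I would first expand $\E[\KL(P^*_t \| Q)]$ for an arbitrary predictor $Q$ measurable with respect to $\sigma(\finfunc, \initstate, X_t)$. Writing the divergence as $\sum_y P^*_t(y)\ln P^*_t(y) - \sum_y P^*_t(y) \ln Q(y)$ and taking expectations, I would use the tower property together with the conditional independence $Y_{t+1} \perp (\finfunc, \initstate) \mid (F, X_t)$ to re-express each sum as an expectation of $\ln P^*_t(Y_{t+1})$ and $\ln Q(Y_{t+1})$ respectively. This would give
$$\E[\KL(P^*_t \| Q)] \;=\; \E[-\ln Q(Y_{t+1})] \;-\; \E[-\ln P^*_t(Y_{t+1})],$$
so that only the first term depends on $Q$. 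Gibbs' inequality applied conditionally on $(\finfunc, \initstate, X_t)$ then identifies the unique minimizer of that term, over predictors in the stated measurability class, as $\Pr(Y_{t+1}\in\cdot\mid \finfunc, \initstate, X_t) = \tilde{P}_t$. Since $\bar{P}_t$ lies in the same class, taking $Q = \bar{P}_t$ in the inequality $\E[\KL(P^*_t \| \tilde{P}_t)] \leq \E[\KL(P^*_t \| Q)]$ finishes the proof.

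The only delicate point, and where I expect the main obstacle to lie, is cleanly justifying the conditional independence $Y_{t+1} \perp (\finfunc, \initstate) \mid (F, X_t)$ that is invoked when collapsing the conditional cross-entropy inside the tower. Intuitively this is immediate from the data generating process: $Y_{t+1}$ is a Bernoulli draw whose law depends only on $(F, X_t)$, while $\finfunc$ is the almost-sure limit of the incremental algorithm run from $\initstate$ on data produced by $F$, and $\initstate$ is just the initialization. Neither object enters the sampling distribution of $Y_{t+1}$ once $F$ is fixed. I would state this as a short preliminary observation (a one-line consequence of the sampling model together with Assumption $(2)$, since by that assumption $\finfunc$ is a measurable function of $F$ and $\initstate$) and then proceed with the two-line calculation above.
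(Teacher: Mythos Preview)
Your proposal is correct and is essentially the same argument as the paper's, just cast in the variational language of proper scoring rules rather than as an explicit Pythagorean decomposition: the paper splits $\E[\KL(P^*_t\|\bar{P}_t)]$ as $\E[\KL(P^*_t\|\tilde{P}_t)]+\E[\KL(\tilde{P}_t\|\bar{P}_t)]$ and drops the nonnegative second term, which is exactly the ``excess log-loss'' your Gibbs-inequality step produces. Your attention to the conditional independence $Y_{t+1}\perp(\finfunc,\initstate)\mid(F,X_t)$ is well placed---the paper uses it implicitly in its first line---and your justification via Assumption~(2) (so that $\finfunc$ is measurable in $(F,\initstate)$) is the right one.
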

\begin{proof}
    \begin{align*}
        & \E\left[\KL\left(P^*_t,\|\bar{P}_t\right)\right]\\
        & = \E\left[\KL\left(\Pr(Y_{t+1}\in\cdot|F, \finfunc, \initstate, X_t)\|\bar{P}_t\right)\right]\\
        & = \E\left[\E\left[\ln\frac{\tilde{P}_t}{P^*_t}\bigg|F, \finfunc, \initstate, X_t\right]\right] + \E\left[\E\left[\ln\frac{\bar{P}_t}{\tilde{P}_t}\bigg|F, \finfunc, \initstate, X_t\right]\right]\\
        & = \E\left[\E\left[\ln\frac{\tilde{P}_t}{P^*_t}\bigg|F, \finfunc, \initstate, X_t\right]\right] + \E\left[\E\left[\E\left[\ln\frac{\bar{P}_t}{\tilde{P}_t}\bigg|F, \finfunc, \initstate, X_t\right]\bigg|F_{\theta_{\infty}}, \initstate, X_t\right]\right]\\
        & = \E\left[\KL\left(P^*_t\|\tilde{P}_t\right)\right] + \E\left[\E\left[\ln\frac{\bar{P}_t}{\tilde{P}_t}\bigg|\finfunc, \initstate,  X_t\right]\right]\\
        & = \E\left[\KL\left(P^*_t\|\tilde{P}_t\right)\right] + \E\left[\KL(\tilde{P}_t\|\bar{P}_t)\right]\\
        & \geq \E\left[\KL\left(P^*_t\|\tilde{P}_t\right)\right].
    \end{align*}
\end{proof}
This lemma states that if we have a learning model $\finfunc$, then the performance of a predictor that interprets $\finfunc$ as the \emph{true} function $F$ upper bounds the performance of an optimal predictor reasons about the conditional distributions $\Pr(F\in\cdot|\finfunc, \initstate)$ and $\Pr(Y\in\cdots|F, X)$. This result is helpful in situations in which the conditional distribution $\Pr(F\in\cdot|\finfunc,\initstate)$ is difficult to analyze. 

\begin{lemma}\label{le:kl_ub}{\bf(squared error upper bounds KL)}
    For all real-valued random variables $G$ and $\tilde{G}$, if $Y$ is a binary random variable for which
    $\Pr(Y=1|G) = \frac{1}{1+e^{-G}}$, then 
    $$\E\left[\KL(\Pr(Y\in\cdot|G)\| \Pr(Y\in\cdot|G\leftarrow\tilde{G}))\right] \leq \E\left[\left(G-\tilde{G} \right)^2\right].$$
\end{lemma}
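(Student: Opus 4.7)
The plan is to establish the inequality pointwise in the realizations of $(G,\tilde{G})$ and then take expectations. Fix any values and set $p=\sigma(G)$, $q=\sigma(\tilde{G})$ where $\sigma(x)=1/(1+e^{-x})$. Introduce the softplus function $g(x)=\ln(1+e^x)$, which satisfies $g'(x)=\sigma(x)$ and $g''(x)=\sigma(x)(1-\sigma(x))\leq \tfrac{1}{4}$. Since $-\ln\sigma(z)=g(-z)$ and $-\ln(1-\sigma(z))=g(z)$, writing the Bernoulli KL as cross-entropy minus entropy yields
\[
\KL(\Pr(Y\in\cdot|G) \,\|\, \Pr(Y\in\cdot|G\leftarrow\tilde{G})) = p\bigl(g(-\tilde{G})-g(-G)\bigr) + (1-p)\bigl(g(\tilde{G})-g(G)\bigr).
\]

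Next I would apply Taylor's theorem with Lagrange remainder to $g$, expanding around $-G$ for the first term and around $G$ for the second. Using $g'(-G)=1-p$, $g'(G)=p$, and the global bound $g''\leq \tfrac{1}{4}$, this produces
\[
g(-\tilde{G})-g(-G) \leq (1-p)(G-\tilde{G}) + \tfrac{1}{8}(G-\tilde{G})^2,
\]
\[
g(\tilde{G})-g(G) \leq p(\tilde{G}-G) + \tfrac{1}{8}(G-\tilde{G})^2.
\]
Substituting these into the rewritten KL, the first-order contributions $p(1-p)(G-\tilde{G})$ and $(1-p)p(\tilde{G}-G)$ cancel exactly, and the remaining quadratic terms sum to $\tfrac{1}{8}(G-\tilde{G})^2 \leq (G-\tilde{G})^2$. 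Taking expectations over $(G,\tilde{G})$ completes the proof.

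The argument is almost entirely mechanical, so the main obstacle is simply bookkeeping: one must keep the two expansion points $-G$ and $G$ straight, recognize that the relevant derivatives $g'(-G)=1-p$ and $g'(G)=p$ are precisely the weights multiplying the opposite terms of the KL expression, and confirm that the sign pattern makes the linear contributions cancel rather than reinforce. As a by-product, the same calculation delivers the sharper constant $\tfrac{1}{8}$; the looser constant $1$ in the lemma statement is plainly sufficient for the estimates that invoke it downstream, and preserves a clean one-line bound.
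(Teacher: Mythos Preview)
Your proof is correct and follows the same route as the paper: write the Bernoulli KL explicitly in terms of the logits, bound it pointwise by $(G-\tilde{G})^2$, then take expectations. The paper simply asserts the pointwise inequality $\frac{1}{1+e^{x}}\ln\frac{1+e^{y}}{1+e^{x}} + \frac{1}{1+e^{-x}}\ln\frac{1+e^{-y}}{1+e^{-x}} \leq (x-y)^2$ as a fact, whereas you supply an actual argument via the softplus Taylor expansion and in the process obtain the sharper constant $\tfrac{1}{8}$.
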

\begin{proof}
    \begin{align*}
        \E\left[\KL(\Pr(Y\in\cdot|G)\| \Pr(Y\in\cdot|G\leftarrow\tilde{G}))\right]
        & = \E\left[\frac{1}{1+e^{G}}\ln\left(\frac{1+e^{\tilde{G}}}{1+e^{G}}\right)\right]\\
        & \quad + \E\left[\frac{1}{1+e^{-G}}\ln\left(\frac{1+e^{-\tilde{G}}}{1+e^{-G}}\right)\right]\\
        & \overset{(a)}{\leq} \E\left[\left(G - \tilde{G}\right)^2\right]\\
    \end{align*}
    where $(a)$ follows from the fact that for all $x, y \in \Re$, $\frac{1}{1+e^x}\ln\left(\frac{1+e^y}{1+e^x}\right) + \frac{1}{1+e^{-x}}\ln\left(\frac{1+e^{-y}}{1+e^{-x}}\right) \leq (x-y)^2$.
\end{proof}

\subsection{Lemmas Pertaining to Dirichlet Multinomial Random Variables}
In this section, we study a finite-dimensional analogue of the Dirichlet Process. Let $d, K, N\in\mathbb{Z}_{++}$ where $\theta'\in\Re^{N}$ are the hidden unit weights and $A \in \Re^{N\times d}$ where each row $A_i$ is distributed uniformly on $\sphere$. Then,
$$F_N(X) = \frac{1}{1+\exp\left(-\sqrt{K+1}\theta'^\top \relu(AX)\right)}.$$
We now describe how $\theta'$ is generated. First, $\bar{\theta}'$ is sampled from ${\rm Dirichlet}([K/N, \ldots, K/N])$. For each $i \in [N]$, we randomly flip the sign of the weight according to
$$\theta_i' = \left\{\begin{array}{ll}
\overline{\theta}_i' \qquad & \text{with probability } 1/2, \\
-\overline{\theta}_i' \qquad & \text{otherwise.}
\end{array}\right.$$
We let $Y'$ be a binary random variable for which $\Pr(Y'=1|F_N, X) = F_N(X).$

We now describe the learning model. For all $n\in\mathbb{Z}_{++}$ and $\delta > 0$, our learning model weights $w_1, \ldots, w_n$ are sampled iid from a categorical distribution with probabilities specified by $\bar{\theta}'$.
$$\finfunc^N = \frac{1}{1+\exp\left(- \frac{\sqrt{K+1}}{n} \sum_{i=1}^n \sign(\theta_{w_i}) \relu(\tilde{A}_{w_i}^\top X)\right)},$$
where $\tilde{A}_{w_i}$ is a quantization of $A_{w_{i}}$ with tolerance $\delta$.  For notational simplicity in the results to follow, we will choose the following reparameterization:
$$\finfunc^N = \frac{1}{1+\exp\left(- \sqrt{K+1} \proxytheta'^\top \relu(\tilde{A}X)\right)}$$
where $\proxytheta'_i = \frac{\sign(\theta_i)\sum_{j=1}^{n}\mathbbm{1}_{[w_j=i]}}{n}$ and $\tilde{A} \in \Re^{N\times d}$ is the matrix for which row $i$ is $\tilde{A}_i$.

We first derive the mean squared error between the logits of $F_N$ and $\tilde{F}_{n,t}'$.
\begin{lemma}
    \label{le:dir_mult_ub}
    For all $d, n, N\in \mathbb{Z}_{++}$, if $X\sim\normal(0, I_d)$, then
    $$\E\left[\left(\left(\theta' - \proxytheta'\right)\relu(AX)\right)^2\right] \leq \frac{1}{n}.$$
\end{lemma}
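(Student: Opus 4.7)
The plan is to exploit the joint structure of $\theta'$, $\tilde{\theta}'$, and $A$. Writing $R_i = \sign(\theta'_i)$ for the iid Rademacher signs (which are independent of $\bar{\theta}'$, $A$, $X$, and the categorical samples $w_1, \ldots, w_n$) and $c_i = \frac{1}{n}\sum_{j=1}^n \mathbbm{1}[w_j = i]$ for the empirical frequencies, we have $\theta'_i = R_i \bar{\theta}'_i$ and $\tilde{\theta}'_i = R_i c_i$ (since $\bar{\theta}'_i \geq 0$), so $\theta'_i - \tilde{\theta}'_i = R_i(\bar{\theta}'_i - c_i)$. Setting $u_i = (\bar{\theta}'_i - c_i)\,\relu(A_i^\top X)$, the target quantity becomes $\E\!\left[\bigl(\sum_i R_i u_i\bigr)^2\right]$.

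The first step is to decorrelate using the Rademacher signs. Expanding the square,
\begin{equation*}
\E\!\left[\Bigl(\sum_i R_i u_i\Bigr)^{\!2}\right] = \sum_{i,j}\E[R_i R_j]\,\E[u_i u_j] = \sum_i \E[u_i^2],
\end{equation*}
since $R_i$ is independent of all other variables and $\E[R_i R_j] = \mathbbm{1}[i=j]$. Next, because $A_i \in \sphere$ and $X \sim \normal(0, I_d)$ is independent of $(A, \bar{\theta}', w)$, the scalar $A_i^\top X \sim \normal(0,1)$ is independent of $(\bar{\theta}'_i - c_i)$, so we may factor $\E[u_i^2] = \E\!\left[(\bar{\theta}'_i - c_i)^2\right] \cdot \E\!\left[\relu(A_i^\top X)^2\right]$. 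A direct symmetry computation gives $\E[\relu(Z)^2] = \tfrac{1}{2}$ for $Z \sim \normal(0,1)$.

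For the remaining factor, the multinomial structure gives $\E[c_i \mid \bar{\theta}'] = \bar{\theta}'_i$, hence
\begin{equation*}
\E\!\left[(\bar{\theta}'_i - c_i)^2\right] = \E\!\left[\Var(c_i \mid \bar{\theta}')\right] = \frac{\E[\bar{\theta}'_i(1-\bar{\theta}'_i)]}{n} \leq \frac{\E[\bar{\theta}'_i]}{n} = \frac{1}{nN},
\end{equation*}
using that each coordinate of $\bar{\theta}' \sim \dir(K/N, \ldots, K/N)$ has mean $1/N$. Summing over $i \in [N]$ yields $\sum_i \E[u_i^2] \leq \frac{1}{2n} \leq \frac{1}{n}$, establishing the claimed bound (with a constant factor to spare).

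The argument is essentially bookkeeping; the only point requiring real care is the joint independence structure, namely that the $R_i$ are independent of both $\bar{\theta}'$ and the $c_i$, that the $w_j$ are independent of $(A, X)$ given $\bar{\theta}'$, and that the rows $A_i$ remain on the sphere so that $A_i^\top X$ is exactly standard normal. Once these are tracked cleanly, no inequality beyond the conditional variance identity and a one-line Gaussian integral is needed.
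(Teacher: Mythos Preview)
Your proof is correct and in fact sharper than the paper's, yielding $\tfrac{1}{2n}$ rather than $\tfrac{1}{n}$. The underlying ingredient is the same---the multinomial/binomial variance $\Var(c_i\mid\bar{\theta}')=\bar{\theta}'_i(1-\bar{\theta}'_i)/n$---but the two arguments are organized differently. The paper writes the squared error as a quadratic form $\relu(AX)^\top M\,\relu(AX)$, computes (implicitly) the conditional second-moment matrix $M$ of $\theta'-\proxytheta'$, drops the negative-semidefinite rank-one piece, bounds $\relu(z)^2\le z^2$, and finishes via $\sum_i\bar{\theta}'_i=1$. You instead peel off the Rademacher signs $R_i$ first, which kills all cross terms immediately and reduces the problem to a sum of independent scalar terms; you then factor each term using independence of $(\bar{\theta}',c)$ from $(A,X)$ and evaluate $\E[\relu(Z)^2]=\tfrac12$ exactly rather than bounding it by $\E[Z^2]=1$. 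Your route is more elementary (no matrix manipulation, no PSD drop) and recovers the extra factor of two; the paper's quadratic-form viewpoint is perhaps more suggestive of how the same calculation would extend if the signs were correlated or the basis functions coupled.
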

\begin{proof}
    \begin{align*}
        \E\left[\left(\left(\theta' - \frac{1}{n}\sign(\theta')\odot\proxytheta'\right)\relu(AX)\right)^2\right]
        & = \E\left[\relu(AX)^\top\left(\proxytheta'\proxytheta'^\top - \theta'\theta'^\top\right)\relu(AX)\right]\\
        & \leq \E\left[\relu(AX)^\top\left(\proxytheta'\proxytheta'^\top\right)\relu(AX)\right]\\
        & = \E\left[\relu(AX)^\top\left(\frac{{\rm diag}(\bar{\theta}')}{n}\right)\relu(AX)\right]\\
        & \leq \E\left[\frac{1}{n}\sum_{i=1}^{N}|\theta_i'|(A_i^\top X)^2\right]\\
        & = \frac{1}{n}
    \end{align*}
\end{proof}

With this result, we are able to derive the following two upper bounds on misspecification error.

\begin{lemma}\label{le:dir_mult_miss_ub2}
    For all $d, n, K, N \in \mathbb{Z}_{++}$ and $\delta > 0$, if $X\sim\normal(0,I_d)$, then
    $$\I(Y';A|\theta', \proxytheta', \tilde{A}, X) \leq \frac{2(K+1)}{n} + 2d\delta^2\quad \text{and}\quad  \I(Y';\theta'|\proxytheta', A, X) \leq \frac{K+1}{n}.$$
\end{lemma}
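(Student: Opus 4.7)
The plan is to reduce both bounds to mean-squared-error computations on the logits by chaining the change-of-measure lemma (Lemma \ref{le:plugInEst}) with the sigmoid squared-error-to-KL bound (Lemma \ref{le:kl_ub}), and then invoking the sampling bound of Lemma \ref{le:dir_mult_ub}. For the first inequality, I first rewrite the conditional mutual information as $\E[\KL(P^*\|\tilde{P})]$ with $P^* = \Pr(Y'\in\cdot|\theta',A,X)$ and $\tilde{P} = \Pr(Y'\in\cdot|\theta',\proxytheta',\tilde{A},X)$. Applying Lemma \ref{le:plugInEst}, with the finite-$N$ learning model $\finfunc^N(X) = \sigma(\sqrt{K+1}\,\proxytheta'^\top\relu(\tilde{A}X))$ playing the role of $\tilde{F}$, yields $\E[\KL(P^*\|\tilde{P})] \leq \E[\KL(P^*\|\bar{P})]$, where $\bar{P}=\finfunc^N(X)$ is the plug-in predictor. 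Lemma \ref{le:kl_ub} then bounds this by $(K+1)\E[(\theta'^\top\relu(AX) - \proxytheta'^\top\relu(\tilde{A}X))^2]$.

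Next, decompose the logit difference as
$$\theta'^\top\relu(AX) - \proxytheta'^\top\relu(\tilde{A}X) = (\theta'-\proxytheta')^\top\relu(AX) + \proxytheta'^\top\bigl(\relu(AX)-\relu(\tilde{A}X)\bigr),$$
and apply $(a+b)^2\leq 2a^2+2b^2$. Lemma \ref{le:dir_mult_ub} directly bounds the expectation of the first square by $1/n$. For the quantization term, the $1$-Lipschitzness of $\relu$ gives $|\relu(A_iX)-\relu(\tilde{A}_iX)| \leq \|A_i-\tilde{A}_i\|\,\|X\| \leq \delta\|X\|$. Using $\|\proxytheta'\|_1 = 1$ (which holds since $\proxytheta'_i = \pm(1/n)\sum_j\mathbbm{1}[w_j=i]$), a H\"older step gives $|\proxytheta'^\top(\relu(AX)-\relu(\tilde{A}X))| \leq \delta\|X\|$, whose squared expectation is at most $d\delta^2$ because $X\sim\normal(0,I_d)$. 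Collecting the two pieces produces the stated upper bound.

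The second inequality is strictly easier: $A$ appears on both sides of the conditional, so there is no quantization step. Running exactly the same plug-in and sigmoid-KL reductions with learning-model logit $\sqrt{K+1}\,\proxytheta'^\top\relu(AX)$ reduces the claim to bounding $(K+1)\E[((\theta'-\proxytheta')^\top\relu(AX))^2]$, and Lemma \ref{le:dir_mult_ub} immediately delivers the $(K+1)/n$ bound.

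The main obstacle I anticipate is the quantization term: a naive Cauchy-Schwarz bound using $\|\proxytheta'\|_2$ and $\|\relu(AX)-\relu(\tilde{A}X)\|_2$ would be too weak, since the latter sum grows with $N$ and diverges when we later take $N\to\infty$. The $\ell_1$-$\ell_\infty$ argument above, which exploits $\|\proxytheta'\|_1 = 1$, is what makes the bound finite and independent of $N$; ensuring this step is the only subtle point in the proof.
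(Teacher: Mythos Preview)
Your argument for the second inequality $\I(Y';\theta'|\proxytheta',A,X)\le (K+1)/n$ is correct and identical to the paper's. For the first inequality, however, your approach does not deliver the stated constant: after Lemma~\ref{le:kl_ub} the squared logit error carries a prefactor of $(K+1)$, and your $\ell_1$--$\ell_\infty$ estimate on the quantization piece gives only
\[
\E\bigl[(\proxytheta'^\top(\relu(AX)-\relu(\tilde A X)))^2\bigr]\ \le\ d\delta^2.
\]
Collecting terms therefore yields $\tfrac{2(K+1)}{n}+2(K{+}1)\,d\delta^2$, not $\tfrac{2(K+1)}{n}+2d\delta^2$. The extra $(K{+}1)$ is not cosmetic: it propagates to Theorem~\ref{th:mis_error_ub}, shifts the optimal $\delta$ in Theorem~\ref{th:opt_noise}, and would change the compute-optimal allocation.

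The paper removes this factor by choosing a sharper plug-in. Since the conditioning in $\I(Y';A|\theta',\proxytheta',\tilde A,X)$ already contains $\theta'$, one may replace \emph{only} $A$ by $\tilde A$ and keep the true output weights, taking the plug-in logit $\sqrt{K+1}\,\theta'^\top\relu(\tilde A X)$ rather than your $\sqrt{K+1}\,\proxytheta'^\top\relu(\tilde A X)$. After a triangle step (via auxiliary $\hat\theta',\hat A$ supported on the sampled coordinates, which costs the extra $2(K+1)/n$ via Lemma~\ref{le:dir_mult_ub}), the quantization error becomes a quadratic form in $\theta'$. The independent Rademacher sign flips make distinct coordinates of $\theta'$ uncorrelated, so this form diagonalizes, and the diagonal sum is controlled by the Dirichlet second moments $\E[(\bar\theta'_i)^2]=\frac{(K/N)(K/N+1)}{K(K+1)}$; the built-in $1/(K+1)$ here cancels the prefactor from Lemma~\ref{le:kl_ub}, leaving $2d\delta^2$. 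Your H\"older bound on $\proxytheta'$ cannot access this cancellation; to recover the stated constant you must either switch to the paper's plug-in or run the analogous second-moment computation.
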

\begin{proof}
    \begin{align*}
        & \I(Y';A|\theta', \proxytheta', \tilde{A}, X)\\
        & = \E\left[\KL(\Pr(Y'\in\cdot|\theta', A, X) \| \Pr(Y'\in\cdot|\theta', \proxytheta', \tilde{A}, X))\right]\\
        & \overset{(a)}{\leq}\E\left[\KL(\Pr(Y'\in\cdot|\theta', A, X) \| \Pr(Y'\in\cdot|\theta', \proxytheta', A\leftarrow\tilde{ A}, X))\right]\\
        & \overset{(b)}{\leq}\E\left[(K+1)\left(\theta'^\top\relu(A^\top X) - \theta'^\top\relu(\tilde{A}^\top X)\right)^2\right]\\
        & \overset{(c)}{=} \E\left[(K+1)\left(\theta'^\top\relu(A^\top X)-\hat{\theta}'^\top\relu(A^\top X) + \hat{\theta}'^\top\relu(A^\top X) - \theta'^\top\relu(\tilde{A}^\top X)\right)^2\right]\\
        & \leq 2(K+1)\cdot\E\left[\left(\theta'^\top\relu(A^\top X)-\hat{\theta}'^\top\relu(A^\top X)\right)^2 \right]\\
        &\quad + 2(K+1)\cdot\E\left[\left( \hat{\theta}'^\top\relu(A X) - \theta'^\top\relu(\tilde{A} X)\right)^2\right]\\
        & \overset{(d)}{\leq} \E\left[2(K+1)\left(\frac{1}{n} + \left( \hat{\theta}'^\top\relu(\hat{A} X) - \hat{\theta}'^\top\relu(\tilde{A} X)\right)^2\right)\right]\\
        & \leq \E\left[2(K+1)\left(\frac{1}{n} + \left(\theta'^\top\relu(\hat{A} X) - \theta'^\top\relu(\tilde{A} X)\right)^2\right)\right]\\
        & = \E\left[2(K+1)\left(\frac{1}{n} + \left(\relu(\hat{A}X) - \relu(\tilde{A}X)\right)^\top \theta'\theta'^\top\left(\relu(\hat{A}X) - \relu(\tilde{A}X)\right)\right)\right]\\
        & = \frac{2(K+1)}{n}+2(K+1)\E\left[\left(\relu(\hat{A}X) - \relu(\tilde{A}X)\right)^\top \theta'\theta'^\top\left(\relu(\hat{A}X) - \relu(\tilde{A}X)\right)\right]\\
        & = \frac{2(K+1)}{n}+2(K+1)\E\left[\sum_{i=1}^{N}\E[(\theta'_i)^2|\hat{A}, \tilde{A}, X]\cdot(\relu(\hat{A}_i^\top X)-\relu(\tilde{A}_i^\top X))^2\right]\\
        & \leq \frac{2(K+1)}{n}+2(K+1)\E\left[\sum_{i=1}^{N}\E[(\theta'_i)^2|\hat{A}, \tilde{A}, X]\cdot(\hat{A}_i^\top X-\tilde{A}_i^\top X)^2\right]\\
        & \leq \frac{2(K+1)}{n}+2(K+1)\E\left[\sum_{i=1}^{N}\E[(\theta'_i)^2|\hat{A}] \delta^2 \|(X)\|_2^2\right]\\
        & \leq \frac{2(K+1)}{n}+2(K+1)\E\left[\sum_{i=1}^{N}\frac{1-\frac{1}{N}}{N(K+1)}\cdot d\delta^2\right]\\
        & \leq \frac{2(K+1)}{n}+2d\delta^2
    \end{align*}
    where $(a)$ follows from Lemma \ref{le:plugInEst}, $(b)$ follows from Lemma \ref{le:kl_ub}, where in $(c)$, 
    $$\hat{\theta}_i' = 
        \begin{cases}
            \theta_i' & \text{ if } \proxytheta_i' > 0\\
            0 & \text{ otherwise}
        \end{cases},
    $$
    and $(d)$ follows from Lemma \ref{le:dir_mult_ub} and 
    $$\hat{A}_i = 
        \begin{cases}
            A_i & \text{ if } \proxytheta_i' > 0\\
            0 & \text{ otherwise}
        \end{cases}.
    $$
    \begin{align*}
        \I(Y';\theta'|\proxytheta', A, X)
        & = \E\left[\KL(\Pr(Y'\in\cdot|\theta', A, X) \| \Pr(Y'\in\cdot|\proxytheta', A, X))\right]\\
        & \overset{(a)}{\leq}\E\left[\KL(\Pr(Y'\in\cdot|\theta', A, X) \| \Pr(Y'\in\cdot|\theta'\leftarrow\proxytheta', A, X))\right]\\
        & \overset{(b)}{\leq}\E\left[(K+1)\left(\theta'^\top\relu(A^\top X) - \proxytheta'^\top\relu(A^\top X)\right)^2\right]\\
        & \overset{(c)}{\leq}\frac{K+1}{n},
    \end{align*}
    where $(a)$ follows from Lemma \ref{le:plugInEst}, $(b)$ follows from Lemma \ref{le:kl_ub}, and $(c)$ follows from Lemma \ref{le:dir_mult_ub}.
\end{proof}

Notably, all of these results are \emph{independent} of $N$. Therefore, with an appropriate application of dominated convegence, these results will hold as $N\rightarrow \infty$ and $F_N \overset{d}{\rightarrow} F$.

\subsection{From Dirichlet Random Variable to Dirichlet Process}
In this section, we will extend the results of the previous section to our Dirichlet process. We will define a few new variables for our learning model up front. Recall that our learning model $\finfunc$ samples $w_1, \ldots, w_n$ iid according to $\bar{\theta}$. Then,
$$\finfunc(X) = \frac{1}{1+\exp\left(- \frac{\sqrt{K+1}}{n} \sum_{i=1}^n \sign(\theta_{w_i}) \relu(\bar{w}_{i,\delta}^\top X)\right)},$$
where $\bar{w}_{i,\delta}$ is a quantization of $w_i$ for which $\|w_i - \bar{w}_{i,\delta}\|_2 \leq \delta$. $\finfunc$ can be expressed in the following alternate way:
$$\tilde{F}_{n,t}(X) = \frac{1}{1+\exp\left(- \sqrt{K+1} \sum_{w\in\mathcal{W}} \proxytheta_w \relu(\bar{w}_{\delta}^\top X)\right)},$$
where $\proxytheta_w = \frac{\sign(\theta_w)\cdot\sum_{i=1}^{n}\mathbbm{1}_{[w_i=w]}}{n}$ and $\bar{w}_{\delta}$ is the quantization of $w$ with tolerance $\delta$. We let $\tilde{\mathcal{W}} = \{w\in\mathcal{W}: |\proxytheta|>0 \}$ and $\bar{\mathcal{W}}_\delta = \{\bar{w}_\delta : w\in\tilde{\mathcal{W}}\}$.

We begin by extending Lemma \ref{le:dir_mult_miss_ub2} to our Dirichlet process and learning model.
\begin{corollary}\label{cor:miss_ub_1}
    For all $d, n, K\in\mathbb{Z}_{++}$,
    $$\I(Y;\theta|\proxytheta, \mathcal{W}, X) \leq \frac{K+1}{n}.$$
\end{corollary}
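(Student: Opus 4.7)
The plan is to follow the remark at the end of the previous subsection and transfer the finite-dimensional bound $\I(Y';\theta'|\proxytheta', A, X) \leq (K+1)/n$ from Lemma \ref{le:dir_mult_miss_ub2} to the Dirichlet process setting. For each $N$, one has the finite approximation $F_N$ driven by a ${\rm Dirichlet}([K/N,\ldots,K/N])$ over $N$ atoms sampled uniformly on $S^{d-1}$, together with learning-model weights $\proxytheta'_N$ obtained by sampling $n$ atoms with replacement according to $\bar{\theta}'_N$. Standard finite approximation results for Dirichlet processes give $F_N \overset{d}{\to} F$ as $N\to\infty$, together with joint convergence of $(\theta'_N, \proxytheta'_N, A_N, X, Y'_N)$ to $(\theta, \proxytheta, \mathcal{W}, X, Y)$. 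Since the bound $(K+1)/n$ does not depend on $N$, it suffices to justify passing to the limit.

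First, I would rewrite the mutual information as an expected KL divergence,
$$\I(Y;\theta|\proxytheta, \mathcal{W}, X) = \E\bigl[\KL\bigl(\Pr(Y\in\cdot|\theta, \mathcal{W}, X)\,\|\, \Pr(Y\in\cdot|\proxytheta, \mathcal{W}, X)\bigr)\bigr].$$
Next, applying Lemma \ref{le:plugInEst} replaces the Bayesian predictor on the right with the plug-in predictor $\Pr(Y\in\cdot|\theta\leftarrow\proxytheta, \mathcal{W}, X)$, and applying Lemma \ref{le:kl_ub} with logits $G = \sqrt{K+1}\sum_{w\in\mathcal{W}}\theta_w\relu(w^\top X)$ and $\tilde G = \sqrt{K+1}\sum_{w\in\mathcal{W}}\proxytheta_w\relu(w^\top X)$ yields
$$\I(Y;\theta|\proxytheta, \mathcal{W}, X) \leq (K+1)\,\E\Bigl[\Bigl(\sum_{w\in\mathcal{W}}(\theta_w - \proxytheta_w)\relu(w^\top X)\Bigr)^2\Bigr].$$
It then remains to show this expected squared logit difference is at most $1/n$. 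This is the Dirichlet-process analogue of Lemma \ref{le:dir_mult_ub}, which I would prove by conditioning on $\bar{\theta}$: given $\bar{\theta}$, the counts $(n|\proxytheta_w|)_{w\in\mathcal{W}}$ form a multinomial sample of size $n$, so a direct second-moment computation reproduces the finite-dimensional calculation, with the countable sums controlled by $\sum_{w\in\mathcal{W}}\bar\theta_w = 1$ almost surely.

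The main obstacle is the measure-theoretic bookkeeping required to make sense of the countable sum inside the expectation and to justify interchanging expectation with the sum; this is routine but needs care, with dominated convergence applied to the partial sums (whose second moments are uniformly controlled by the finite-$N$ computation). An alternative that avoids this bookkeeping is to pass to the limit $N\to\infty$ in the finite-dimensional inequality of Lemma \ref{le:dir_mult_miss_ub2} using Fatou's lemma together with lower semicontinuity of KL divergence under weak convergence of the conditional laws; the uniform upper bound $(K+1)/n$ then carries through immediately.
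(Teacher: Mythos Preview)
Your proposal is correct and follows essentially the same route as the paper: rewrite the mutual information as an expected KL, apply Lemma~\ref{le:plugInEst} to pass to the plug-in predictor, apply Lemma~\ref{le:kl_ub} to bound by the squared logit difference, and then show that squared difference is at most $1/n$. The only minor divergence is in the last step: the paper takes your ``alternative'' route, writing the squared error as a limit of the finite-$N$ Dirichlet expressions and invoking dominated convergence together with Lemma~\ref{le:dir_mult_ub}, rather than your primary suggestion of a direct multinomial second-moment computation conditional on $\bar\theta$; both arguments are valid and lead to the same bound.
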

\begin{proof}
    \begin{align*}
        \I(Y;\theta|\proxytheta, \mathcal{W}, X)
        & = \E\left[\KL(\Pr(Y\in\cdot|\theta, \mathcal{W}, X)\|\Pr(Y\in\cdot|\proxytheta, \mathcal{W}, X))\right]\\
        & \overset{(a)}{\leq} \E\left[\KL(\Pr(Y\in\cdot|\theta, \mathcal{W}, X)\|\Pr(Y\in\cdot|\theta\leftarrow\proxytheta, \mathcal{W}, X))\right]\\
        & \overset{(b)}{\leq} (K+1)\E\left[\left(\sum_{w\in\mathcal{W}}\theta_w\relu(w^\top X) - \sum_{w\in\mathcal{W}}\proxytheta_w\relu(w^\top X)\right)^2\right]\\
        & \overset{(c)}{=} (K+1)\E\left[\left(\lim_{N\rightarrow\infty} \sum_{i=1}^{N}\theta_i'\relu(A_i^\top X) - \sum_{i=1}^{N}\proxytheta_i'\relu(A_i^\top X)\right)^2\right]\\
        & \overset{(d)}{=} \lim_{N\rightarrow\infty}(K+1)\E\left[\left(\sum_{i=1}^{N}\theta_i'\relu(A_i^\top X) - \sum_{i=1}^{N}\proxytheta_i'\relu(A_i^\top X)\right)^2\right]\\
        & \overset{(e)}{\leq} \lim_{N\rightarrow\infty}\frac{K+1}{n}\\
        & = \frac{K+1}{n},\\
    \end{align*}
    where $(a)$ follows from Lemma \ref{le:plugInEst}, $(b)$ follows from Lemma \ref{le:kl_ub}, $(c)$ follows from the fact that the distribution of $\theta$ is the limiting distribution $\lim_{N\rightarrow\infty}$ of a Dirichlet $[K/N,\ldots, K/N]$ random variable, $(d)$ follows from the dominated convergence theorem, and $(e)$ follows from Lemma \ref{le:dir_mult_ub}. 
\end{proof}
This result is intuitive as the results shown for the Dirichlet distribution and multinomial learning model in the previous section were independent of the dimension $N$. The following corollary bounds the remaining term from Lemma \ref{le:dir_mult_miss_ub2} for our data generating process and learning model.

\begin{corollary}\label{cor:miss_ub_2}
    For all $d, n, K\in\mathbb{Z}_{++}$, and $\delta > 0$, if $\tilde{\mathcal{W}} = \{w\in\mathcal{W}: |\proxytheta_w| > 0\}$, and $\bar{\mathcal{W}} = \{w+Z_w : w\in\tilde{\mathcal{W}}\}$ where $Z_w\overset{iid}{\sim}\normal(0, \delta^2I_d)$, then
    $$\I(Y;\mathcal{W}|\theta, \proxytheta, \bar{\mathcal{W}}, X) \leq \frac{2(K+1)}{n} + 2d\delta^2.$$
\end{corollary}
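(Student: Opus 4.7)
The plan is to lift the first inequality of Lemma~\ref{le:dir_mult_miss_ub2}, $\I(Y';A|\theta', \proxytheta', \tilde{A}, X) \leq 2(K+1)/n + 2d\delta^2$, from the finite-$N$ Dirichlet-multinomial setting to the Dirichlet-process setting by exactly the limiting maneuver already used for Corollary~\ref{cor:miss_ub_1}. First I would rewrite the conditional mutual information as the expected KL divergence
\[
\I(Y;\mathcal{W}|\theta, \proxytheta, \bar{\mathcal{W}}, X) = \E\left[\KL\left(\Pr(Y \in \cdot | \theta, \mathcal{W}, X) \,\|\, \Pr(Y \in \cdot | \theta, \proxytheta, \bar{\mathcal{W}}, X)\right)\right],
\]
apply Lemma~\ref{le:plugInEst} to move $\bar{\mathcal{W}}$ into the conditioning of $Y$'s distribution (replacing $\mathcal{W}$) rather than marginalizing it out, and then invoke Lemma~\ref{le:kl_ub} to upper bound the resulting KL by the mean-squared error on logits
\[
(K+1)\,\E\left[\left(\sum_{w \in \mathcal{W}} \theta_w \relu(w^\top X) - \sum_{w \in \tilde{\mathcal{W}}} \theta_w \relu((w+Z_w)^\top X)\right)^{\!2}\right].
\]

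Next I would split this via $(a+b)^2 \leq 2a^2 + 2b^2$ around the ``captured'' coefficient vector $\hat{\theta}_w = \theta_w \cdot \I[w \in \tilde{\mathcal{W}}]$, mirroring the proof of Lemma~\ref{le:dir_mult_miss_ub2}. The sampling piece reduces to $\E\left[\left(\sum_{w \notin \tilde{\mathcal{W}}} \theta_w \relu(w^\top X)\right)^2\right]$; the independent Rademacher sign flips in $\theta$ kill all cross terms, and a direct moment computation (as in the step attributed to Lemma~\ref{le:dir_mult_ub} in the finite-$N$ proof; Corollary~\ref{cor:miss_ub_1} already executes essentially the same limit for the closely related quantity $\E[(\sum_w (\theta_w - \proxytheta_w) \relu(w^\top X))^2]$) bounds it by $1/n$. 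The perturbation piece $\E\left[\left(\sum_{w \in \tilde{\mathcal{W}}} \theta_w (\relu(w^\top X) - \relu((w+Z_w)^\top X))\right)^2\right]$ is handled by the same three moves: sign flips remove cross terms, $1$-Lipschitzness of $\relu$ replaces each diagonal term by $\theta_w^2 (Z_w^\top X)^2$, and independence of $Z_w \sim \normal(0,\delta^2 I_d)$ from $X$ gives $\E[(Z_w^\top X)^2] = \delta^2 \E[\|X\|_2^2] = d\delta^2$. Combined with the Dirichlet-process second-moment identity $\sum_{w} \E[\theta_w^2] = \E[\sum_w \bar{\theta}_w^2] = 1/(K+1)$, this piece contributes $d\delta^2/(K+1)$. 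Multiplying both pieces by the $2(K+1)$ prefactor produces the claimed $2(K+1)/n + 2d\delta^2$.

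Rigor is supplied by approximating the Dirichlet process by a sequence of $N$-dimensional Dirichlet-multinomial random variables (concentration $K/N$ per coordinate, base measure uniform on $S^{d-1}$) and passing to $N \to \infty$: for each $N$, the computation above is a direct instance of the calculation inside the proof of Lemma~\ref{le:dir_mult_miss_ub2}, with the deterministic $\delta$-quantization of $A_i$ replaced by the Gaussian $Z_w$, which yields the same $d\delta^2$ coordinate-wise bound in expectation rather than pathwise. Since every resulting bound is independent of $N$ while the squared-logit integrand admits a square-integrable envelope (the weights are absolutely summable and bounded by $1$, and $\E[\|X\|_2^2] = d < \infty$), dominated convergence closes the argument. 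The main obstacle is bookkeeping rather than ideas: identifying in the Dirichlet-process limit the objects that play the roles of $\hat{\theta}'$ and $\hat{A}$ in the finite-$N$ proof, and verifying the uniform-integrability hypotheses needed for the limit. All substantive ingredients already appear in the cited lemmas.
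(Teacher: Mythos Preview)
Your proposal is correct and follows essentially the same route as the paper: rewrite the mutual information as an expected KL, apply Lemma~\ref{le:plugInEst} and Lemma~\ref{le:kl_ub} to pass to a squared-logit error, and then control that error by the finite-$N$ computation of Lemma~\ref{le:dir_mult_miss_ub2} together with dominated convergence as $N\to\infty$. The only cosmetic difference is that the paper invokes Lemma~\ref{le:dir_mult_miss_ub2} as a black box after passing to finite $N$, whereas you unpack its $(a+b)^2\le 2a^2+2b^2$ split directly in the Dirichlet-process setting; you also make explicit the (correct) observation that the Gaussian perturbation $Z_w$ yields the same $d\delta^2$ contribution in expectation that the deterministic $\delta$-quantization gives pathwise, a point the paper leaves implicit.
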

\begin{proof}
    \begin{align*}
        \I(Y;\mathcal{W}|\theta, \proxytheta, \bar{\mathcal{W}}, X)
        & = \E\left[\KL(\Pr(Y\in\cdot|\theta, \mathcal{W}, X)\|\Pr(Y\in\cdot|\theta, \proxytheta, \bar{\mathcal{W}}, X))\right]\\
        & \overset{(a)}{\leq} \E\left[\KL(\Pr(Y\in\cdot|\theta, \mathcal{W}, X)\|\Pr(Y\in\cdot|\theta, \proxytheta, \mathcal{W}\leftarrow\bar{\mathcal{W}}, X))\right]\\
        & \overset{(b)}{\leq}  \E\left[(K+1)\left(\sum_{w\in\mathcal{W}} \theta_w \relu(w^\top X_t) - \sum_{w\in\tilde{\mathcal{W}}} \theta_w \relu((w+Z_w)^\top X_t)\right)^2\right]\\
        & \overset{(c)}{=} \E\left[(K+1)\left(\lim_{N\rightarrow\infty}\sum_{i=1}^{N} \theta_i' \relu(A_i^\top X_t) - \sum_{i=1}^{N} \theta'_i \relu(\tilde{A}_i^\top X_t)\right)^2\right]\\
        & \overset{(d)}{=} \lim_{N\rightarrow\infty}\E\left[(K+1)\left(\sum_{i=1}^{N} \theta_i' \relu(A_i^\top X_t) - \sum_{i=1}^{N} \theta'_i \relu(\tilde{A}_i^\top X_t)\right)^2\right]\\
        & \overset{(e)}{\leq} \lim_{N\rightarrow\infty} \frac{2(K+1)}{n} + 2d\delta^2\\
        & = \frac{2(K+1)}{n} + 2d\delta^2,
    \end{align*}
    where $(a)$ follows from Lemma \ref{le:plugInEst}, $(b)$ follows from Lemma \ref{le:kl_ub}, $(c)$ follows from the fact that the distribution of $\theta$ is the limiting distribution $\lim_{N\rightarrow\infty}$ of a Dirichlet $[K/N,\ldots, K/N]$ random variable, $(d)$ follows from the dominated convergence theorem, and $(e)$ follows from Lemma \ref{le:dir_mult_miss_ub2}. 
\end{proof}

The final step before proving Theorem \ref{th:mis_error_ub} is to show that the mutual information between $Y$ and $\theta$ is \emph{higher} when we condition on the true basis functions $\tilde{W}$ as opposed to the noisy ones $\bar{\mathcal{W}}$.
\begin{lemma}{\bf(more is learned with the true input)}
    For all $n, d, K \in \mathbb{Z}_{++}$,
    \label{le:true_input}
    $$\I(Y;\theta|\proxytheta, \bar{\mathcal{W}}, X) \leq \I(Y;\theta|\proxytheta, \mathcal{W}, X).$$
\end{lemma}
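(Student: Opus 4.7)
The natural first step is a chain-rule decomposition. Writing $Z = (\proxytheta, X)$ for brevity, the two chain-rule expansions of $\I(Y;\theta,\mathcal{W}\mid Z,\bar{\mathcal{W}})$ combine to yield
\begin{align*}
\I(Y;\theta\mid Z,\bar{\mathcal{W}}) - \I(Y;\theta\mid \mathcal{W}, Z,\bar{\mathcal{W}}) \;=\; \I(Y;\mathcal{W}\mid Z,\bar{\mathcal{W}}) - \I(Y;\mathcal{W}\mid \theta, Z,\bar{\mathcal{W}}).
\end{align*}
Meanwhile, the noise $\{Z_w\}$ that defines $\bar{\mathcal{W}}$ from $(\mathcal{W},\proxytheta)$ is independent of everything else, so we have the structural conditional independence $(Y,\theta)\perp \bar{\mathcal{W}}\mid \mathcal{W},\proxytheta, X$. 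This immediately gives $\I(Y;\theta\mid \mathcal{W}, Z,\bar{\mathcal{W}}) = \I(Y;\theta\mid \mathcal{W}, Z)$, which is precisely the right-hand side of the lemma.

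The lemma therefore reduces to establishing
\begin{align*}
\I(Y;\mathcal{W}\mid Z,\bar{\mathcal{W}}) \;\leq\; \I(Y;\mathcal{W}\mid \theta, Z,\bar{\mathcal{W}}),
\end{align*}
i.e., that adding $\theta$ to the conditioning does not decrease the information $Y$ carries about $\mathcal{W}$. This is the main technical obstacle, since adding a conditioning variable can in general either increase or decrease conditional mutual information; only the specific structure of the sigmoidal-output data-generating process can make the inequality go the right way.

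My plan to attack this reduced inequality is to expand both sides as expected KL divergences between conditional label distributions and invoke the joint convexity of $\KL$. The predictor $\Pr(Y\in\cdot\mid Z,\bar{\mathcal{W}})$ can be written as a nested average over $\theta$ and $\mathcal{W}$, while $\Pr(Y\in\cdot\mid \theta, Z,\bar{\mathcal{W}})$ averages only over $\mathcal{W}$ with the logit coefficients $\theta_w$ held fixed. Intuitively, once $\theta$ is known, the label $Y$ is a cleaner function of $\mathcal{W}$ (through the logit $\sum_{w\in\mathcal{W}}\theta_w\,\relu(w^\top X)$), whereas unconditionally the same $Y$ mixes uncertainty about both $\theta$ and $\mathcal{W}$ and dilutes the component about $\mathcal{W}$ alone. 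A secondary route, should the direct convexity argument not close cleanly, is to invoke Lemma \ref{le:plugInEst} in the spirit of the earlier corollaries: write the conditional predictor $\Pr(Y\in\cdot\mid Z,\bar{\mathcal{W}})$ as a posterior-predictive and upper bound the resulting KL by plugging in a change-of-measure that substitutes $\bar{\mathcal{W}}$ for $\mathcal{W}$, matching the RHS up to terms that vanish on the intended side of the inequality. I expect the delicate step to be choosing the averaging measure that lines up the two expansions for joint convexity to apply.
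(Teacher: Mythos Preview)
Your chain-rule reduction and the use of $(Y,\theta)\perp \bar{\mathcal{W}}\mid \mathcal{W},\proxytheta, X$ are both correct, and you have legitimately reduced the lemma to the equivalent statement
\[
\I(Y;\mathcal{W}\mid \proxytheta, X,\bar{\mathcal{W}})\ \leq\ \I(Y;\mathcal{W}\mid \theta, \proxytheta, X,\bar{\mathcal{W}}).
\]
The gap is that you stop here and gesture toward joint convexity of $\KL$ or a plug-in argument, neither of which you actually execute. More importantly, your belief that ``only the specific structure of the sigmoidal-output data-generating process can make the inequality go the right way'' is mistaken: nothing about the sigmoid is needed. The inequality is a one-line consequence of a conditional independence you have essentially already used. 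In the representation where the locations $\mathcal{W}$ and the signed magnitudes $\theta$ are independent and $\proxytheta$ is a function of $\theta$ and sampling randomness alone, one has $\theta\perp\mathcal{W}\mid\proxytheta$; since $\bar{\mathcal{W}}$ is a function of $(\proxytheta,\mathcal{W})$ plus independent noise, this extends to $\theta\perp\mathcal{W}\mid\proxytheta, X,\bar{\mathcal{W}}$. Then
\[
\I(Y;\mathcal{W}\mid \theta,\proxytheta, X,\bar{\mathcal{W}})\ =\ \I(Y,\theta;\mathcal{W}\mid \proxytheta, X,\bar{\mathcal{W}})\ \geq\ \I(Y;\mathcal{W}\mid \proxytheta, X,\bar{\mathcal{W}}),
\]
and your reduced inequality is done. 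Your convexity and plug-in plans are unnecessary detours.

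The paper's proof relies on exactly the same independence structure but organizes the chain rule differently: it pulls $(X,\bar{\mathcal{W}})$ and $(X,\mathcal{W})$ out of the conditioning, uses $\I(X,\bar{\mathcal{W}};\theta\mid\proxytheta)=\I(X,\mathcal{W};\theta\mid\proxytheta)=0$, and applies the data-processing inequality along the Markov chain $\theta \to (X,\mathcal{W}) \to (X,\bar{\mathcal{W}})$ conditionally on $(\proxytheta,Y)$ to obtain $\I(X,\bar{\mathcal{W}};\theta\mid\proxytheta,Y)\leq \I(X,\mathcal{W};\theta\mid\proxytheta,Y)$. Both routes are short once the right independence is identified; the substantive omission in your proposal is failing to spot it.
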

\begin{proof}
    \begin{align*}
        \I(Y;\theta|\proxytheta, \bar{\mathcal{W}}, X)
        & = \I(Y, X, \bar{\mathcal{W}};\theta|\proxytheta) - \I(X, \bar{\mathcal{W}};\theta|\proxytheta)\\
        & = \I(Y, X, \bar{\mathcal{W}};\theta|\proxytheta)\\
        & = \I(Y;\theta|\proxytheta) + \I(X, \bar{\mathcal{W}};\theta|\proxytheta, Y)\\
        & \overset{(a)}{\leq} \I(Y;\theta|\proxytheta) + \I(X, \mathcal{W};\theta|\proxytheta, Y)\\
        & = \I(Y, X, \mathcal{W};\theta|\proxytheta)\\
        & = \I(Y;\theta|\proxytheta,\mathcal{W}, X) + \I(\mathcal{W},X;\theta|\proxytheta)\\
        & = \I(Y;\theta|\proxytheta,\mathcal{W}, X),
    \end{align*}
    where $(a)$ follows from the fact that $\theta\perp\bar{\mathcal{W}}|(X,Y,\proxytheta, \mathcal{W})$ and the data processing inequality.
\end{proof}
With this result in place, we provide a proof for Theorem \ref{th:mis_error_ub}.
\misspecificationUb*
\begin{proof}
    \begin{align*}
        \I(Y_{t+1};F|\tilde{F}_{n,t}, X_t)
        & \overset{(a)}{=} \I(Y_{t+1};\theta, \mathcal{W}|\tilde{\theta}, \tilde{\mathcal{W}}, X_t)\\
        & \overset{(b)}{=} \I(Y_{t+1};\theta|\tilde{\theta}, \tilde{\mathcal{W}}, X_t) + \I(Y_{t+1};\mathcal{W}|\theta, \tilde{\theta}, \tilde{\mathcal{W}}, X_t)\\
        & \overset{(c)}{\leq} \I(Y_{t+1};\theta|\tilde{\theta}, \mathcal{W}, X_t) + \I(Y_{t+1};\mathcal{W}|\theta, \tilde{\theta}, \tilde{\mathcal{W}}, X_t)\\
        & \overset{(d)}{\leq} \frac{(K+1)}{n} + \frac{2(K+1)}{n} + 2d\delta^2\\
        & = \frac{3(K+1)}{n} + 2d\delta^2
    \end{align*}
    where in $(a)$, $w_1,\ldots, w_n \overset{iid}{\sim} \bar{\theta}$ and $\proxytheta = \frac{1}{n}\sum_{i=1}^{n}\sign(\theta_{w_i})$ and $\tilde{\mathcal{W}} = \{w\in\mathcal{W}: |\proxytheta_w| > 0\}$, $(b)$ follows from the chain rule of mutual information, $(c)$ follows from Lemma \ref{le:true_input}, and $(d)$ follows from Corollaries \ref{cor:miss_ub_1} and \ref{cor:miss_ub_2}.
\end{proof}

\section{Proofs of Estimation Error Bounds}
In this section, we will provide proofs for the estimation error bound shown in Theorem \ref{th:est_error_bound}. The roadmap will match that of the previous section. We will first prove results for the Dirichlet-Multinomial process with finite dimension $N$ and then appropriately apply dominated convergence as $N\rightarrow \infty$ to derive the results for our Dirichlet process and learning model.
\subsection{General Lemmas}
The following lemmas are general combinatorics results that we will eventually apply to bound the estimation error of the finite $N$ Dirichlet-Multinomial process.
\begin{lemma}\label{le:multinomial_terms}
    For all $m, j, n, K, N\in\mathbb{Z}_{++}$ s.t. $j < n$, if 
    $$C_m(j) = \underbrace{\sum_{i=j}^{n-1}\frac{\frac{K}{N}}{K+i}\cdot\sum_{i=j+1}^{n-1}\frac{\frac{K}{N}}{K+i}\cdot \ldots\cdot\sum_{i > j+m-1}^{n-1}\frac{\frac{K}{N}}{K+i}}_{m},$$
    then
    $$\frac{1}{m!}\frac{K^m}{N^m}\ln^m\left(\frac{K+n}{K+m-1+j}\right) \leq C_m(j) \leq \frac{1}{m!}\frac{K^m}{N^m}\ln^m\left(\frac{K+n}{K-1+j}\right).$$
\end{lemma}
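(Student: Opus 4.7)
The plan is to first interpret $C_m(j)$ as the nested sum with strictly increasing indices,
$$C_m(j) \;=\; \sum_{j\leq i_1 < i_2 < \cdots < i_m \leq n-1}\; \prod_{k=1}^{m} \frac{K/N}{K+i_k},$$
equivalent to the recursion $C_m(j) = \sum_{i=j}^{n-1}\frac{K/N}{K+i}\,C_{m-1}(i+1)$ with $C_0 \equiv 1$; this reading is consistent with the $1/m!$ factor appearing on both sides of the claim (a literal ``product of independent sums'' reading does not produce that factor). Writing $\alpha := K/N$, I will prove both inequalities simultaneously by induction on $m$.

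\emph{Base case $m=1$.} The sum $C_1(j) = \alpha \sum_{i=j}^{n-1} \tfrac{1}{K+i}$ is sandwiched by Riemann comparisons for the decreasing map $x\mapsto \tfrac{1}{K+x}$: the lower Riemann sum gives $\alpha\int_j^n \tfrac{dx}{K+x} = \alpha\ln\tfrac{K+n}{K+j}$, and the upper gives $\alpha\int_{j-1}^{n-1}\tfrac{dx}{K+x} = \alpha\ln\tfrac{K+n-1}{K+j-1} \leq \alpha\ln\tfrac{K+n}{K+j-1}$, matching both claimed bounds exactly.

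\emph{Inductive step.} Peel off the outermost summation and apply the induction hypothesis. For the lower bound this produces
$$C_m(j) \;\geq\; \frac{\alpha^m}{(m-1)!}\sum_{i=j}^{n-m}\frac{1}{K+i}\,\ln^{m-1}\!\left(\frac{K+n}{K+i+m-1}\right),$$
and the task reduces to showing this sum is at least $\tfrac{1}{m}\ln^m\!\left(\tfrac{K+n}{K+j+m-1}\right)$. The recipe is: (i) use the free relaxation $\tfrac{1}{K+i} \geq \tfrac{1}{K+i+m-1}$ to align the harmonic denominator with the argument of the logarithm; (ii) pass to the integral $\int_j^{n-m+1}\tfrac{1}{K+x+m-1}\ln^{m-1}\!\left(\tfrac{K+n}{K+x+m-1}\right)dx$ via a lower Riemann bound (valid because the integrand is positive and decreasing on this interval); (iii) substitute $v := \ln\!\left(\tfrac{K+n}{K+x+m-1}\right)$ so that $-dv = \tfrac{dx}{K+x+m-1}$, whence the integral collapses to $\int_0^{\ln((K+n)/(K+j+m-1))} v^{m-1}\,dv = \tfrac{1}{m}\ln^m\!\left(\tfrac{K+n}{K+j+m-1}\right)$. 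The $\tfrac{1}{m}$ combines with the inherited $\tfrac{1}{(m-1)!}$ to give the required $\tfrac{1}{m!}$. The upper-bound case is symmetric: the hypothesis yields $C_{m-1}(i+1) \leq \tfrac{\alpha^{m-1}}{(m-1)!}\ln^{m-1}\!\left(\tfrac{K+n}{K+i}\right)$; pass to $\int_{j-1}^{n-1}$ via the upper Riemann bound; substitute $v := \ln\!\left(\tfrac{K+n}{K+x}\right)$; and discard the nonnegative boundary term $\tfrac{1}{m}\ln^m\!\left(\tfrac{K+n}{K+n-1}\right)$.

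The main obstacle is just the denominator misalignment in the lower-bound induction: the logarithm inherited from the hypothesis carries $K+i+m-1$, whereas the harmonic factor it multiplies carries $K+i$, so the clean antiderivative $\tfrac{v^m}{m}$ only materializes after (freely) weakening $\tfrac{1}{K+i}$ down to $\tfrac{1}{K+i+m-1}$. I would also dispatch the degenerate regime $j+m-1 > n$ separately: there $C_m(j) = 0$ while $\ln\!\left(\tfrac{K+n}{K+j+m-1}\right) \leq 0$, so the lower bound is trivially a nonpositive number bounding a nonnegative one (with a parity check for odd $m$ that is immediate from $C_m(j)\geq 0$), and the upper bound is equally unproblematic.
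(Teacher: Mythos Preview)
Your proposal is correct and follows essentially the same approach as the paper: induction on $m$ via the recursion $C_m(j)=\sum_i\tfrac{K/N}{K+i}\,C_{m-1}(i+1)$, Riemann comparison to pass to an integral, and the substitution $v=\ln\bigl(\tfrac{K+n}{K+\cdot}\bigr)$ to evaluate it as $\tfrac{v^m}{m}$. If anything, your lower-bound argument is slightly more careful than the paper's: you explicitly weaken $\tfrac{1}{K+i}$ to $\tfrac{1}{K+i+m-1}$ before integrating so that the antiderivative is exact, whereas the paper writes down that antiderivative directly without spelling out this relaxation.
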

\begin{proof}
    We first prove the upper bound via induction. Base Case: $n=1$
    \begin{align*}
        C_1(j) 
        & = \sum_{i=j}^{n-1}\frac{\frac{K}{N}}{K+i}\\
        & \leq \frac{K}{N}\int_{j-1}^{n}\frac{1}{K+x}dx\\
        & = \frac{K}{N}\ln\left(\frac{K+n}{K-1+j}\right)
    \end{align*}
    Assume inductive hypothesis is true for $m=k$.
    \begin{align*}
        C_{k+1}(j) 
        & = \sum_{i=j}^{n-1}\frac{\frac{K}{N}}{K+i} \cdot C_k(i+1)\\
        & \leq \frac{K}{N}\int_{j-1}^{n}\frac{1}{K+x}\cdot C_k(x+1)\\
        & \leq \frac{K^{k+1}}{N^{k+1}}\frac{1}{k!}\int_{j-1}^{n}\frac{1}{K+x}\cdot \ln^k\left(\frac{K+n}{K+x}\right)dx\\
        & \leq -\frac{K^{k+1}}{N^{k+1}}\frac{1}{(k+1)!}\cdot \ln^{k+1}\left(\frac{K+n}{K+x}\right)\bigg|^{n}_{j-1}\\
        & = \frac{K^{k+1}}{N^{k+1}}\frac{1}{(k+1)!}\ln^{k+1}\left(\frac{K+n}{K-1+j}\right).
    \end{align*}
    We now prove the lower bound also via induction. Base Case: $m=1$
    \begin{align*}
        C_1(j) 
        & = \sum_{i=j}^{n-1}\frac{\frac{K}{N}}{K+i}\\
        & \geq \frac{K}{N}\int_{j}^{n}\frac{1}{K+x}dx\\
        & = \frac{K}{N}\ln\left(\frac{K+n}{K+j}\right)
    \end{align*}
     Assume inductive hypothesis is true for $m=k$.
    \begin{align*}
        C_{k+1}(j) 
        & = \sum_{i=j}^{n-1}\frac{\frac{K}{N}}{K+i} \cdot C_k(i+1)\\
        & \geq \frac{K}{N}\int_{j}^{n}\frac{1}{K+x}\cdot C_k(x+1)\\
        & \geq \frac{K^{k+1}}{N^{k+1}}\frac{1}{k!}\int_{j}^{n}\frac{1}{K+x}\cdot \ln^k\left(\frac{K+n}{K+k+x}\right)dx\\
        & \geq -\frac{K^{k+1}}{N^{k+1}}\frac{1}{(k+1)!}\cdot \ln^{k+1}\left(\frac{K+n}{K+k+x}\right)\bigg|^{n}_{j}\\
        & =\frac{K^{k+1}}{N^{k+1}}\frac{1}{(k+1)!}\ln^{k+1}\left(\frac{K+k+n}{K+k+j}\right)\\
        & \geq \frac{K^{k+1}}{N^{k+1}}\frac{1}{(k+1)!}\ln^{k+1}\left(\frac{K+n}{K+k+j}\right).
    \end{align*}
    The result follows.
\end{proof}

\begin{lemma}\label{le:alt_terms}
    For all $i, n, K, N\in\mathbb{Z}_{++}$, if $2 \leq K \leq \sqrt{N}$ and $n \leq N$,  then
    $$\frac{1}{(2i)!}\frac{K^{2i}}{N^{2i}}\ln^{2i}\left(\frac{K+n}{K-1}\right)-\frac{1}{(2i+1)!}\frac{K^{2i+1}}{N^{2i+1}}\ln^{2i+1}\left(\frac{K+n}{K+2i}\right) \geq 0.$$
\end{lemma}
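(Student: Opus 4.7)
The plan is to verify this inequality by splitting into two cases according to the sign of $\ln\bigl(\tfrac{K+n}{K+2i}\bigr)$, which is the only term that can possibly be negative.

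First observe the first term is always non-negative: since $K \geq 2$, we have $K-1 \geq 1$, and since $n \geq 1$, $\frac{K+n}{K-1} > 1$, so $\ln\bigl(\tfrac{K+n}{K-1}\bigr) > 0$, and its $2i$-th power is non-negative (with the convention that $\ln^0 = 1$ in the $i=0$ case).

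\textbf{Case 1:} $n \leq 2i$. Then $\frac{K+n}{K+2i} \leq 1$, so $\ln\bigl(\tfrac{K+n}{K+2i}\bigr) \leq 0$. Since $2i+1$ is odd, the subtracted term $\frac{1}{(2i+1)!}\frac{K^{2i+1}}{N^{2i+1}}\ln^{2i+1}\bigl(\tfrac{K+n}{K+2i}\bigr)$ is non-positive, so subtracting it from the non-negative first term yields a non-negative value.

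\textbf{Case 2:} $n > 2i$. Both terms are strictly positive. I would first use the inclusion $K-1 \leq K+2i$ (always valid for $i \geq 0$) to conclude $\ln\bigl(\tfrac{K+n}{K-1}\bigr) \geq \ln\bigl(\tfrac{K+n}{K+2i}\bigr) > 0$. Raising to the $2i$-th power and multiplying by $\ln\bigl(\tfrac{K+n}{K+2i}\bigr)$ gives
\[
\ln^{2i+1}\!\left(\tfrac{K+n}{K+2i}\right) \;\leq\; \ln^{2i}\!\left(\tfrac{K+n}{K-1}\right)\,\ln\!\left(\tfrac{K+n}{K+2i}\right).
\]
Substituting this into the subtracted term and dividing the desired inequality by the common positive factor $\frac{K^{2i}}{N^{2i}(2i)!}\ln^{2i}\bigl(\tfrac{K+n}{K-1}\bigr)$ reduces the claim to the single inequality
\[
\frac{K}{(2i+1)N}\,\ln\!\left(\tfrac{K+n}{K+2i}\right) \;\leq\; 1.
\]

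This last step is the computational core. I would finish it by applying $\ln(1+x)\leq x$ with $x = (n-2i)/(K+2i)$ to obtain $\ln\bigl(\tfrac{K+n}{K+2i}\bigr) \leq (n-2i)/(K+2i) \leq n/K$, and then use the hypothesis $n\leq N$ to conclude $\frac{K}{(2i+1)N}\cdot \frac{n}{K} = \frac{n}{(2i+1)N} \leq \frac{1}{2i+1} \leq 1$.

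The main obstacle is really just bookkeeping: the inequality looks complex but the key idea is the monotonicity of the logarithm in the denominator ($K-1$ versus $K+2i$) that lets one factor off one power of the log from the second term and absorb it against the first, and then the residual bound is almost trivial from $\ln(1+x)\leq x$ together with $n\leq N$. The condition $K\leq \sqrt{N}$ is not needed for the argument; only $K\geq 2$ (to ensure the first log is defined and positive) and $n\leq N$ are used.
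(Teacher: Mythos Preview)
Your proof is correct and follows essentially the same approach as the paper: both arguments factor out the common positive quantity $\frac{K^{2i}}{N^{2i}(2i)!}\ln^{2i}\bigl(\tfrac{K+n}{K-1}\bigr)$ and reduce the claim to a scalar inequality of the form $\frac{K}{(2i+1)N}\cdot(\text{a logarithm})\leq 1$, then use the comparison $K-1\leq K+2i$ to pass between the two logarithms. The only cosmetic differences are that you make the case split on the sign of $\ln\bigl(\tfrac{K+n}{K+2i}\bigr)$ explicit and bound the residual logarithm via $\ln(1+x)\leq x$ to get $\leq n/K$, whereas the paper bounds the larger logarithm $\ln\bigl(\tfrac{K+n}{K-1}\bigr)$ by $3N/K$ via the equivalent exponential form; your route is slightly cleaner and, as you note, does not require the hypothesis $K\leq\sqrt{N}$.
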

\begin{proof}
    \begin{align*}
        0
        & \overset{(a)}{\leq} 1 - \frac{1}{2i+1}\frac{K}{N}\ln\left(\frac{K+n}{K-1}\right)\\
        & = \ln^{2i}\left(\frac{K+n}{K-1}\right)-\frac{1}{2i+1}\frac{K}{N}\ln^{2i+1}\left(\frac{K+n}{K-1}\right)\\
        & \leq \ln^{2i}\left(\frac{K+n}{K-1}\right)-\frac{1}{2i+1}\frac{K}{N}\ln^{2i+1}\left(\frac{K+n}{K+2i}\right)\\
        & = \frac{1}{(2i)!}\frac{K^{2i}}{N^{2i}}\ln^{2i}\left(\frac{K+n}{K-1}\right)-\frac{1}{(2i+1)!}\frac{K^{2i+1}}{N^{2i+1}}\ln^{2i+1}\left(\frac{K+n}{K+2i}\right)\\
    \end{align*}
    where $(a)$ follows for all $n \leq \left(K-1\right)e^{\frac{3N}{K}}-K$ which is implied by $n \leq N$ for $K \geq 2$.
\end{proof}

\begin{lemma}\label{le:multi_lb}
    For all $n, K, N\in\mathbb{Z}_{++}$, if $2 \leq K \leq \sqrt{N}$ and $n \leq N$, then 
    $$\prod_{i=0}^{n-1}\left(1 - \frac{\frac{K}{N}}{K+i}\right) \geq 1 - \frac{K}{N}\ln\left(1 + \frac{n}{K}\right).$$
\end{lemma}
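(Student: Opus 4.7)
The plan is to expand the finite product as an alternating sum of the elementary symmetric polynomials $C_m(0)$ in $a_0,\dots,a_{n-1}$ with $a_i = (K/N)/(K+i)$, then control each term using Lemmas \ref{le:multinomial_terms} and \ref{le:alt_terms}. The key identity is
$$\prod_{i=0}^{n-1}(1-a_i) \;=\; \sum_{m=0}^{n}(-1)^m C_m(0),$$
where $C_m(0)$ is precisely the quantity analyzed in Lemma \ref{le:multinomial_terms}, as can be recognized from the recurrence $C_{k+1}(j)=\sum_{i=j}^{n-1}a_i\, C_k(i+1)$ used in that lemma's proof.

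The next step is to regroup the alternating sum so as to isolate the linear contribution:
$$\prod_{i=0}^{n-1}(1-a_i) \;=\; 1 - C_1(0) \;+\; \sum_{i\geq 1}\bigl[C_{2i}(0) - C_{2i+1}(0)\bigr].$$
This reduces the problem to bounding $1 - C_1(0)$ together with the paired higher-order residuals against the target $1 - (K/N)\ln(1+n/K)$. The subtlety is that $C_1(0) = (K/N)\sum_{i=0}^{n-1}1/(K+i)$ already satisfies $C_1(0) \geq (K/N)\ln(1+n/K)$ by the standard integral comparison (since $1/(K+x)$ is decreasing), so the Weierstrass-type bound $\prod \geq 1 - C_1(0)$ is insufficient on its own. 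The higher-order paired terms must absorb the slack, and Lemma \ref{le:alt_terms} is precisely engineered to supply this absorption: it certifies a non-negative remainder between the upper bound on $C_{2i}(0)$ and the lower bound on $C_{2i+1}(0)$ from Lemma \ref{le:multinomial_terms}, valid under $K\geq 2$, $K\leq \sqrt{N}$, $n\leq N$.

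To close the argument, I would combine the upper bound $C_1(0) \leq (K/N)\ln\bigl((K+n)/(K-1)\bigr)$ from Lemma \ref{le:multinomial_terms} (which overshoots $(K/N)\ln(1+n/K)$ by $(K/N)\ln(K/(K-1))$) with the $i=1$ paired residual, whose lower bound from Lemma \ref{le:multinomial_terms} supplies positive slack of order $(K/N)^2\ln^2\!\bigl((K+n)/(K+1)\bigr)$. Iterated application of Lemma \ref{le:alt_terms} shows that the remaining pairs ($i\geq 2$) contribute non-negatively and can safely be discarded from a lower bound. The technically delicate step I expect to be the main obstacle is verifying that, under $K\geq 2$ and $K\leq \sqrt{N}$, the quadratic $(K/N)^2$ gain from the $C_2(0)$ term fully dominates the linear $(K/N)\ln(K/(K-1))$ overshoot from $C_1(0)$; the scaling constraint $K \leq \sqrt{N}$ enters exactly here, ensuring the squared logarithm is large enough to compensate.
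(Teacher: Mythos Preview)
Your approach—expanding the product as $\sum_m(-1)^mC_m(0)$, bounding each $C_m(0)$ via Lemma~\ref{le:multinomial_terms}, and controlling the paired residuals via Lemma~\ref{le:alt_terms}—is exactly the scaffold the paper uses. You are, however, more careful than the paper on one point: you correctly note that $C_1(0)\geq(K/N)\ln(1+n/K)$, so $1-C_1(0)$ undershoots the target and slack must come from higher-order pairs. The paper's step~(a) silently sidesteps this by replacing each $C_{2i+1}(0)$ with its \emph{lower} bound and each $C_{2i}(0)$ with its \emph{upper} bound from Lemma~\ref{le:multinomial_terms}; both substitutions \emph{increase} the alternating sum, so step~(a) as written produces an upper bound on the product, not a lower bound. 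The paper then discards the paired sum via Lemma~\ref{le:alt_terms} and arrives at the claim, but the chain of inequalities points the wrong way.

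Your proposed repair—use the $C_2(0)-C_3(0)$ pair to absorb the $(K/N)\ln\!\bigl(K/(K-1)\bigr)$ overshoot—is the right instinct, but the step you flag as ``the main obstacle'' is genuinely obstructed: for small $n$ the quadratic gain is too small or absent. Concretely, at $n=1$, $K=2$, $N=4$ all hypotheses hold, yet the left side equals $1-1/N=3/4$ while the right side is $1-\tfrac12\ln(3/2)\approx0.797$, so the inequality fails outright. Hence neither your argument nor the paper's can close the gap without an additional hypothesis such as a lower bound on $n$; this is harmless downstream, since Lemma~\ref{le:num_unique} only invokes the bound with $n$ equal to the network width, but it does mean the lemma as stated cannot be proved.
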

\begin{proof}
    \begin{align*}
        \prod_{i=0}^{n-1}\left(1 - \frac{\frac{K}{N}}{K+i}\right)
        & \overset{(a)}{\geq} 1 - \sum_{i=0}^{\lfloor\frac{n-1}{2}\rfloor}\frac{1}{(2i+1)!}\frac{K^{2i+1}}{N^{2i+1}}\ln^{2i+1}\left(\frac{K+r}{K+2i}\right)\\
        &\quad + \sum_{i=1}^{\lceil\frac{n-1}{2}\rceil}\frac{1}{(2i)!}\frac{K^{2i}}{N^{2i}}\ln^{2i}\left(\frac{K+r}{K-1}\right)\\
        & \overset{(b)}{\geq} 1 - \frac{K}{N}\ln\left(1 + \frac{n}{K}\right)\\
        &\quad + \sum_{i=1}^{\lfloor\frac{n-1}{2}\rfloor}\frac{1}{(2i)!}\frac{K^{2i}}{N^{2i}}\ln^{2i}\left(\frac{K+n}{K-1}\right)-\frac{1}{(2i+1)!}\frac{K^{2i+1}}{N^{2i+1}}\ln^{2i+1}\left(\frac{K+n}{K+2i}\right)\\
        & \overset{(c)}{\geq} 1 - \frac{K}{N}\ln\left(1 + \frac{n}{K}\right),
    \end{align*}
    where $(a)$ follows from Lemma \ref{le:multinomial_terms}, and $(b)$ follows from Lemma \ref{le:alt_terms}
\end{proof}
\subsection{Lemmas pertaining to Dirichlet Multinomial}
The following result upper bounds the expected number of unique classes drawn from a Dirichlet-multinomial distribution with $n$ draws and $\alpha = [K/N, \ldots, K/N]\in\Re^N$.

\begin{lemma}
    \label{le:num_unique}
    For all $n, K, N \in \mathbb{Z}_{++}$ s.t. $K \leq \sqrt{N}$ and $n \leq N$, if $\proxytheta' \sim {\rm DirMult}(n, \alpha)$ for $\alpha = \left[K/N, \ldots, K/N\right] \in \Re^{N}$, then
    $$\E\left[\sum_{i=1}^{N} \mathbbm{1}_{[\proxytheta' > 0]}\right] \leq K\ln\left(1 + \frac{n}{K}\right).$$
\end{lemma}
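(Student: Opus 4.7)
The plan is to exploit symmetry and linearity of expectation to reduce the claim to a single-coordinate statement, and then invoke the product lower bound from Lemma \ref{le:multi_lb}. Write $N_i = n\tilde\theta'_i$ for the count in category $i$, so that the quantity of interest is
$$\E\!\left[\sum_{i=1}^N \mathbbm{1}[N_i > 0]\right] = \sum_{i=1}^N \Pr(N_i > 0) = N\cdot\Pr(N_1 > 0),$$
by exchangeability of the $N_i$ under the symmetric Dirichlet-multinomial with $\alpha = [K/N,\ldots,K/N]$. So it suffices to upper bound $\Pr(N_1 > 0)$ by $(K/N)\ln(1 + n/K)$.

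The key step is to write the probability that category $1$ is never drawn as a Pólya-urn product. Sampling $\tilde\theta'\sim\mathrm{DirMult}(n,\alpha)$ is equivalent to running a Pólya urn initialized with mass $K/N$ on each of $N$ colors: at step $i$, we draw color $j$ with probability $(K/N + N_j^{(i)})/(K+i)$ and then increment $N_j^{(i)}$. By direct computation (or by the Beta-Binomial marginal), the event $\{N_1 = 0\}$ factors across the $n$ draws as
$$\Pr(N_1 = 0) = \prod_{i=0}^{n-1} \left(1 - \frac{K/N}{K+i}\right),$$
because conditional on category $1$ not having appeared in the first $i$ draws, the probability of not drawing it at step $i+1$ is exactly $1 - (K/N)/(K+i)$.

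Applying Lemma \ref{le:multi_lb} (whose hypotheses $K\ge 2$, $K\le\sqrt{N}$, $n\le N$ are satisfied once we let $N\to\infty$, as we will in the Dirichlet process limit), this product is at least $1 - (K/N)\ln(1+n/K)$. Therefore
$$\Pr(N_1 > 0) \leq \frac{K}{N}\ln\!\left(1 + \frac{n}{K}\right),$$
and multiplying by $N$ gives the claim. I do not anticipate a serious obstacle: the only real content is the Pólya-urn factorization of $\Pr(N_1 = 0)$, and the remaining estimate is exactly what Lemma \ref{le:multi_lb} delivers. If one wanted a proof avoiding the symmetry shortcut, one could instead sum $\Pr(N_i > 0)$ directly using the marginal Beta-Binomial distribution of $N_i$, but that only adds bookkeeping.
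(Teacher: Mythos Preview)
Your proof is correct and follows essentially the same route as the paper: reduce by symmetry to a single coordinate, express $\Pr(N_1=0)$ as the product $\prod_{i=0}^{n-1}\bigl(1-\tfrac{K/N}{K+i}\bigr)$, and apply Lemma~\ref{le:multi_lb}. The only cosmetic difference is that the paper derives this product via the aggregation property of the Dirichlet--multinomial and Gamma-function algebra rather than the P\'olya-urn interpretation; also, the hypotheses $K\le\sqrt{N}$ and $n\le N$ are already assumed in the lemma statement for finite $N$, so your parenthetical appeal to the $N\to\infty$ limit is unnecessary (and slightly misplaced).
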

\begin{proof}
    \begin{align*}
        \E\left[\sum_{i=1}^{N} \mathbbm{1}_{[\proxytheta'_i > 0]}\right]
        & = N\cdot\Pr(\proxytheta'_i > 0)\\
        & \overset{(a)}{=} N\cdot\left(1 - \Pr(\proxytheta'_1=0, \proxytheta'_2+\cdots+\proxytheta'_N = n)\right)\\
        & = N\cdot\left(1 - \frac{\Gamma(K)\Gamma(n+1)}{\Gamma(n+K)}\cdot\frac{\Gamma\left(\frac{K}{N}\right)}{\Gamma\left(\frac{K}{N}\right)\Gamma\left(1\right)}\cdot\frac{\Gamma\left(n+\frac{K}{N}(N-1)\right)}{\Gamma\left(\frac{K}{N}(N-1)\right)\Gamma\left(n+1\right)}\right)\\
        & = N\cdot\left(1 -\frac{\Gamma(K)}{\Gamma(n+K)}\cdot\frac{\Gamma\left(n+K-\frac{K}{N}\right)}{\Gamma\left(K-\frac{K}{N}\right)}\right)\\
        & = N\cdot\left(1 -\frac{\prod_{i=0}^{n-1}\left(K-\frac{K}{N}+i\right)}{\prod_{i=0}^{n-1}K+i}\right)\\
        & = N\cdot\left(1 - \prod_{i=0}^{n-1}\left(1 - \frac{\frac{K}{N}}{K+i}\right)\right)\\
        & \overset{(b)}{\leq} N\cdot\left(1 - \left(1 - \frac{K}{N}\ln\left(1 + \frac{n}{K}\right)\right)\right)\\
        & = K\ln\left(1 + \frac{n}{K}\right).
    \end{align*}
    where $(a)$ follows from the aggregation property of the Dirichlet-multinomial distribution and $(b)$ follows from Lemma \ref{le:multi_lb}.
\end{proof}
Note that this upper bound is \emph{independent} of $N$. In the next section, we will apply the dominated convegence theorem to bound the number of unique basis functions drawn by our learning model.

\subsection{Extensions to Dirichlet Processes}
Recall that our learning model $\tilde{F}_{n,t}$ is identified by approximate output layer weights 
$$\proxytheta_w = \frac{\sign(\theta_w)\cdot\sum_{i=1}^{n}\mathbbm{1}_{w_i=w}}{n},$$ where $w_i \overset{iid}{\sim} \bar{\theta}$ and approximate basis function weights $\bar{\mathcal{W}} = \{\tilde{w}: w\in \tilde{\mathcal{W}}\}$ where $\tilde{w} = w+Z_w$, $\tilde{\mathcal{W}} = \{w \in \mathcal{W}: |\proxytheta| > 0\}$ and $Z_w \overset{iid}{\sim} \normal(0, \delta^2I_d)$.

In this section, we will translate the results from the previous section to the case in which the data generating process is the Dirichlet process that we study in the main text.

\begin{lemma}\label{le:num_unique_inf}
    For all $n, K \in \mathbb{Z}_{++}$,
    $$\E\left[\sum_{w\in\mathcal{W}}\mathbbm{1}_{[|\proxytheta_w| > 0]}\right] \leq K\ln\left(1+\frac{n}{K}\right).$$
\end{lemma}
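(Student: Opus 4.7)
The plan is to lift the finite-dimensional bound of Lemma \ref{le:num_unique} to the Dirichlet process via the same dominated-convergence / weak-limit strategy that was used in Corollaries \ref{cor:miss_ub_1} and \ref{cor:miss_ub_2}. For each $N \in \Z_{++}$, form a finite approximation of the data generating process by drawing $\bar{\theta}'_N \sim \dir([K/N,\ldots,K/N])$ together with iid atom locations $A_{N,1},\ldots,A_{N,N} \sim \mathrm{Uniform}(\sphere)$, and consider the random discrete probability measure $\bar{G}_N = \sum_{i=1}^N \bar{\theta}'_{N,i}\delta_{A_{N,i}}$. It is classical that $\bar{G}_N$ converges weakly, as a random measure, to a draw $\bar{G}$ from the Dirichlet process with concentration $K$ and base measure $\mathrm{Uniform}(\sphere)$.

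Next, I would couple the learning-model sampling across $N$. For each $N$, draw $w_{N,1},\ldots,w_{N,n}$ iid from $\bar{G}_N$; this is equivalent to drawing class indices iid from the categorical distribution with probability vector $\bar{\theta}'_N$. The number of distinct indices selected then equals $\sum_{i=1}^N \mathbbm{1}_{[\proxytheta'_{N,i} > 0]}$, which by Lemma \ref{le:num_unique} has expectation at most $K\ln(1 + n/K)$ for every $N$ satisfying $K \leq \sqrt{N}$ and $n \leq N$. By a standard Blackwell--MacQueen / P\'olya urn coupling, one may arrange $(w_{N,1},\ldots,w_{N,n}) \to (w_1,\ldots,w_n)$ in distribution as $N \to \infty$, where $w_1,\ldots,w_n$ are iid draws from $\bar{G}$.

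Because the base measure $\mathrm{Uniform}(\sphere)$ is atomless, almost surely no two of the $A_{N,i}$ coincide, so distinct indices correspond to distinct atoms, and the analogous no-collision property holds in the limit. Hence the count of distinct atoms among $w_{N,1},\ldots,w_{N,n}$ converges in distribution to the count of distinct atoms among $w_1,\ldots,w_n$, which by construction equals $\sum_{w\in\mathcal{W}}\mathbbm{1}_{[|\proxytheta_w|>0]}$. Since all of these counts are uniformly bounded by the constant $n$, dominated convergence yields
\[
\E\!\left[\sum_{w\in\mathcal{W}}\mathbbm{1}_{[|\proxytheta_w|>0]}\right] \,=\, \lim_{N\to\infty}\E\!\left[\sum_{i=1}^N \mathbbm{1}_{[\proxytheta'_{N,i}>0]}\right] \,\leq\, K\ln\!\left(1+\frac{n}{K}\right).
\]

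The main obstacle is justifying convergence for this \emph{discrete} functional, a count of distinct atoms, rather than a smooth functional of the random measure. The argument rests on two ingredients already at work in Corollaries \ref{cor:miss_ub_1} and \ref{cor:miss_ub_2}: continuity of the base measure, which rules out spurious atom coincidences in the limit, and the P\'olya urn representation, which supplies a coupling under which atom-counts converge almost surely. Once these are in place, passing the finite-$N$ bound to the limit is routine.
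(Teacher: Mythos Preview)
Your proposal is correct and follows essentially the same strategy as the paper: approximate the Dirichlet process by a finite $\dir([K/N,\ldots,K/N])$ model, invoke Lemma~\ref{le:num_unique} for each $N$, and pass to the limit using dominated convergence with the uniform bound $n$ on the number of distinct atoms. The paper's proof is terser---it simply asserts the limit and applies dominated convergence---whereas you spell out the P\'olya urn coupling and the role of the atomless base measure more carefully; but the underlying argument is the same.
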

\begin{proof}
    \begin{align*}
        \E\left[\sum_{w\in\mathcal{W}}\mathbbm{1}_{[|\proxytheta_w| > 0]}\right]
        & = \E\left[\E\left[\sum_{w\in\mathcal{W}}\mathbbm{1}_{[|\proxytheta_w|>0 ]}\bigg|\mathcal{W}\right]\right]\\
        & \overset{(a)}{=} \E\left[\E\left[\lim_{N\rightarrow\infty}\sum_{w\in\tilde{\mathcal{W}}}\mathbbm{1}_{[|\proxytheta_w'| > 0]}\bigg|\mathcal{W}\right]\right]\\
        & \overset{(b)}{=}
        \lim_{N\rightarrow\infty}\E\left[\E\left[\sum_{w\in\tilde{\mathcal{W}}}\mathbbm{1}_{[|\proxytheta_{w}'| > 0]}\bigg|\mathcal{W}\right]\right]\\
        & \overset{(c)}{\leq} \lim_{N\rightarrow\infty}K\ln\left(1+\frac{n}{K}\right)\\
        & = K\ln\left(1+\frac{n}{K}\right),
    \end{align*}
    where in $(a)$, $\mathcal{W}_N$ is a subset of the first $N$ elements of $\mathcal{W}$ and $\tilde{X}_N$ is ${\rm DirMult}(n, \alpha_N)$ where $\alpha_N = [K/N, \ldots, K/N] \in\Re^N$ and $\mathcal{W}_N$ is the set of classes, $(b)$ follows from the dominated convergence theorem since $|\sum_{w\in\tilde{\mathcal{W}}}\mathbbm{1}_{[\proxytheta_w' > 0]}|\leq n$, and $(c)$ follows from Lemma \ref{le:num_unique}.
\end{proof}

\begin{theorem}{\bf(entropy upper bound)}
    \label{th:rate_ub}
    For all $d, n, K \in \mathbb{Z}_{++}$ and $\delta > 0$,
    $$\H(\finfunc) \leq K\ln\left(1 + \frac{n}{K}\right)\cdot\left(\ln(2n) + d\ln\left(1+\frac{1}{d\delta^2}\right)\right).$$
\end{theorem}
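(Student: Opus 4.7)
The plan is to decompose $\finfunc$ into a description whose entropy we can control, and bound each piece. Since $\finfunc$ is permutation-invariant in the $n$ hidden units and duplicate draws of the same $w$ collapse into a single contribution with coefficient $\proxytheta_w$, $\finfunc$ is a deterministic function of the unordered collection $\{(\proxytheta_w, \bar{w}_\delta) : w \in \tilde{\mathcal{W}}\}$, where $\tilde{\mathcal{W}} = \{w \in \mathcal{W} : |\proxytheta_w| > 0\}$ is the set of active hidden weights, with cardinality $M = |\tilde{\mathcal{W}}|$. Hence $\H(\finfunc)$ is at most the entropy of this collection, and the goal is to show the latter factors as $\E[M] \cdot (\text{per-unit entropy})$.

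First I would bound the per-unit entropy contribution. For the coefficient, $\proxytheta_w = \sign(\theta_w) c_w / n$ with count $c_w \in \{1, \ldots, n\}$ takes at most $2n$ distinct values, giving at most $\ln(2n)$ per active unit. For the quantized weight, I would adopt the Gaussian-noise quantization model used in Corollary \ref{cor:miss_ub_2}, namely $\bar{w}_\delta = w + Z_w$ with $Z_w \sim \normal(0, \delta^2 I_d)$. Since $w \in \sphere$ implies $\E[\|w\|^2] = 1$ and hence an average per-coordinate power $1/d$, a Gaussian channel-capacity computation yields
\[
\I(w; \bar{w}_\delta) \;\leq\; \tfrac{d}{2} \ln\!\left(1 + \tfrac{\E[\|w\|^2]/d}{\delta^2}\right) \;=\; \tfrac{d}{2}\ln\!\left(1 + \tfrac{1}{d\delta^2}\right),
\]
which, at the cost of a factor-of-$2$ slack, is dominated by the $d\ln(1 + 1/(d\delta^2))$ appearing in the theorem.

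Next I would aggregate across units by invoking the Polya-urn construction of the Dirichlet process. Draw $w_1, \ldots, w_n$ sequentially: with probability $K/(K+i-1)$ the $i$th draw produces a fresh atom, contributing a new element of $\tilde{\mathcal{W}}$ together with its sign and quantized location, while with probability $(i-1)/(K+i-1)$ it is a repeat and only increments the count $c_{w_j}$ (hence only revises $\proxytheta_{w_j}$). Chaining entropies along this sequential construction, the joint entropy of $\{(\proxytheta_w, \bar{w}_\delta)\}_{w\in\tilde{\mathcal{W}}}$ is at most the expected number of fresh atoms times the maximum per-unit entropy $\ln(2n) + d\ln(1+1/(d\delta^2))$. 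Finally, Lemma \ref{le:num_unique_inf} supplies $\E[M] \leq K\ln(1 + n/K)$, completing the bound.

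The main obstacle is collapsing the variable-length description cleanly into the product form in the theorem without any residual additive overhead such as $\H(M)$ or a $\ln\binom{\cdot}{M}$ term for the combinatorial choice of which atoms are active. A naive ``conditioning on $M$'' decomposition produces such extra terms; avoiding them requires tying the encoding to the Polya-urn sampling order so that each new hidden unit contributes its per-unit entropy exactly once and the random support is encoded implicitly, which is where the bulk of the technical work lies.
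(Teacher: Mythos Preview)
Your overall skeleton---bound a per-unit description length and multiply by the expected number of distinct atoms $\E[M]\le K\ln(1+n/K)$ from Lemma~\ref{le:num_unique_inf}---is exactly the paper's approach. The divergence is in how you handle the quantized weight $\bar{w}_\delta$, and there the argument breaks.

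You adopt the additive-Gaussian model $\bar{w}_\delta=w+Z_w$ with $Z_w\sim\normal(0,\delta^2 I_d)$ and invoke channel capacity to get $\I(w;\bar{w}_\delta)\le\tfrac{d}{2}\ln(1+1/(d\delta^2))$. But the theorem asks for the Shannon entropy $\H(\finfunc)$, and under the Gaussian model $\bar{w}_\delta$ is a continuous random variable, so $\H(\bar{w}_\delta)=+\infty$ and hence $\H(\finfunc)=+\infty$. Mutual information $\I(w;\bar{w}_\delta)$ measures how much $\bar{w}_\delta$ reveals about $w$; it does not control the entropy of $\bar{w}_\delta$ itself, which is what enters $\H(\finfunc)$ since $\finfunc$ depends on the realized noisy $\bar{w}_\delta$, noise included. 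The channel-capacity step therefore does not yield a finite per-unit contribution, and no amount of Polya-urn bookkeeping downstream can repair this.

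The paper avoids the issue by treating $\bar{w}_\delta$ as a genuinely \emph{discrete} quantization: $\bar{w}_\delta$ lies in a fixed $\delta$-net of $\sphere$, whose cardinality is at most $(3/\delta)^d$. This gives a per-unit entropy of at most $d\ln(3/\delta)$ directly---no channel argument---and makes $\finfunc$ a discrete random variable with finite entropy. (The $d\ln(1+1/(d\delta^2))$ in the displayed statement appears to be a typo; the proof and the downstream Theorem~\ref{th:est_error_ub} both carry $d\ln(3/\delta)$.) Your concern about residual $\H(M)$-type overhead is legitimate but secondary; the paper's step~(a) is terse on this point, and your Polya-urn sequential encoding is a reasonable way to tighten it once the discrete quantization is in place.
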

\begin{proof}
    \begin{align*}
        \H(\finfunc)
        & \overset{(a)}{\leq} \E\left[\sum_{w\in\mathcal{W}} \mathbbm{1}_{|\proxytheta_w|>0}\cdot\left(\ln(2n) + \frac{d}{2}\ln\left(\frac{3}{\delta}\right)\right)\right]\\
        & \overset{(b)}{\leq} K\ln\left(1+\frac{n}{K}\right)\left(\ln(2n) + d\ln\left(\frac{3}{\delta}\right)\right)
    \end{align*}
    where $(a)$ follows from the fact that the output weight can take on at most $2n$ different values $(-\frac{n\sqrt{K+1}}{n}, -\frac{(n-1)\sqrt{K+1}}{n},\ldots,-\frac{\sqrt{K+1}}{n}, \frac{\sqrt{K+1}}{n},\ldots \frac{n\sqrt{K+1}}{n})$ and the fact that the $\delta$-covering number for $\sphere \leq (3/\delta)^d$ and $(b)$ follows from Lemma \ref{le:num_unique_inf}.
\end{proof}

\estErrorUb*
\begin{proof}
    The result follows from Lemma \ref{le:est_error_bound} and Theorem \ref{th:rate_ub}.
\end{proof}

\section{Characterization of the Efficient Frontier}
In this section, we provide a proof for Theorem \ref{th:eff_frontier}.
\label{apdx:eff_frontier}
\effFront*
\begin{proof}
    \begin{align*}
        n^*
        & = \argmin_{n\in\left[\frac{C}{d}\right]}\frac{3K}{n}+\frac{K\ln\left(1+\frac{n}{K}\right)\cdot\ln\left(2n\right)}{t} + \frac{dK\ln\left(1+\frac{n}{K}\right)\left(1 + \frac{1}{2}\ln\left(36t\right)\right)}{t};\ \text{ s.t. } n\cdot d\cdot t \leq C\\
        & = \argmin_{n\in\left[\frac{C}{d}\right]}\frac{3}{n}+\frac{\ln\left(1+\frac{n}{K}\right)\cdot\ln\left(2n\right)}{t} + \frac{d\ln\left(1+\frac{n}{K}\right)\left(1 + \frac{1}{2}\ln\left(36t\right)\right)}{t};\ \text{ s.t. } n\cdot d\cdot t \leq C\\
        & = \argmin_{n\in\left[\frac{C}{d}\right]}\frac{3}{n}+\frac{nd\ln\left(1+\frac{n}{K}\right)\cdot\ln\left(2n\right)}{C} + \frac{nd^2\ln\left(1+\frac{n}{K}\right)\left(1 + \frac{1}{2}\ln\left(\frac{36C}{nd}\right)\right)}{C}\\
        & \overset{(a)}{=} n \text{ s.t. } \frac{3}{n^2} = \frac{d\left(\ln\left(1+\frac{n}{K}\right) + \ln\left(1+\frac{n}{K}\right)ln(2n) + \frac{n}{K+n}\ln\left(2n\right)\right)}{C}\\
        &\qquad +\frac{d^2\ln\left(1+\frac{n}{K}\right)\ln\left(\frac{36C}{nd}\right)}{2C}+ \frac{n}{C(n+K)} + \frac{nd^2\ln\left(\frac{36C}{nd}\right)}{2C\left(n+K\right)}\\
        &\overset{(b)}{\leq} n \text{ s.t. } C = \frac{n^2d^2\ln\left(1+\frac{n}{K}\right)\ln\left(\frac{36C}{nd}\right)}{6}\\
        & = n \text{ s.t. } C - \frac{n^2d^2\ln(1+\frac{n}{K})}{6}\ln(C) = \frac{-n^2d^2\ln\left(1+\frac{n}{K}\right)\ln\left(\frac{nd}{36}\right)}{6}\\
        & \overset{(c)}{=} n \text{ s.t. } C = - \frac{n^2d^2\ln(1+\frac{n}{K})}{6}W_{-1}\left(-\frac{e^{\ln\left(\frac{nd}{36}\right)}}{ \frac{n^2d^2\ln(1+\frac{n}{K})}{6}}\right)\\
        & = n \text{ s.t. } C = - \frac{n^2d^2\ln(1+\frac{n}{K})}{6}W_{-1}\left(-\frac{nd}{6n^2d^2\ln(1+\frac{n}{K})}\right)\\
        & \overset{(d)}{\leq} n \text{ s.t. } C = - \frac{n^2d^2\ln(1+\frac{n}{K})}{6}W_{-1}\left(-\frac{1}{6nd\ln(1+\frac{n}{K})}\right)\\
        & \overset{(e)}{\leq} n \text{ s.t. } C =  \frac{n^2d^2\ln(1+\frac{n}{K})}{6}\ln\left(6nd\ln\left(1+\frac{n}{K}\right)\right)\\
        & \overset{(f)}{\leq} n \text{ s.t. } C = \frac{1}{6}n^2d^2\ln\left(1+\frac{n}{K}\right)^2\cdot \ln(\ln(d))\\
        & = n \text{ s.t. } \frac{6C}{d^2\ln(\ln(d))} = n^2\ln\left(1+\frac{n}{K}\right)^2\\
        & = n \text{ s.t. } \sqrt{\frac{6C}{d^2\ln(\ln(d))}} = n\ln\left(1+\frac{n}{K}\right)\\
        & \overset{(g)}{\leq} \min\left\{ \frac{\sqrt{\frac{6C}{d^2\ln(1+d)}}}{W_0\left(\sqrt{\frac{6C}{d^2K^2\ln(1+d)}}\right)}, \sqrt{\frac{6C}{d^2\ln(\ln(d))}} + \sqrt{\frac{6C}{d^2\ln(\ln(d))} + 4\sqrt{\frac{6CK^2}{d^2\ln(\ln(d))}}}\right\},
    \end{align*}
    where $(a)$ follows from $1$st order optimality conditions, $(b)$ follows from the fact that the rhs is an increasing function in $n$, so the root of a lower bound is an upper bound for the true root, $(c)$ follows from the fact that for all $a\in\Re$, the inverse function of $x - a\ln(x)$ is $-aW_{-1}\left(-\frac{e^{-\frac{x}{a}}}{a}\right)$ where $W_{-1}$ is the $-1$ branch of the Lambert W function, $(d)$ follows from the fact that the rhs is an increasing function in $n$ and so the root of a lower bound is an upper bound for the true root, $(e)$ follows from the fact that $W_{-1}(x) = \ln(-x)-\ln(-\ln(-x))+o(1)$ and monotonicity, $(f)$ follows from monotonicity and the fact that $\ln(6nd\ln(1+\frac{n}{K})) \geq \ln\left(1+\frac{x}{K}\right)\ln(\ln(d))$ when $\ln(6nd\ln(1+\frac{n}{K})) \geq 1$, and $(g)$ follows from the fact that for small $n/K$, $\ln(1+n/K)$ is well approximated by $\frac{n/K}{1+n/K}$ and the inverse of $n^2/(n+K)$ is $n + \sqrt{n^2 + 4Kn}$, and for large $n/K$, $\ln(1+n/K)$ is well approximated by $\ln(n/K)$ and the inverse of $nln(n/K)$ is $\frac{n}{W_0(x/K)}$. The result follows.
\end{proof}

\subsection{Optimal Quantization}
We begin with the following result which is an immediate consequence of Theorems \ref{th:mis_error_ub} and \ref{th:est_error_ub}.
\ceUB*
Corollary \ref{cor:ce_ub} still has a dependence on the free parameter $\delta^2$ i.e. the noise in our approximation of the basis functions. One way to eliminate the $\delta^2$ dependence is select the value of $\delta^2$ which minimizes the upper bound in Corollary \ref{cor:ce_ub}. The following result computes an upper bound on the expression that results after we perform this minimization.

\begin{restatable}{theorem}{optDelta}{\bf(optimal quantization)}\label{th:opt_noise}
    For all $n, d, K \in \mathbb{Z}_{++}$ and $\epsilon > 0$, if $n, K \geq 2$, then
    $$ \inf_{\delta^2 > 0} 2d\delta^2 + \frac{dK\ln\left(1+\frac{n}{K}\right)\ln\left(\frac{3}{\delta}\right)}{t} \leq \frac{dK\ln\left(1+\frac{n}{K}\right)\left(1+\ln\left(36t\right)\right)}{2t}.
    $$
\end{restatable}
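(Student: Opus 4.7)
The plan is to treat the expression as a one-dimensional calculus problem in the variable $u = \delta^2$, find the exact minimizer in closed form, and then slacken the resulting bound to obtain the cleaner form stated in the theorem. Setting $A = 2d$ and $B = dK\ln(1+n/K)/t$, the objective becomes
$$f(u) \;=\; Au \;-\; \tfrac{B}{2}\ln u \;+\; B\ln 3,$$
since $\ln(3/\delta) = \ln 3 - \tfrac12\ln(\delta^2)$. This is a strictly convex function of $u>0$ with $f(u)\to\infty$ at both endpoints, so a unique global minimum exists.

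Next I would take the derivative $f'(u) = A - B/(2u)$ and set it to zero, yielding $u^\star = B/(2A)$. Substituting back, the optimal value simplifies (after combining $B\ln 3 = \tfrac{B}{2}\ln 9$ with $-\tfrac{B}{2}\ln u^\star$) to
$$f(u^\star) \;=\; \tfrac{B}{2}\bigl(1 + \ln(18A/B)\bigr).$$
Plugging in $A=2d$ and $B=dK\ln(1+n/K)/t$, the ratio $18A/B$ collapses to $36t/(K\ln(1+n/K))$, giving
$$\inf_{\delta^2>0} f(\delta^2) \;=\; \frac{dK\ln(1+n/K)}{2t}\left(1 + \ln\!\frac{36t}{K\ln(1+n/K)}\right).$$

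The final step, and the only place the hypothesis $n, K\geq 2$ is used, is to observe that $\ln(36t/(K\ln(1+n/K))) \leq \ln(36t)$ whenever $K\ln(1+n/K)\geq 1$. The main (mild) obstacle is verifying this last inequality in the stated regime. Using the elementary estimate $\ln(1+x)\geq x/(1+x)$ for $x\geq 0$, I would get $K\ln(1+n/K)\geq nK/(n+K)$, which is at least $1$ exactly when $(n-1)(K-1)\geq 1$, and this holds for $n,K\geq 2$. Monotonicity of $\ln$ then delivers the claimed upper bound and completes the argument.
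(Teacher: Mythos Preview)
Your proposal is correct and follows essentially the same approach as the paper: both treat the objective as a one-variable convex function in $\delta^2$, solve the first-order condition to obtain $\delta^2 = \tfrac{K}{4t}\ln(1+n/K)$, substitute to get the bound with $\ln\bigl(36t/(K\ln(1+n/K))\bigr)$, and then drop the $K\ln(1+n/K)$ factor inside the logarithm. Your verification that $K\ln(1+n/K)\geq 1$ via $\ln(1+x)\geq x/(1+x)$ and $(n-1)(K-1)\geq 1$ is in fact more explicit than the paper, which simply asserts this holds for $n,K\geq 2$.
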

\begin{proof}
    \begin{align*}
        \argmin_{\delta^2 > 0} 2d\delta^2 + \frac{dK\ln\left(1+\frac{n}{K}\right)\ln\left(\frac{3}{\delta}\right)}{t}
        & = \argmin_{\delta^2 > 0} 2\delta^2 + \frac{K\ln\left(1+\frac{n}{K}\right)\ln\left(\frac{9}{\delta^2}\right)}{2t}\\
        & \overset{(a)}{=} \delta^2 \text{ s.t. } 0 = 2-\frac{K\ln\left(1+\frac{n}{K}\right)}{2t}\cdot\frac{1}{\delta^2}\\
        & = \delta^2 \text{ s.t. } 4 = \frac{K\ln\left(1+\frac{n}{K}\right)}{t}\cdot\frac{1}{\delta^2}\\
        & = \frac{K}{4t}\ln\left(1 + \frac{n}{K}\right)\\
    \end{align*}
    where $(a)$ follows from first order optimality conditions.
    
    Plugging in this value of $\delta^2$ gives:
    \begin{align*}
        \inf_{\delta^2 > 0} 2d\delta^2 + \frac{dK\ln\left(1+\frac{n}{K}\right)\ln\left(\frac{9}{\delta^2}\right)}{2t}
        & \leq \frac{dK\ln\left(1+\frac{n}{K}\right)\left(1+\ln\left(\frac{36t}{K\ln(1+\frac{n}{K})}\right)\right)}{2t}\\
        & \overset{(a)}{\leq} \frac{dK\ln\left(1+\frac{n}{K}\right)\left(1+\ln\left(36t\right)\right)}{2t},
    \end{align*}
    where $(a)$ follows from the fact that $\frac{36t}{K\ln\left(1+\frac{n}{K}\right)} \leq 36t$ for $n, K \geq 2$. The result follows.
\end{proof}

\section{Experimental Details}
\label{apdx:experiment}
\subsection{Data Generating Process}
Figure 1 was generated by setting $d = 16$ and $K=16$. We implemented the data generating process in the following way. We let
$$F(X) = \sqrt{K}\sum_{i=1}^{N} \theta_i \cdot \omega_i \cdot \relu(A_i^\top X).$$
Instead of sampling directly from a Dirichlet process with base distribution ${\rm Uniform}(\sphere)$ and scale parameter $K$, we approximate this via sampling linear layer weights $\theta$ from an $N$ dimensional dirichlet random variable with parameter $(K/N, \ldots, K/N)$ and then sampling $\omega_i \overset{iid}{\sim} {\rm Rademacher}$. The the basis functions are sampled $A_i \overset{iid}{\sim} {\rm Uniform}(\sphere)$. It is well known that as $N \rightarrow \infty$ this process converges to the Dirichet process that we study theoretically. For our experiments, we choose $N = 2^{12}$.

We let $X_t \sim \normal(0, I_d)$ and generate $Y_{t+1} = 1/(1 + \exp\{-F(X_t)\})$. Note that in our experiments, we do not actually sample $Y_{t+1}$ as a binary random variable with probability $1/(1 + \exp\{-F(X_t)\})$ but instead provide the probability itself. This is done primarily to reduce variance in the experiments as the two data generating processes result in optimizing the same objective but providing the probabilities reduces the variance of the empirical risk.

\subsection{Efficient Frontier Sweep}
We construct a geometric sequence of tractable compute budgets for which we empirically compute the efficient frontier. For these experiments we used the sequence $\{2^{18}, 2^{19}, \ldots, 2^{26}\}$. For each value of $C$ in this sequence, we construct a set of pairs $(n, t)$ for which $nt = C/d$. For each pair $(n, t)$, we generated a training dataset of size $t$ and a learning network of input dimension $d$ and width $n$. For each dataset, we performed $10$ training runs of Adam optimizer and record the final resulting test error. We perform these $10$ runs to smooth out anomalous cases as neural networks can be finicky. We detail the training procedure in the following subsection.

\subsection{Training}

Given $t$ training samples and a training network of width $n$, we endow each training run with a validation dataset of fixed size $10000$. Note that as the compute budget grows, this validation set is quickly dwarfed by the size of the training dataset so we ignore its impacts the flop count. We perform $500$ epochs of minibatch stochastic gradient descent with adam optimizer and batchsize $1024$. We implemented an early stopping procedure in which we return the model within the 500 epochs with the lowest validation error. This returned model is separately evaluated on a test set of size $100000$ and the out of sample cross-entropy error is recorded.
\newpage
\begin{figure}
    \centering
    \includegraphics[scale=0.4]{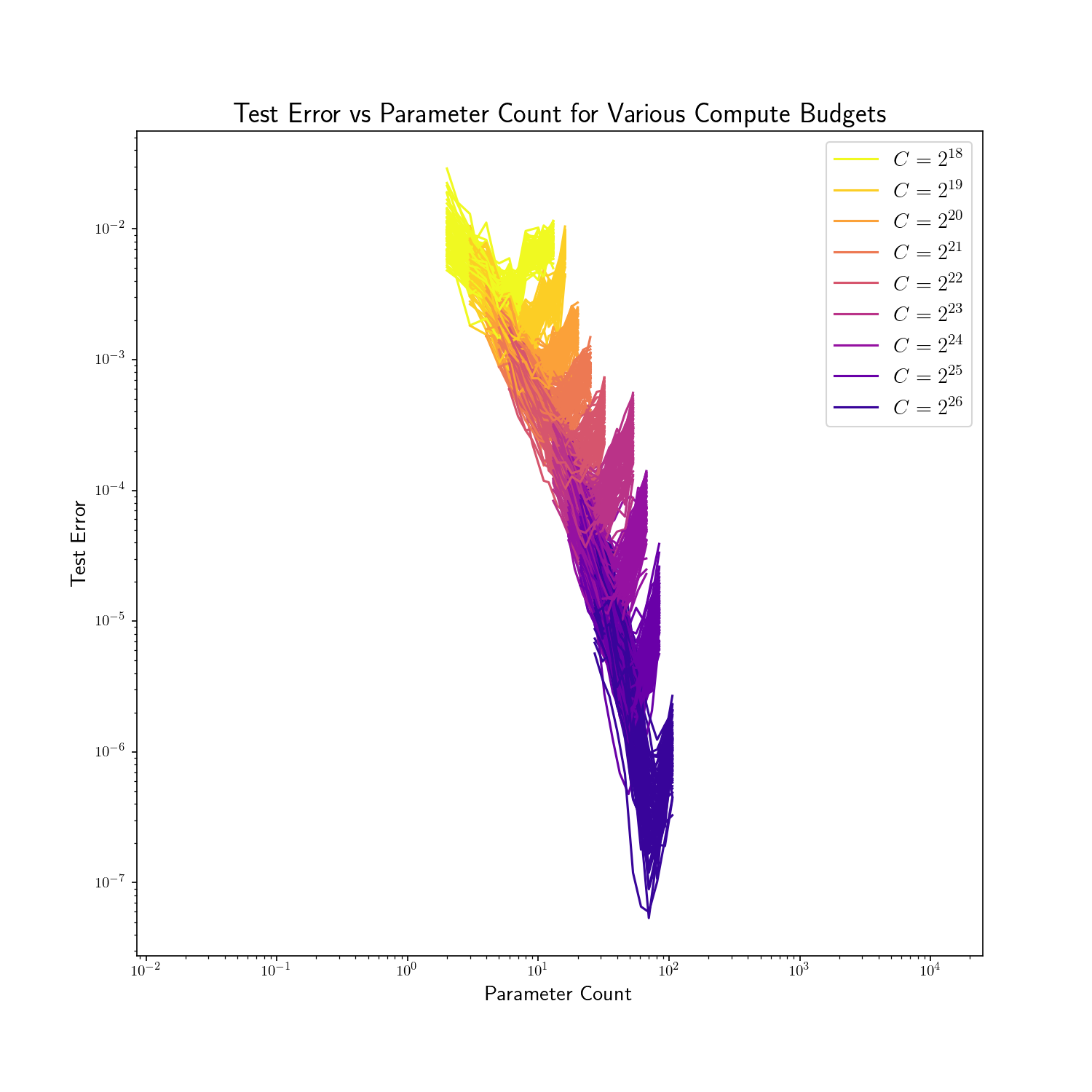}
    \caption{We plot each of the $78\times 9 \times 10$ aforementioned trials.}
    \label{fig:frontier_sweeps}
\end{figure}

\subsection{Bootstrapping and Extrapolation}
For $K=16, d=16$, we performed the above procedure for $78$ different realizations of $F$. For each realization, we perform an efficient frontier sweep over $9$ different compute values ranging from $2^{18}$ to $2^{26}$. For each compute value, we perform a sweep over pairs $(n, t)$ which satisfy $n t = C/d$ and for each, generate a dataset of size $t$ from $F$. With this training dataset, we initialize $10$ learning models with input dimension $d$ and width $n$ and perform the training procedure outlined above. The results of this procedure are displayed in Figure \ref{fig:frontier_sweeps}. We average the test errors of the model returned by each of these $10$ runs. The result is that for each $(n, t)$ pair, we have $78$ values for the average test error based on different realizations of the data generating process. We perform a bootstrapping procedure to generate the confidence intervals in Figure \ref{fig:efficient-frontier}.

In our bootstrapping, we subsample $10$ realizations with replacement from the $79$ and perform 4 linear fits in logarithmic scale. Recall that we evaluated $9$ different compute values so the first linear fit performs least squares regression where $X$ is the subsampled log optimal training set sizes for compute values $\{2^{18}, \ldots, 2^{21}\}$ and $Y$ is the corresponding log optimal parameter counts. The second linear fit is for compute values $\{2^{20}, \ldots, 2^{23}\}$, the third for compute values $\{2^{22},\ldots, 2^{25}\}$, and the fourth for $\{2^{23},\ldots, 2^{26}\}$. To generate Figure \ref{fig:efficient-frontier}, we performed this bootstrapping process $10000$ times, each generating $4$ linear fits to the log optimal dataset size and parameter count.

To extrapolate to compute values higher than $2^{26}$, we compute a mean of just the $4$th linear fit of each of the $10000$ subsamples. The confidence intervals provided in Figure \ref{fig:efficient-frontier} are $1$-standard deviation above and below this mean value computed from the $10000$ subsamples. Likewise, for extrapolation to compute values lower than $2^{18}$, we perform the same procedure but with the $1$st linear fit. 

For interpolation within the compute values we evaluated, we performed a moving average where values at the extremes $(2^{18} \text{ and }2^{26})$ were evaluated just via averages of linear fits $1$ and $4$ respectively, but values in the middle were averages of linear fits in which they were a part of i.e. the value of $2^{20}$ would be an average of linear fits $1$ and $2$ since both of these linear fits used data from compute value $2^{20}$.

For reference, generating the results to produce Figure \ref{fig:efficient-frontier} took over $1$ week when run in parallel across $2$ machines each with $32$ cpu cores and $8$gb gpus. In Figure \ref{fig:other_eff_frontier}, we provide results for $(d=16, K=64)$ and $(d=64, K=16)$ which perform the same procedure as above but only over $20$ realizations of $F$. They are marginally less convincing than the result of Figure \ref{fig:efficient-frontier} but this could be due to noise in the data from having fewer runs. It would be interesting to perform much more comprehensive and higher-scale experiments in the future, but for the purposes of this work, we found this experimental evidence compelling enough.

\begin{figure}
    \centering
    \includegraphics[scale=0.53]{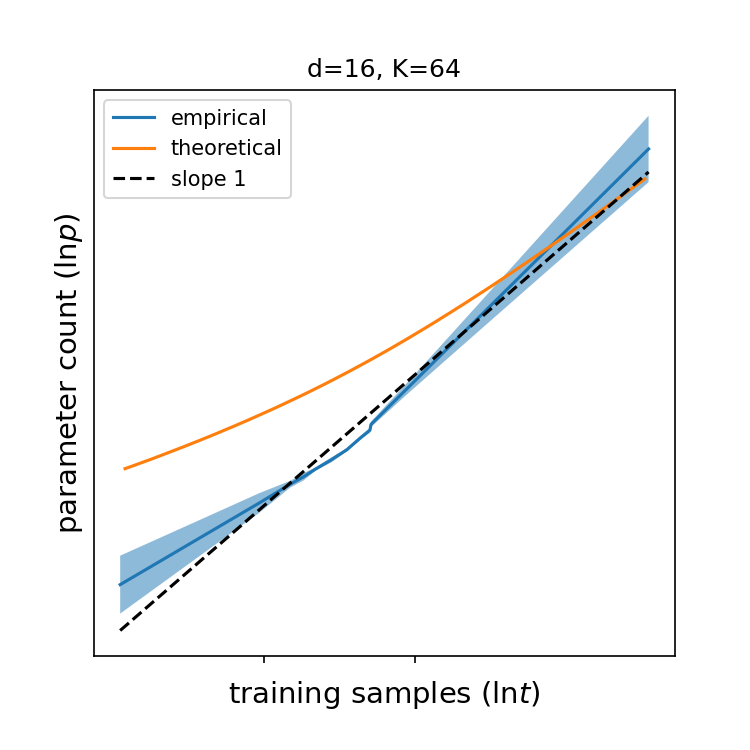}
    \includegraphics[scale=0.53]{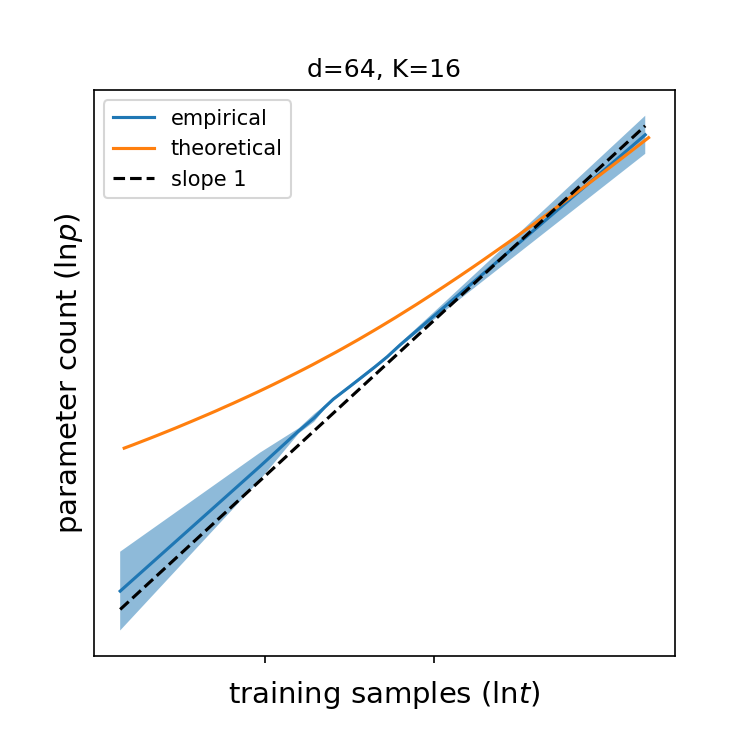}
    \caption{We produce the same plot as in Figure \ref{fig:efficient-frontier} but for $(d=16, K=64)$ and $(d=64, K=16)$. Note that we only perform averaging over $20$ realizations as opposed to $78$ and for our bootstrapping procedure we subsample $5$ realizations as opposed to $10$.}
    \label{fig:other_eff_frontier}
\end{figure}

\end{document}